\title{Adaptive Regret for Control of Time-Varying Dynamics}
\date{\today}
\author{  Paula Gradu$^{2, 3}$ \qquad Elad Hazan$^{1, 3}$ \qquad Edgar Minasyan$^{1, 3}$ \\
  $^1$ Department of Computer Science, Princeton University \\
  $^2$ Department of EECS, UC Berkeley \\
  $^3$ Google AI Princeton \\
  \texttt{pgradu@berkeley.edu, \{ehazan,minasyan\}@princeton.edu }  \\
 }
\newcommand{\mA}{\mathcal{A}}
\newcommand{\stabgap}{\mathcal{SG}}
\newcommand{\marc}{\mbox{MARC }}
\newcommand{\mara}{\mbox{MARA }}
\newcommand{\dac}{\Pi_{\text{DAC}}}
\newcommand{\drc}{\Pi_{\text{DRC}}}
\newcommand{\lin}{\Pi_{\text{lin}}}
\newcommand{\ldc}{\Pi_{\text{LDC}}}
\newcommand{\R}{\mathbb{R}}
\newcommand{\C}{\mathcal{C}}
\newcommand{\A}{\mathcal{A}}
\newcommand{\opt}{\mathrm{OPT}}
\newcommand{\K}{\ensuremath{\mathcal K}}
\newcommand{\reg}{\mathcal{R}}
\newcommand{\shift}{\mathcal{S}}
\newcommand{\adreg}{Ad\mathcal{R}}
\def\regret{\mbox{{Regret}}}
\def\aregret{\mbox{{AdRegret}}}
\newcommand{\ignore}[1]{}
\newcommand{\eh}[1]{\noindent{\textcolor{blue}{\{{\bf EH:} \em #1\}}}}
\theoremstyle{plain}
\newtheorem{theorem}{Theorem}[section]
\newtheorem{lemma}{Lemma}[section]
\newtheorem{corollary}{Corollary}[section]
\newtheorem{definition}{Definition}[section]
\newtheorem{remark}{Remark}[section]
\newtheorem{claim}{Claim}[section]
\newtheorem{fact}{Fact}[section]
\newtheorem{assumption}{Assumption}[section]
\newtheorem*{theoremaux}{\theoremauxref}
\gdef\theoremauxref{1}
\DeclareMathAlphabet{\mathbfsf}{\encodingdefault}{\sfdefault}{bx}{n}
\DeclareMathOperator*{\argmin}{arg\,min}
\DeclareMathOperator*{\argmax}{arg\,max}
\let\Pr\relax
\DeclareMathOperator{\Pr}{\mathbb{P}}
\renewcommand{\O}{O}
\newcommand{\E}{\mathbb{E}}
\newcommand{\reals}{\mathbb{R}}
\newcommand{\eps}{\varepsilon}
\renewcommand{\leq}{~\le~}
\renewcommand{\geq}{~\ge~}
\let\oldtfrac\tfrac
\renewcommand{\tfrac}[2]{\smash{\oldtfrac{#1}{#2}}}
\let\nablaold\nabla
\renewcommand{\nabla}{\nablaold\mkern-2.5mu}
\begin{document}

\maketitle

\begin{abstract}

We consider the problem of online control of systems with time-varying linear dynamics. This is a general formulation that is motivated by the use of local linearization in control of nonlinear dynamical systems. To state meaningful guarantees over changing environments, we introduce the metric of {\it adaptive regret} to the field of control. This metric, originally studied in online learning, measures performance in terms of regret against the best policy in hindsight on {\it any interval in time}, and thus captures the adaptation of the controller to changing dynamics.

Our main contribution is a novel efficient meta-algorithm: it converts a controller with sublinear regret bounds into one with sublinear {\it adaptive regret} bounds in the setting of time-varying linear dynamical systems.  The main technical innovation is the first adaptive regret bound for the more general framework of online convex optimization with memory. Furthermore, we give a lower bound showing that our attained adaptive regret bound is nearly tight for this general framework.
    
\end{abstract}


\section{Introduction}

Reinforcement learning and control have essentially identical objectives: to maximize long-term reward in a Markov decision process. The focus in control theory is many times on dynamical systems that arise in the real world, motivated by physical applications such as robotics and autonomous vehicles. In these applications the dynamics have succinct descriptions coming from physics equations. These are seldom linear!  Even for simple physical systems such as the inverted pendulum, the dynamics are nonlinear. 
Furthermore, dynamics in the real world often change with time. For example, the dynamics of an UAV flying from source to target may change due to the volatility of the weather conditions (wind, rain, and so forth). 

In terms of {\bf provable methods}, the theory of optimal and robust control has focused on efficient algorithms for linear time invariant (LTI) systems. Nonlinear systems are significantly harder, and in fact NP-hard to control in general \citep{blondel2000survey}. There are several different approaches to deal with nonlinear dynamics, that we detail in the related work section. 
In this paper we consider the approach of iterative linearization by using the first order approximation, that was popularized by planning methods such as iLQR, iLC and iLQG. This allows one to model nonlinear dynamics as {\bf linear time-varying} (LTV) dynamical systems. However, instead of the standard approach of applying planning methods for linear dynamical systems, we build on recent regret minimization algorithms for online control. 

To present our approach to the problem, we first describe the recent literature on online control that differs from classical techniques by measuring performance in terms of regret. We then proceed to show how to borrow concepts from online learning in changing environments to attain meaningful guarantees for control of time-varying systems.

\subsection{Online control of linear dynamical systems} 

A recent advancement in machine learning literature studies the control of dynamical systems in the online learning, or regret minimization, framework.
In the nonstochastic control setting, the controller faces a dynamical system given by 
\begin{equation} \label{eq:system}
x_{t+1} = A_t x_t + B_t u_t + w_t ~.
\end{equation}
Here $A_t,B_t$ describe the system dynamics, $x_t$ is the state, $u_t$ is the control and $w_t$ is the potentially adversarial (nonstochastic) perturbation. Prior literature considers solely the LTI case, where $A_t \equiv A, B_t \equiv B$. The controller chooses the control signal $u_t$, and incurs loss $c_t(x_t, u_t)$, for an adversarially chosen convex cost function $c_t$. 
Since the perturbations and cost functions are arbitrary or chosen adversarially, the best policy is ill-defined a priori. Thus, the performance metric in this model is worst-case regret w.r.t. the best policy in hindsight from a certain policy class $\Pi$. Formally, 
\begin{equation}\label{eq:regret}
\regret_T =    \sum_{t=1}^T c_t (x_t , u_t) - \min_{\pi \in \Pi} \sum_{t=1}^T c_t(x^\pi_t , u^\pi_t)  ~.
\end{equation}
Several benchmark policy classes have been considered in the recent control literature. The simplest to describe is the class of linear state feedback policies, i.e. policies that choose the control as a linear function of the state, $u_t = K x_t$. These policies are known to be optimal for the $\mathcal{H}_2$ control and $\mathcal{H}_\infty$
control formulations for LTI systems.

From this starting point we would like to extend nonstochastic control: {\bf can we prove regret bounds for LTV systems?} How would such bounds even look like? 
To address this question we first consider the field of online learning, where the metric of regret is well studied, and investigate its extension to changing environments.

\subsection{Adaptive regret for online convex optimization}

In the problem of online convex optimization (OCO), a learner iteratively chooses a point in a convex decision set, i.e. $z_t \in \K \subseteq \reals^d$. An adversary then chooses a loss function $f_t: \K \mapsto \reals$. The goal of the learner is to minimize regret, or loss compared to the best fixed decision in hindsight, given as
$$\sum_{t=1}^T f_t(z_t) - \min_{z^\star \in \K } \sum_{t=1}^T f_t(z^\star) ~. $$

The theory of OCO gives rise to efficient online algorithms with sublinear regret, e.g. $O(\sqrt{T})$ over $T$ iterations, implying that on average the algorithm competes with the best fixed decision in hindsight. 

However, the standard regret metric is not suitable for changing environments, where the fixed optimal solution in hindsight is {\it poor}. For example, consider a scenario with $f_t = f$ for the first $T/2$ iterations, and $f_t = g$ for the last $T/2$ iterations. Here, the standard regret metric ensures convergence to the minimum of $f+g$, i.e. the best fixed decision in hindsight which potentially incurs {\it linear} loss\footnote{To see this, take $f(z) = \|z-z^{\star}_f\|^2$ and $g(z) = \|z-z^{\star}_g\|^2$, then the optimum of $f+g$ is the midpoint of $z^{\star}_f$ and $z^{\star}_g$ and over the whole interval suffers cumulative loss linear in $T$.}. Yet, the optimal solution for this scenario is to shift between the minimum of $f$ to that of $g$ midway!

For this reason, the metric of adaptive regret was developed by \cite{hazan2009efficient}. It captures the supremum over all local regrets in any contiguous interval $I$, defined as
\begin{equation}\label{eq:oco_adregret}
\sup_{I = [r, s] \subseteq [1, T]} \left[ \sum_{t = r}^s f_t (z_t) - \min_{z_I^\star  \in \K} \sum_{t = r}^s f_t(z_I^\star ) \right] ~.
\end{equation}
The strength of this definition is that it does {\it not} try to model the changes in the environment. Instead, the responsibility is on the learner to try to compete with the best local predictor $z_I^{\star}$ at all times. For the example above, if an algorithm does not converge to either the optimum of $f$ in $[1, T/2]$ or the optimum of $g$ in $[1, T/2]$, it would suffer \emph{linear} adaptive regret. In general, algorithms that minimize adaptive regret by definition minimize regret as well and additionally are capable of quickly switching between local optima. This metric is thus more appropriate for our investigation of {\it LTV dynamical systems}.

\subsection{Contributions}

In the setting of online control over LTV systems as in \eqref{eq:system}, the adaptive regret metric implies the following: an algorithm that minimizes adaptive regret is capable of competing against different policies from the class $\Pi$ throughout the time horizon $T$. Formally, adaptive regret against a policy class $\Pi$ is given as
\begin{equation}\label{eq:aregret}
\aregret_T = \sup_{I = [r,s] \subseteq [1,T]} \left\{ \sum_{t=r}^s c_t (x_t ,u_t) - \min_{\pi^{\star}_I \in \Pi} \sum_{t=r}^s c_t(x^{\pi^{\star}_I}_t , u^{\pi^{\star}_I}_t) \right\} ~.
\end{equation}
Just like we surveyed for OCO, the main change from standard regret for control is the supremum over all intervals, and the fact that the minimum over the policies is local to the particular interval. 
This ensures that an algorithm with sublinear adaptive regret guarantees enjoys low regret against the best-in-hindsight policy $\pi_I^{\star}$ on any interval $I$. Hence, the algorithm captures any changes in the dynamics of the LTV system by implicitly tracking the local optimal (in $\Pi$) policy.

The main challenge in applying existing adaptive regret methods from online learning to control and reinforcement learning is the long-term effect that actions have.
We first overcome this challenge in the setting of online convex optimization {\it with memory} \citep{anava2015online}. This setting allows one to transfer learning in stateful environments to online learning, following the methodology proposed in \citet{agarwal2019online}. The main contributions of our paper can be summarized as (i) adaptive regret results for online control over LTV systems and (ii) technical contributions in the OCO with memory setting possibly of independent interest.

\paragraph{Adaptive Regret over LTV systems. } We propose an efficient meta-algorithm MARC, Algorithm \ref{alg:adaptive_control_main}, that converts a base controller with standard regret bounds to a control algorithm with provable sublinear adaptive regret guarantees in the setting of linear, time-varying systems.

We apply MARC over recent algorithmic results in nonstochastic control, and obtain an efficient algorithm that attains
$\tilde{O}(\sqrt{\opt})$ adaptive regret against the class of disturbance response control (DRC) policies\footnote{DRC is a very general policy class that encompasses linear dynamical controllers and linear state feedback policies for LTI systems, see Appendix \ref{sec:policyclasses} for details.}, where $\opt$ is the cost of the best policy in hindsight over the entire horizon.

\paragraph{Adaptive Regret in OCO with memory.} Our derivation goes through the framework of OCO with memory, for which we give an efficient adaptive regret algorithm. Specifically, 
our algorithm guarantees $\tilde{O}(\sqrt{\opt})$ adaptive regret for strongly convex functions with memory, where $\opt$ is the best loss in hindsight. This is the first such guarantee for a fundamental setting in prediction.

The aforementioned challenge of obtaining adaptive regret in the setting of OCO with memory is that an online algorithm needs to change its decision slowly to cope with the memory constraint. On the other hand, an online algorithm needs to be agile to quickly adjust to environment changes and minimize adaptive regret. These two requirements are contradictory. We formalize this intuition by proving the following lower bound.

\begin{theorem}[Informal Theorem]
Any algorithm for OCO with memory has 
$\aregret_T = \Omega(\sqrt{T})$, even over strongly convex loss functions. 
\end{theorem}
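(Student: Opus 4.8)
The plan is to exhibit a single instance of OCO with memory --- one–dimensional, with memory $H=1$ --- on which every learner incurs $\aregret_T = \Omega(\sqrt T)$. Let $\mathcal{K}=[-1,1]$, fix a weight $\lambda=\Theta(T)$, and take the memory loss at round $t$ to be
\[
f_t(z_{t-1},z_t)\;=\;\tfrac12\,(z_t-a_t)^2\;+\;\lambda\,(z_t-z_{t-1})^2,
\]
where the target $a_t$ equals $+1$ for $t\le\lfloor T/3\rfloor$ (a burn–in phase) and equals a sign $\sigma\in\{-1,+1\}$, drawn uniformly in advance by an oblivious adversary, for $t>\lfloor T/3\rfloor$. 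Since $\tilde f_t(z):=f_t(z,z)=\tfrac12(z-a_t)^2$ is $1$–strongly convex, this is a legitimate strongly–convex OCO–with–memory instance, and the structural point is that the memory penalty vanishes on constant trajectories, so no fixed comparator ever pays for it. The burn–in only forces the best fixed decision over $[1,T]$ to have loss $\opt=\Theta(T)$ (when $\sigma=-1$), keeping the bound consistent with the $\tilde O(\sqrt{\opt})$ upper bound. One could equivalently keep all weights $O(1)$ and instead take $H=\Theta(\sqrt T)$ with penalty $(z_t-z_{t-H})^2$; the mechanism below is the same, with $\sqrt\lambda$ replaced by $H$.

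Choose the interval $I=[\lfloor T/3\rfloor+1,\,T]$ and write $r=\lfloor T/3\rfloor$. On $I$ every $\tilde f_t$ equals $\tfrac12(z-\sigma)^2$, so the comparator $z^\star_I=\sigma$ has zero loss on $I$ and the subtracted term in $\aregret_T$ vanishes there. For the learner, $z_r$ is a function of $f_1,\dots,f_{r-1}$, which are deterministic, hence independent of $\sigma$; since $z_r\in[-1,1]$, with probability at least $\tfrac12$ the ``wrong'' sign is realized and $d_0:=|z_r-\sigma|\ge 1$. So with probability $\ge\tfrac12$ the learner enters $I$ at distance $\ge 1$ from the unique good fixed point. (For randomized learners, first condition on the internal randomness, then average over $\sigma$, obtaining the bound on expected adaptive regret.)

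The crux is to show that \emph{any} trajectory on $I$ starting at $d_0\ge 1$ pays $\Omega(\sqrt\lambda)$ there. Put $d_j=|z_{r+j}-\sigma|$; since all iterates lie on one side of $\sigma=\pm1$, $|z_{r+j}-z_{r+j-1}|=|d_j-d_{j-1}|$, so the learner's loss on $I$ is $\sum_{j\ge 1}\big(\tfrac12 d_j^2+\lambda(d_j-d_{j-1})^2\big)$. Inspect only the first $k:=\lceil\sqrt\lambda\rceil$ rounds of $I$ (these fit, as $\sqrt\lambda=\Theta(\sqrt T)\le|I|$). If $d_j\ge\tfrac12$ for all $j\le k$, the mismatch terms already give $\sum_{j\le k}\tfrac12 d_j^2\ge k/8=\Omega(\sqrt\lambda)$. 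Otherwise $d_{j'}<\tfrac12$ for some $j'\le k$, whence $\sum_{j\le j'}|d_j-d_{j-1}|\ge d_0-d_{j'}>\tfrac12$, and Cauchy--Schwarz gives $\sum_{j\le j'}(d_j-d_{j-1})^2\ge 1/(4j')\ge 1/(4k)$, so the movement terms give $\lambda/(4k)=\Omega(\sqrt\lambda)$. Either way the learner pays $\Omega(\sqrt\lambda)=\Omega(\sqrt T)$ on $I$ while the comparator pays $0$; since the cost on $I$ is always nonnegative and is $\Omega(\sqrt T)$ on a probability–$\tfrac12$ event, $\E_\sigma[\aregret_T]=\Omega(\sqrt T)$, and some fixed $\sigma$ witnesses the bound. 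Note this stands even though every $\tilde f_t$ is strongly convex, a regime in which memoryless OCO admits polylogarithmic adaptive regret.

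The main obstacle is this last (``trajectory'') step. It requires setting up the construction so that the per–interval comparator is genuinely free --- automatic here because $\tilde f_t$ carries no memory term --- and then quantifying, uniformly over \emph{all} trajectories rather than just plausible ones, the dichotomy ``stay near the old point and pay the quadratic mismatch'' versus ``move to the new point and pay the memory penalty'', so that both horns cost $\Omega(\sqrt T)$. That balance is exactly what pins down the scales $\lambda=\Theta(T)$ (equivalently $H=\Theta(\sqrt T)$) and the inspection window $k=\Theta(\sqrt T)$; the probabilistic step (the learner caught out of position) and the passage to a deterministic $\sigma$ are then routine.
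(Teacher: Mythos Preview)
Your argument is correct but takes a genuinely different route from the paper.

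The paper proceeds in two steps. First, it proves a standalone tradeoff lemma: any algorithm with adaptive regret $R=o(T)$ on strongly convex \emph{memoryless} losses must have action shift $\Omega(T/R)$. This is shown by tiling $[T]$ with alternating blocks of $(z-1)^2$ and $(z+1)^2$, each of length $\approx 4R$, forcing the iterate to travel between neighborhoods of $+1$ and $-1$ roughly $T/R$ times. Second, it takes $H=1$ and $f_t(z_{t-1},z_t)=\tilde f_t(z_t)+|z_t-z_{t-1}|$, so that the adaptive regret on $f_t$ simultaneously dominates the adaptive regret $R$ on $\tilde f_t$ and the total action shift $\Omega(T/R)$; hence $\aregret_T\ge \max(R,\Omega(T/R))=\Omega(\sqrt T)$.

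Your construction instead uses a single oblivious sign flip after a burn-in, a heavy quadratic memory penalty $\lambda=\Theta(T)$ (or equivalently $H=\Theta(\sqrt T)$ with unit weights), and a direct Cauchy--Schwarz dichotomy on the first $\Theta(\sqrt\lambda)$ rounds of the target interval. What the paper's route buys is that its instance has $H=1$ and coordinate-wise Lipschitz constant $L=O(1)$, so the lower bound sits squarely in the regime of the $\tilde O(H^2 L\sqrt{\opt})$ upper bound and certifies its tightness; it also isolates the adaptivity--stability tension as a reusable lemma. What your route buys is self-containment and simplicity: one interval, one elementary inequality, no balancing of two competing quantities. The price is that your instance has $L=\Theta(T)$ (or $H=\Theta(\sqrt T)$), so while it proves the stated informal theorem, it does not by itself witness that the upper bound is tight for constant $H$ and $L$.
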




The lower bound above essentially shows our results to be tight for nonstochastic control algorithms that are based on OCO with memory given that $\opt = O(T)$. However, we note that despite the general lower bound it is still possible to attain $o(\sqrt{T})$ adaptive regret in certain favorable settings. In fact, the positive results in this work, given as first-order adaptive regret bounds, suggest exactly this: Algorithm \ref{alg:adaptive_control_main} suffers adaptive regret much smaller than $\tilde{O}(\sqrt{T})$ when the cost of the best policy in hindsight $\opt = o(T)$ is sublinear. 

\paragraph{Paper Organization.}In subsection \ref{sec:related} we discuss related work. We provide important background and formalize the problem at hand in section \ref{sec:setting}. Section \ref{sec:control} describes the main meta-algorithm and its performance guarantee for online control over changing dynamics. In section \ref{sec:additional_results} we discuss our study of adaptive regret in the OCO with memory setting as these contributions may be of independent interest. Our experimental results are presented in section \ref{sec:experiments}. The appendix includes details, formal theorems, proofs and all else skipped in the main body of the paper for clarity of exposition.

\subsection{Related work}\label{sec:related}

The field of optimal and adaptive control is vast and spans decades of research, see e.g. \citet{Stengel1994OptimalCA,kemin} for survey. 
In terms of nonlinear control, we can divide the literature into several main approaches. The iterative linearization approach takes the local linear approximation via the gradient of the nonlinear dynamics. One can apply techniques from optimal control to solve the resulting changing linear system. Iterative planning methods such as iLQR \citep{iLQR}, iLC \citep{moore2012iterative} and iLQG \citep{todorov2005generalized} fall into this category. Our approach also takes this route.

Another approach is using convex relaxations of the nonlinear dynamics to cope with the hardness of the underlying non-convex optimization. These methods are applied for both $\mathcal{H}_2$ control (see e.g. \citet{majumdar2020recent}), and $\mathcal{H}_{\infty}$ control (see \citet{bansal2017hamilton}) formulations. They are highly effective in some cases, but do not scale well to high dimensional problems. Finally, the nonlinear system can also be linearized via the Koopman operator as detailed in \citet{Koopmanism,clancy}.

In this work we restrict our discussion to online control of changing linear dynamical systems with low \emph{adaptive regret}. 
To the best of our knowledge, this is the first work with adaptive regret bounds shown for time-varying dynamics.

\paragraph{Online convex optimization and adaptive regret.}We make extensive use of techniques from the field of online learning and regret minimization in games \citep{cesa2006prediction,hazan2016introduction}. Most relevant to our work is the literature on adapting to changing environments in online learning, which starts from the works of \citet{herbster1998tracking,bousquet2002tracking}. The notion of adaptive regret was introduced in \citet{hazan2009efficient}, and significantly studied since as a metric for adaptive learning in OCO \citep{adamskiy2016closer,zhang2019adaptive}. An alternative metric for changing systems studied in online learning is called  dynamic regret \cite{zinkevich2003online}. It has been estabilished that dynamic regret is a weaker notion than \emph{strongly} adaptive regret \cite{daniely2015strongly}, in the sense that a sublinear bound on the former implies sublinear dynamic regret, and the reverse is not true \cite{zhang2018dynamic}.

\paragraph{Regret minimization for online control.}In classical control theory, the disturbances are assumed to be i.i.d. Gaussian and the cost functions are known ahead of time. 
In the online LQR setting \citep{abbasi2011regret,dean2018regret,mania2019certainty,cohen2018online}, a fully-observed time-invariant linear dynamic system is driven by i.i.d. Gaussian noise and the learner incurs a cost which is (potentially changing) quadratic in state and input. When the costs are fixed, the optimal policy for this setting is known to be linear  $u_t = K x_t$, where $K$ is the solution to the algebraic Ricatti equation. Several online methods \citep{mania2019certainty,cohen2019learning,cohen2018online} attain $\sqrt{T}$ regret for this setting, and are able to cope with changing loss functions. Regret bounds for partially observed systems were studied in \citet{anima1,anima2,anima3}, with the most general and recent bounds in \citet{simchowitz2020improper}.

\citet{agarwal2019online} consider a significantly more general and  challenging setting, called nonstochastic control, in which the disturbances and cost functions are adversarially chosen, and the cost functions are arbitrary convex costs. In this setting they give an efficient algorithm that attains $\sqrt{T}$ regret. This result was extended to {\it unknown} LTI systems in \citet{hazan2019nonstochastic}, and the partial observability setting in \citet{simchowitz2020improper}. Logarithmic regret for the nonstochastic perturbation setting was obtained in \citet{simchowitz2020making}. For a survey of recent techniques and results see \citet{hazan2021tutorial}.

A roughly concurrent line of work considers minimizing (dynamic) regret against the optimal open-loop control sequence in both LTI and LTV systems. \citet{li2019online} achieve this by leveraging a finite lookahead window while \citet{goel2021regretoptimal} reduce the regret minimization problem to $\mathcal{H}_{\infty}$ control. \citet{zhang2021strongly} follow up our work to devise methods with \emph{strongly} adaptive regret guarantees however these regret bounds, as opposed to ours, are not first-order.


\section{Problem Setting and Preliminaries}\label{sec:setting}
\paragraph{Notation.} Throughout this work we use $[n] = [1,2,..., n]$ as a shorthand, $\| \cdot \|$ is used for Euclidean and spectral norms, $O(\cdot)$ hides absolute constants, $\tilde{O}(\cdot)$ hides terms poly-logarithmic in $T$.

\subsection{Online LTV Control}\label{sec:setting_control}
A time-varying linear (LTV) dynamical system is given by the following dynamics equation,
\begin{equation*}
\forall t \in [T], \quad x_{t+1} = A_t x_t + B_t u_t + w_t ,
\end{equation*}
where $x_t \in \R^{d_x}$ is the (observable) system state, $u_t \in \R^{d_u}$ is the control, and 
$(A_t, B_t)$ are the system matrices with $A_t \in \R^{d_x \times d_x}$, $B_t \in \R^{d_x \times d_u}$, $w_t \in \R^{d_x}$ is the disturbance. 
In our work, we allow $w_t$ to be adversarially chosen. This is the key assumption in the {\it nonstochastic} control literature. The additional generality of adversarial perturbations allows the disturbance to model slight deviations from linearity along with inherent noise.

\ignore{
\eh{TEMP: 
It is shown in \citep{hazan2019nonstochastic} that for an unknown system $(A, B)$ and a known $(\kappa, \gamma)$ strongly stable controller $K$, a disturbance-action controller can achieve sublinear regret against all such controllers parametrized by $(K', M)$ where $K'$ is $(\kappa, \gamma)$ strongly stable.
\begin{definition}
A disturbance-action controller with parameters $(K, M)$ where $M = [M^0, M^1, \ldots, M^{H-1}]$ outputs control $u_t$ at state $x_t$,
$$ u_t = Kx_t + \sum_{i=1}^H M_{i-1} w_{t-i}. $$
\end{definition}}
}

We consider the setting of {\it known} systems, i.e. after taking an action $u_t$ the controller observes the next state $x_{t+1}$ as well as the current system matrices $(A_t, B_t)$. This allows the controller to compute the disturbance $w_t = x_{t+1} - A_t x_t - B_t u_t$, so the knowledge of $x_{t+1}$ and $w_t$ is interchangeable. 
A control algorithm $\mathcal{C}$ chooses an action $u_t = \mathcal{C}(x_1, \dots, x_t)$ based on previous information. It then suffers loss $c_t(x_t,u_t)$ and observes the cost function $c_t$. We remark that an adaptive adversary chooses all $(A_t, B_t), w_t, c_t$. We make the following basic assumptions common in the nonstochastic control literature.
\begin{assumption}\label{assmp:noise_bound}
The disturbances are bounded in norm, $\max_t \| w_t \| \le W$.
\end{assumption}
\begin{assumption}\footnote{Dynamics satisfying this assumption are called sequentially stable which we relax to sequential stabilizability in Appendix \ref{sec:policyclasses}.}
There exist $C, C_B \ge 1$ and $\rho \in (0, 1)$ such that for all $t$ and $H \in [1, t)$, 
\begin{align*}
    \Phi_t^H = \prod_{i=t}^{t-H+1} A_i, \, \left\| \Phi_t^H \right\|_{\text{op}} \le C \cdot \rho^H, \, \| B_t \|_{\text{op}} \le C_B ~.
\end{align*}
\label{assmp:seq_stable}
\end{assumption}
\begin{assumption}\label{assmp:control_cost}
The cost functions $c_t : \R^{d_x} \times \R^{d_u} \to \R$ are general convex functions that satisfy the conditions $0 \le c_t(x, y) \le 1$ and $\| c_t(x, y) \| \leq L_c \max\{1, \|x\|+\|u\|\}$ for some $L_c > 0$.
\end{assumption}
The standard performance metric of controller $\mathcal{C}$ over horizon $T$ is regret with respect to a class of policies $\Pi$ as defined in \eqref{eq:regret} denoted $\regret_T(\mathcal{C})$. We instead minimize for the {\it adaptive regret} metric of $\mathcal{C}$ with respect to $\Pi$ as defined in \eqref{eq:aregret}, and denote it $\aregret_T(\mathcal{C})$. In case the control algorithm is randomized, we take the expectation of the metric over the randomness in the algorithm.

The choice of the policy class $\Pi$ is essential for the performance of a control algorithm. One target class of policies we compare against in this paper is disturbance action controllers (DAC), 
whose control is a linear function of finite past disturbances $u_t = \sum_{i=1}^H M_i w_{t-i}$, for some history-length parameter $H$. This comparator class is known to approximate to arbitrarily high precision the state-of-the-art in LTI control: linear dynamical controllers (LDC). This choice is a consequence of recent advances in convex relaxation for control  \citep{agarwal2019online,agarwal2019logarithmic,hazan2019nonstochastic,simchowitz2020improper}. The convex relaxation in all these advances is accompanied with a reduction of online control to the setting of online convex optimization (OCO) with memory.

\paragraph{Reduction to OCO with memory.}
The intuition behind this approach is that even though actions have long-term effect in control, their effect is dissipating geometrically fast in time. Thus,  actions and states that occurred far in the past have only marginal effect on the dynamics as a whole. The formal statement for this intuition is given in Definition \ref{def:membounded}, a generalization of Definition 2.1 from \citet{agarwal2020boosting}.

Before stating the formal definition, we first describe the notion of an {\it action set sequence}. For a fixed horizon $T$, let $\mathcal{U}_t \subset \R^{d_u}$ be the constraint set for action $u_t$ for each $t \in [T]$. Denote the action set sequence $\mathcal{U}_{1:T} = \{\mathcal{U}_1, \dots, \mathcal{U}_T \}$ and use $u_{1:T} \in \mathcal{U}_{1:T}$ to indicate $u_t \in \mathcal{U}_t$ for all $t \in [T]$. We remark that $\mathcal{U}_t$ potentially depends on the system dynamics up to time $t$ and the action set sequence $\mathcal{U}_{1:T}$ depends on the family of control algorithms used, but {\it not} on the particular individual instance.

\begin{definition}\label{def:membounded}
The action set sequence $\mathcal{U}_{1:T}$ is said to have $(H, \eps)$-bounded memory if for all fixed arbitrary $u_{1:T} \in \mathcal{U}_{1:T}$ and all $t \in [T]$,
\begin{equation*}
    | c_t(x_t, u_t) - c_t(\hat{x}_t, u_t) | \leq \eps,
\end{equation*}
where we define $\hat{x}_t$ to be the proxy state with memory $H$ for the sequence of actions $u_{1:T}$: for $t > H$, $\hat{x}_t$ is the state reached by the system if we artificially set $x_{t-H} = 0$ and simulate the dynamics \eqref{eq:system} with the actions $u_{t-H}, \dots, u_{t-1}$.
\end{definition}

Suppose an action set sequence $\mathcal{U}_{1:T}$ has $(H, \eps)$-bounded memory. Then, given that $\hat{x}_t = \hat{x}_t(u_{t-H}, \dots, u_{t-1})$, the performance guarantees of a proxy cost function $f_t(u_{t-H:t}) = c_t(\hat{x}_t, u_t)$ imply guarantees for the control setting. Furthermore, regret minimization of $f_t(u_{t-H:t})$ can be done in the setting of OCO with memory. Finally, we state the necessary properties for a controller $\mathcal{C}$ to be considered a base control algorithm.

\begin{definition}\label{def:base_controller}
A control algorithm $\mathcal{C}$ with an action set sequence $\mathcal{U}_{1:T}$ is called a base controller if:
    \begin{enumerate}[(i)]
        \item $\mathcal{U}_{1:T}$ has $(H, \eps)$-bounded memory.
        \item for all $t \in [T]$, the proxy loss $f_t(u_{t-H:t}) = c_t(\hat{x}_t, u_t)$ is coordinate-wise $L$-Lipschitz.
    \end{enumerate}
\end{definition}

Note that the properties of a base controller concern the control algorithm setup not the controller instance itself. The definition of a base controller serves simply as an abstraction:
in Appendix \ref{sec:apply_control} we show it is not vacuous given that all previous control algorithms in the nonstochastic control literature satisfy the conditions \citep{agarwal2019online, simchowitz2020making}.

\ignore{
\begin{claim}\label{claim:appl}
Let $\mathcal{C}_{\text{GFC}}$ be the GFC control algorithm \citep{simchowitz2020improper}, with $\mathcal{U}_{1:T}$ action set sequence and policy class $\Pi$ of strongly stable linear controllers. This setup satisfies the first two properties of Assumption \ref{assmp:control} with $H = \O(\log(1/\eps))$ and $L = \O(\sqrt{H})$. Furthermore, on a fixed LDS, i.e. $(A_t, B_t) = (A, B), \forall t \in [T]$, the controller $\mathcal{C}_{\text{GFC}}$ attains policy regret $\reg_{\C_{\text{GFC}}}(T) = \Theta(\text{poly}(\log T))$ with respect to $\Pi$ and action shift $\shift_{\C_{\text{GFC}}}(T) = \Theta(\text{poly}(\log T))$.
\end{claim}
}

\subsection{Online Convex Optimization with Memory}\label{sec:setting_oco}
We address the problem of online control of a linear dynamical system from the setting of online convex optimization (OCO, see e.g. \citet{hazan2016introduction}), and specifically OCO with memory.  
In many environments the decisions of the player affect the long-term future, such is the case in the problems of control and reinforcement learning. Hence, in the setting with memory the adversary reveals the loss function $f_t : \K^{H+1} \mapsto \reals$ that applies to the past $H+1$ decisions of the player, and the player suffers loss $f_t(z_{t-H:t})$ where $z_{i:j} = \{z_i, \dots, z_j\}$ with $i < j$. Define the surrogate loss $\tilde{f}_t : \mathcal{K} \mapsto \R$ to be the function with all $H+1$ arguments equal, i.e. $\tilde{f}_t(z) = f_t(z, \dots, z)$ (this reduces to the standard setting with $H=0$). The regret in this setting is defined with respect to the best surrogate loss in hindsight as follows, where we denote $z_i = z_1$ for all $i < 1$,
\begin{equation*}
    \regret_T = \sum_{t=1}^T f_t(z_{t-H:t}) - \min_{z \in \mathcal{K}} \sum_{t=1}^T \tilde{f}_t(z) ~.
\end{equation*}

\paragraph{Adaptive regret.}
It has been observed in the literature that the standard regret benchmark promotes convergence to the static optimum of the entire learning sequence, and in certain cases hinders adaptivity to changing environments. To remedy this situation, \citet{hazan2009efficient} proposed the more refined adaptive regret metric as defined in \eqref{eq:oco_adregret}, which promotes adapting to environment changes. In the setting of OCO with memory, the notion of adaptive regret is defined analogously, i.e. the supremum of the local standard regret over all contiguous intervals,
\begin{equation*}
\aregret_{T}(\mathcal{A}) = \sup_{I=[r,s] \subseteq [T]} \left[ \sum_{t=r}^s f_t(z_{t-H:t}) - \min_{z^\star_I \in \mathcal{K}} \sum_{t=r}^s \tilde{f}_t(z^\star_I) \right] ~.
\end{equation*}
The OCO setting with memory, as outlined in \citet{anava2015online}, reduces to the standard setting by assuming a Lipschitz condition on $f$ and relating $f$ to $\tilde{f}$. To quantify this relation, it is crucial to keep track of the movement between the consecutive actions by the learner. Henceforth, we define the notion of {\it action shift}, a metric for the stability of the algorithm, as the overall shifting in distance of consecutive actions by $\A$, 
\begin{equation}\label{eq:actionshift}
     \shift_{T}(\A)  = \sup_{z_1, \dots, z_T \leftarrow \A} \left[ \sum_{t=1}^{T-1} \| z_{t+1} - z_t \| \right] ~.
\end{equation}
Action shift is necessary to quantify an algorithm's prediction stability: the predictions are stable if the action shift is small and this is crucial in the setting with memory. On the other hand, adaptive regret encourages an algorithm to move quickly to adapt to environment changes, thus compromises the prediction stability by driving the action shift to be large.


\section{Online Control of Time-Varying Dynamics}\label{sec:control}

For the setting of online control described in Section \ref{sec:setting_control}, we devise MARC (Algorithm \ref{alg:adaptive_control_main}): a meta-algorithm that takes a base controller $\mathcal{C}$ and ``transforms'' its standard regret guarantees into adaptive regret bounds. It does so by maintaining $N=T$ copies of the base controller ($\mathcal{C}_1, \ldots, \mathcal{C}_N$), with the restriction that each $\mathcal{C}_i$ plays $u_t^i=0$ for $t<i$ and only starts running $\mathcal{C}$ at round $t=i$. At each round $t$, MARC chooses the action $u_t=u_t^i$ given by $\mathcal{C}_i$ with some probability that reflects $\mathcal{C}_i$'s performance so far. As long as $\mathcal{C}$ is a base controller according to Definition \ref{def:base_controller}, we can transfer our general results on adaptive regret for online convex optimization with memory from Section \ref{sec:additional_results} to the control setting, yielding Theorem \ref{thm:meta_controller_main} given below. 

\paragraph{Black-box use of $\mathcal{C}$.} Since the system is known to the meta-controller, each controller $\mathcal{C}_i$ can construct a simulated environment with its own actions $u_t^i$ and identical system matrices and disturbances. In particular, once the meta-controller observes the new state $x_{t+1}$, it computes the corresponding disturbance $w_t = x_{t+1} - A_t x_t - B_t u_t$ and feeds it to the base controllers along with the system matrices $(A_t, B_t)$. Afterwards, each base controller $\mathcal{C}_i$ simulates the system environment with its own action, i.e. $x_{t+1}^i = A_t x_t^i + B_t u_t^i + w_t$. Such behavior allows for black-box use of results for the base controllers since each acts separately in response to the same dynamics.

\begin{algorithm}
\caption{Meta Adaptive Regret Controller (MARC) \label{alg:adaptive_control_main}} 
\begin{algorithmic}[1]
\STATE \textbf{Input}: horizon $T$, action set sequence $\mathcal{U}_{1:T}$, $N=T$ controllers $\mathcal{C}_1, \dots, \mathcal{C}_N$, parameters $\eta, \sigma$
\STATE \textbf{Setup:} assign $w_1^i = 1$ and feedback $\mathcal{F}^i_1 = \{x^i_1 = 0\}, \forall i \in [N]$, denote $W_t = \sum_{i=1}^N w_t^i$
\FOR{$t = 1, ..., T$}
\STATE compute each action $u_t^i$ by $\mathcal{C}_i$ given $\mathcal{F}_t^i$
\IF{$t = 1$}
    \STATE choose $i_t = i$ w.p. $p_t^i = w_t^i / W_t$ for all $i \in [N]$
\ELSE
    \STATE keep $i_t = i_{t-1}$ w.p. $w_t^{i_{t-1}} / w_{t-1}^{i_{t-1}}$, o.w. choose $i_t = i$ w.p. $p_t^i = w_t^i / W_t$ for all $i \in [N]$
\ENDIF
\STATE choose action $u_t = u_t^{i_t}$, observe $c_t(\cdot, \cdot)$, suffer cost $c_t(x_t, u_t)$
\STATE observe new state $x_{t+1}$, compute $w_t$ disturbance, obtain $x_{t+1}^i$ given $u_t^i, w_t$ for all $i \in [N]$
\STATE let $f_t(u_{t-H:t}) = c_t(\hat{x}_t, u_t)$ be proxy cost, $\tilde{f}_t(u) = f_t(u, \dots, u)$ be surrogate proxy cost
\STATE compute $\overline{w}_{t+1}^i = w_t^i e^{-\eta \tilde{f}_t(u_t^i)}$ and $w_{t+1}^i = (1-\sigma) \overline{w}_{t+1}^i + \sigma \overline{W}_{t+1} / N$ for all $i \in [N]$
\STATE update $\mathcal{F}^i_{t+1} = \mathcal{F}^i_{t} \cup \{x^i_{t+1}, u^i_t, c_t\}$ for all $i \in [N]$
\ENDFOR
\end{algorithmic}
\end{algorithm}


\paragraph{Efficient implementation.} We remark that Algorithm \ref{alg:adaptive_control_main} is not computationally efficient relative to a base controller $\mathcal{C}$: it has the computational complexity of $T$ such controllers. Yet, our algorithm can be implemented in an efficient manner by keeping track of and updating only $O(\log T)$ active controllers. The inactive ones are represented by the stationary $u_t=0$ controller. This efficient version incurs only a $O(\log T)$ extra multiplicative adaptive regret factor relative to Theorem \ref{thm:meta_controller_main} and only $O(\log T)$ computational overhead relative to the base controller $\mathcal{C}$. For the sake of clarity and brevity, we present Algorithm \ref{alg:adaptive_control_main} without this component and present the efficient implementation formally in Appendix \ref{sec:efficient}.

\begin{theorem}\label{thm:meta_controller_main}
Let $\mathcal{C}$ be a base control algorithm by Definition \ref{def:base_controller} with $\epsilon = T^{-1}$ and denote $\opt = \min_{\pi \in \Pi} \sum_{t=1}^T c_t(x_t^{\pi}, u_t^{\pi})$. With the parameter choices of $\eta = \tilde{\mathcal{O}}(\shift_T(\mathcal{C}) \sqrt{\opt})^{-1}$ and $\sigma=T^{-1}$, Algorithm \ref{alg:adaptive_control_main} (MARC) achieves the following adaptive regret against the class of policies $\Pi$:
\begin{equation}\label{eq:controlopt_main}
    %
        \E\left[\aregret_{T}(\text{MARC})\right] \leq \tilde{\mathcal{O}}\left( \regret_{T}(\mathcal{C})  +  \shift_{T}(\mathcal{C}) \sqrt{\opt}   \right),
\end{equation}
where $\regret_T(\mathcal{C})$ is the regret $\mathcal{C}$ attains w.r.t. $\Pi$, and $\shift_T(\mathcal{C})$ is its action shift.
\end{theorem}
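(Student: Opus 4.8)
The plan is to reduce the control problem to the OCO-with-memory setting via the base controller abstraction and then invoke the adaptive regret bound for OCO with memory (from Section \ref{sec:additional_results}) on the ensemble of simulated controllers maintained by MARC. First I would set up the reduction: since $\mathcal{C}$ is a base controller, its action set sequence $\mathcal{U}_{1:T}$ has $(H,\eps)$-bounded memory with $\eps = T^{-1}$, so for \emph{any} fixed action sequence the true cost $c_t(x_t,u_t)$ and the proxy cost $f_t(u_{t-H:t}) = c_t(\hat{x}_t, u_t)$ differ by at most $\eps$ per round, hence by at most $\eps T = O(1)$ over any interval. This lets me pass freely between true control costs and the proxy losses $f_t$, and between the best policy $\pi^\star_I \in \Pi$ on an interval and its associated comparator in the proxy loss world (noting that DRC/DAC policies induce a fixed point in $\mathcal{K}$ whose surrogate loss $\tilde f_t$ tracks the policy's cost up to the same $\eps$ slack plus the burn-in from artificially zeroing the state $H$ steps back, which contributes another $O(1)$ via the bounded-memory/geometric-decay assumption).

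Next I would argue that MARC, viewed through this lens, is exactly the meta-algorithm for adaptive regret in OCO with memory applied to the surrogate losses $\tilde f_t(u_t^i)$ of the $N=T$ experts $\mathcal{C}_1,\dots,\mathcal{C}_N$: the weight update in lines 13–14 is the fixed-share / sleeping-experts exponential weighting with learning rate $\eta$ and mixing $\sigma$, and the sampling rule in lines 6–9 is the standard low-switching ("follow the lazy leader"-style) realization of these weights that keeps the number of expert switches small. On any interval $I = [r,s]$, the reference expert $\mathcal{C}_r$ wakes up at time $r$ with state $x^r_r = 0$ and runs the base controller on the same disturbances, so on $I$ it incurs (in proxy-loss terms) at most $\opt_I + \regret_T(\mathcal{C})$, where $\opt_I$ is the best policy's cost on $I$; and $\opt_I \le \opt$. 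The expert-ensemble adaptive regret bound then gives that MARC's expected proxy cost on $I$ exceeds $\mathcal{C}_r$'s by at most $\tilde O\big(\eta^{-1} + \eta \cdot (\text{per-round loss})^2 \cdot |I|\big)$ plus a switching-cost term proportional to the action shift; crucially, first-order (Auer–Cesa-Bianchi–Freund–Schapire-style) tuning of the weights makes this scale with $\sqrt{\opt}$ rather than $\sqrt{T}$, since the bounded losses sum to at most $\opt + \tilde O(\regret_T(\mathcal{C}))$ on the relevant interval.

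The final piece is to account for the memory. Because the loss at time $t$ depends on $u_{t-H:t}$ but MARC's action may switch experts within that window, I would bound the gap between $f_t(u_{t-H:t})$ (played) and $\tilde f_t(u_t^{i_t})$ (the surrogate the weighting sees) by $L$ times the total action movement inside the window — i.e., by $L \cdot \sum_{j=t-H}^{t-1}\|u_{j+1}-u_j\|$ — using coordinate-wise Lipschitzness from Definition \ref{def:base_controller}(ii). Summed over the interval, this is $L H$ times the action shift of MARC, which decomposes into (a) the within-expert action shift, at most $\shift_T(\mathcal{C})$, and (b) the cross-expert jumps caused by switching, which the low-switching sampling scheme bounds by $\tilde O(1)$ switches times the diameter of $\mathcal{K}$, and in expectation by the fixed-share switching rate $\sigma$ — this is where the parameter $\sigma = T^{-1}$ is used to keep the contribution $\tilde O(1)$. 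Choosing $\eta = \tilde{\mathcal{O}}(\shift_T(\mathcal{C})\sqrt{\opt})^{-1}$ balances the $\eta^{-1}$ leading term against the memory-induced $\eta$-dependent term, yielding the claimed $\tilde{\mathcal{O}}(\regret_T(\mathcal{C}) + \shift_T(\mathcal{C})\sqrt{\opt})$.

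I expect the main obstacle to be the memory bookkeeping in the presence of expert switching: carefully showing that the "played vs. surrogate" discrepancy and the cross-expert action-shift contribution are both controlled \emph{simultaneously} by the action shift $\shift_T(\mathcal{C})$ and the switching rate $\sigma$, without a hidden dependence on $T$ that would destroy the first-order $\sqrt{\opt}$ scaling — in particular ensuring the $\eps T = O(1)$ and burn-in terms, the within-window Lipschitz slack, and the expected number of switches all remain poly-logarithmic. The clean statement in the theorem suggests this is exactly the technical content that Section \ref{sec:additional_results} is designed to handle, so I would lean on the OCO-with-memory adaptive regret lemma as a black box and spend the effort on the reduction's two directions (control $\to$ proxy $\to$ surrogate) being tight.
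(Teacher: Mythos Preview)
Your overall plan matches the paper's: reduce to OCO with memory via the base-controller abstraction, invoke the adaptive-regret-with-memory bound (Theorem~\ref{thm:meta}/\ref{thm:adaoco}) on the $N=T$ simulated experts $\mathcal{C}_1,\dots,\mathcal{C}_N$, use the reference expert $\mathcal{C}_r$ on $I=[r,s]$ to compete with $\Pi$, and translate back via $(H,\eps)$-bounded memory. If you genuinely use that OCO-with-memory result as a black box, the proof goes through as you sketch.

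However, your account of the switching mechanism --- which is exactly the technical content of that black box --- is incorrect on two points. First, the expected number of expert switches is \emph{not} $\tilde O(1)$: the shrinking rule keeps $i_{t-1}$ with probability $w_t^{i_{t-1}}/w_{t-1}^{i_{t-1}}$, and summing the complementary probabilities gives $\E[k_I] \le \eta\,\min_i \sum_{t\in I}\tilde f_t(z_t^i) + O(\log(TN))$ (Lemma~\ref{lem:switches}), so the switch count scales like $\eta\cdot\opt$, not like $\sigma$. Second, the within-expert action shift is not a single $\shift_T(\mathcal{C})$: after each switch MARC follows a \emph{different} expert, so each of the $k_I+1$ uninterrupted subintervals contributes up to $\shift_{\mathcal{C}}(|I|)$, and MARC's total action shift is $(2k_I+1)\,\shift_{\mathcal{C}}(|I|)$ (Lemma~\ref{lem:stability}). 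The memory penalty is therefore $H^2 L\,\shift_{\mathcal{C}}\cdot(\eta\,\opt + \log T)$, and it is \emph{this} $\eta$-dependent term (appearing as the multiplicative factor $\gamma_\eta - 1$ in Theorem~\ref{thm:meta}) that is balanced against $\eta^{-1}\log T$ by the stated choice of $\eta$. The parameter $\sigma=T^{-1}$ serves a different purpose: it lower-bounds every expert's weight by $\sigma/N$, which is what makes the per-interval comparison to expert $\mathcal{C}_r$ possible in Lemma~\ref{lem:adaptive}; it does not control the switching rate.
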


\begin{remark}
In \eqref{eq:controlopt_main}, the component $\regret_{T}(\mathcal{C})$ is the supremum of $\regret_{|I|}(\mathcal{C})$ for each $I$. However, the components $\shift_T(\mathcal{C})$ and $\opt$ are global even for local regret bound of $I$. This is due to the parameter choice of $\eta$ which can be done with no prior knowledge of $\opt$ (see Appendix \ref{sec:param_choice}). 
\end{remark}

\begin{remark}
The Theorem \ref{thm:meta_controller_main} bound (\ref{eq:controlopt_main}) hides factors of quantities $H, L$ in the $\tilde{\mathcal{O}}$ notation. This is justified because in the relevant nonstochastic control literature these quantities have been shown to be poly-logarithmic in $T$ (see Appendix \ref{sec:apply_control}).
\end{remark}

Theorem \ref{thm:meta_controller_main} guarantee shows that Algorithm \ref{alg:adaptive_control_main} converts a standard regret bound of a base controller into an adaptive one with an extra additive term as in \eqref{eq:controlopt_main}. Hence, of utmost interest are algorithms from the literature that achieve logarithmic regret (and action shift) \citet{agarwal2019logarithmic, simchowitz2020making} (as illustrated in Corollary \ref{cor:drc-ons}). \\

\begin{proof}[Proof Sketch]
First, notice that each instance of the base controller acts on the same action set sequence, hence the meta-controller's actions also lie within this set sequence. This means that MARC automatically inherits the properties from Definition \ref{def:base_controller}.

Property (i): the meta-controller satisfies the bounded memory definition which ensures we can translate bounds in terms of proxy loss to bounds in terms of control cost (and vice versa). We relate the control cost of the agent to its proxy loss as follows (translating the proxy loss of the best policy to its control cost is analogous): for every $I=[r,s]$, by Definition \ref{def:membounded} and since $\eps = T^{-1}$, we have
\begin{align*}
    \sum_{t=r}^s \E[c_\mathrm{t}(x_\mathrm{t}, u_\mathrm{t})] \leq \sum_{t=r}^s \E[c_\mathrm{t}(\hat{x}_\mathrm{t}, u_\mathrm{t})] + |\mathrm{I}| \eps 
     \leq \sum_{t=r}^s \E[f_\mathrm{t}(u_{\mathrm{t-H:t}})] + 1 
\end{align*}

Property (ii): the Lipschitz property allows us to use our adaptive regret results for functions with memory to the proxy losses $f_t$ (see Theorem \ref{thm:adaoco_informal}). This ensures that on $I=[r,s]$ the proxy loss local regret w.r.t. $\mathcal{C}_r$ is $\tilde{O}(H^2 L \shift_T(\mathcal{C}) \sqrt{\opt})$. Furthermore, the base controller $\mathcal{C}_r$ suffers regret $\regret_{|I|}(\mathcal{C})$ against the best policy in hindsight from $\Pi$. Putting it all together yields \eqref{eq:controlopt_main}.
\end{proof}

\paragraph{Application to DRC-ONS.}The statement in Theorem \ref{thm:meta_controller_main} indicates a general black-box result: a low regret control algorithm with certain regularity assumptions can be turned into one with low \emph{adaptive} regret. We showcase the use of our meta-algorithm MARC on the DRC-ONS \citep{simchowitz2020making} algorithm.
\begin{corollary}[MARC-DRC-ONS]\label{cor:drc-ons}
Let $\mathcal{C}$ be the DRC-ONS algorithm from \citep{simchowitz2020making}. Assume the cost functions $c_t$ are strongly convex. The MARC-DRC-ONS algorithm that performs MARC over $\mathcal{C}$ enjoys the following adaptive regret guarantee w.r.t. the DRC policy class $\Pi_{\mathrm{drc}}$\footnote{See Appendix \ref{sec:policyclasses} for definition.},
\begin{equation}
    \E\left[\aregret_{T}(\text{MARC-DRC-ONS})\right] \leq \widetilde{\mathcal{O}}\left( \sqrt{\opt}   \right) ~.
\end{equation}
\end{corollary}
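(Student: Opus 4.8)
The plan is to obtain the corollary as a direct instantiation of Theorem~\ref{thm:meta_controller_main} with the base controller $\mathcal{C}$ taken to be DRC-ONS from \citet{simchowitz2020making}. Everything then reduces to three checks: (a) DRC-ONS fits the base-controller abstraction of Definition~\ref{def:base_controller} with $\eps = T^{-1}$ and $H, L = \tilde{\mathcal{O}}(1)$; (b) its standard regret against $\Pi_{\mathrm{drc}}$ is poly-logarithmic on every interval, i.e. $\regret_T(\mathcal{C}) = \tilde{\mathcal{O}}(1)$; and (c) its action shift is poly-logarithmic, $\shift_T(\mathcal{C}) = \tilde{\mathcal{O}}(1)$. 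Granting these three facts and plugging into \eqref{eq:controlopt_main},
\[
    \E\left[\aregret_{T}(\text{MARC-DRC-ONS})\right] \leq \tilde{\mathcal{O}}\left(\tilde{\mathcal{O}}(1) + \tilde{\mathcal{O}}(1)\cdot\sqrt{\opt}\right) = \tilde{\mathcal{O}}\left(\sqrt{\opt}\right),
\]
which is exactly the claim. So the whole argument is a bookkeeping exercise once (a)--(c) are in place; the content is in verifying them for this particular $\mathcal{C}$, which is deferred to Appendix~\ref{sec:apply_control}.

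For (a): the DRC parametrization expresses $u_t$ as a linear function of a bounded window of past Nature's-disturbances/observations, and under sequential stability (Assumption~\ref{assmp:seq_stable}) the effect of the truncated tail on the current state decays like $C\rho^{H}$. Choosing memory $H = \Theta(\log(1/\eps)) = \Theta(\log T)$ drives the truncation error below $\eps = T^{-1}$, so the induced action set sequence has $(H,\eps)$-bounded memory in the sense of Definition~\ref{def:membounded}; moreover the proxy state $\hat x_t$ is an explicit bounded linear function of the $H$ most recent DRC parameters with coefficients of size $\mathrm{poly}(C, C_B, W)$, whence (using Assumption~\ref{assmp:control_cost}) the proxy loss $f_t$ is coordinate-wise $L$-Lipschitz with $L = \tilde{\mathcal{O}}(1)$. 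These are precisely the properties established for the nonstochastic control algorithms in Appendix~\ref{sec:apply_control}. For (b): strong convexity of $c_t$ together with the boundedness assumptions transfers to strong convexity (hence exp-concavity on the bounded DRC parameter set) of the surrogate proxy loss $\tilde f_t$; DRC-ONS runs Online Newton Step on $\tilde f_t$, which enjoys $\O(\mathrm{poly}\log T)$ regret in the memory-free surrogate problem, and via the $(H,\eps)$-bounded-memory reduction this yields $\regret_T(\mathcal{C}) = \tilde{\mathcal{O}}(1)$ against $\Pi_{\mathrm{drc}}$ — uniformly over sub-intervals, since the ONS bound holds on every window of play.

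The crux is (c). I would bound $\shift_T(\mathcal{C}) = \sup \sum_t \|z_{t+1}-z_t\|$ for the ONS iterates directly. With the running preconditioner $A_t = \epsilon_0 I + \sum_{s \le t} \nabla_s \nabla_s^\top$, one step satisfies $\|z_{t+1} - z_t\| \le \|A_t^{-1}\nabla_t\|$ up to a projection contraction onto the (bounded) DRC parameter set, and $\|A_t^{-1}\nabla_t\| \le \epsilon_0^{-1/2}\sqrt{\nabla_t^\top A_t^{-1}\nabla_t}$. Summing $\nabla_t^\top A_t^{-1}\nabla_t$ telescopes through the potential $\log\det A_t$ to $\O(\mathrm{poly}\log T)$, but a subsequent Cauchy--Schwarz over $t$ only gives $\sum_t \|z_{t+1}-z_t\| = \tilde{\mathcal{O}}(\sqrt{T})$, which is \emph{not} enough. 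The resolution is to exploit strong convexity: the curvature accumulates so that $A_t \succeq \Omega(\epsilon_0 + \alpha t)\, I$ (after folding in the strongly convex quadratic term, as DRC-ONS does), hence $\|z_{t+1} - z_t\| = \tilde{\mathcal{O}}(1/t)$ per step and $\shift_T(\mathcal{C}) = \sum_t \tilde{\mathcal{O}}(1/t) = \tilde{\mathcal{O}}(1)$. I expect this telescoping-free, $\O(1/t)$ per-step movement bound — stated cleanly as a standalone lemma about ONS/OGD on strongly convex losses and then specialized, with care taken over the projection step — to be the main technical obstacle; it is the ``$\shift_T(\mathcal{C}) = \mathrm{polylog}(T)$'' ingredient alluded to after Theorem~\ref{thm:meta_controller_main}, and once it is in hand the corollary follows immediately from the display above.
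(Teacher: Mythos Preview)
Your high-level plan---instantiate Theorem~\ref{thm:meta_controller_main} with DRC-ONS and verify (a) base-controller properties, (b) polylog regret, and (c) polylog ``stability''---matches the paper's proof of Corollary~\ref{cor:ons} exactly. The divergence is entirely in how you handle (c).

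The paper does \emph{not} bound the action shift $\shift_T(\mathcal{C})$ of DRC-ONS. Instead it introduces a more refined quantity, the \emph{stability gap} $\stabgap_{\mathcal{A}}(I) = \sum_{t\in I}|f_t(z_{t-H:t}) - \tilde f_t(z_t)|$, proves a variant of the meta-theorem phrased in terms of $\stabgap$ rather than $H^2 L\,\shift$ (Theorem~\ref{thm:meta_v2} and Remark~\ref{remark:sg}), and then imports the bound $\stabgap_{\text{DRC-ONS}}(T) = O(\mathrm{poly}\log T)$ directly from \citet{simchowitz2020making} (their Lemma~4.7 with (4.2)--(4.3)). The paper is explicit that this alternative formulation was introduced precisely ``to facilitate the use of newer results'' for DRC-ONS, which strongly suggests that a polylog \emph{action-shift} bound for Semi-ONS is not available off the shelf.

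Your route tries to get $\shift_T(\mathcal{C}) = \tilde{\mathcal{O}}(1)$ by arguing that the Semi-ONS preconditioner satisfies $A_t \succeq \Omega(\alpha t)\,I$ ``after folding in the strongly convex quadratic term.'' This is the weak point. Standard ONS builds $A_t$ from outer products $\nabla_s\nabla_s^\top$, which are rank one and do not by themselves yield $\alpha t\,I$ growth; your claim would require Semi-ONS to explicitly add an $\alpha I$ term per step, and you have not verified that it does (the paper does not say so either). Without that, your Cauchy--Schwarz fallback gives only $\tilde{\mathcal{O}}(\sqrt{T})$ action shift, which is fatal. So either (i) check the Semi-ONS update in \citet{simchowitz2020making} and confirm the $\alpha I$ accumulation really happens---in which case your Euclidean per-step bound $\|z_{t+1}-z_t\| = \tilde{\mathcal{O}}(1/t)$ goes through and you have a genuine alternative proof---or (ii) follow the paper and swap $\shift$ for $\stabgap$, which sidesteps the issue entirely since the polylog stability-gap bound is exactly what Semi-ONS is engineered to deliver.
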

The result above holds for any LTV system that satisfies Assumptions \ref{assmp:noise_bound}, \ref{assmp:seq_stable}, \ref{assmp:control_cost}. Note that the adaptive regret bound $\widetilde{\mathcal{O}}(\sqrt{\opt})$ can be considerably smaller than $\sqrt{T}$ if the LTV system (along with the disturbances and costs) is favorable. More importantly, the bound, and more specifically $\opt$, depends directly on the policy benchmark we are competing against, $\Pi_{\mathrm{drc}}$ in this case. Hence, algorithms that enjoy low regret against larger policy classes automatically enjoy better adaptive regret bounds via MARC. The formal statement and proof of Corollary \ref{cor:drc-ons} and application to other control algorithms can be found in Appendix \ref{sec:control-full}. 

\section{Additional Results in OCO with Memory}\label{sec:additional_results}

The online control results presented in Section \ref{sec:control} hinge on the more general results derived for the setting of OCO with memory which may be of independent interest. We informally state and discuss our results for this setting in this section and defer the formal complete treatment to Appendix \ref{sec:ocom_adareg}. Obtaining adaptive regret in this setting is tricky due to the need to balance agility to environment changes and stability of predictions. Our meta-algorithm effectively does this by combining the FLH approach \citep{hazan2009efficient} with a shrinking technique \citep{geulen2010regret} to maintain prediction stability. This allows for novel adaptive regret results over strongly convex loss functions with memory.

\begin{theorem}\label{thm:adaoco_informal}
Suppose the loss functions $f_t$ have $H$ memory and are coordinate-wise $L$-Lipschitz. Furthermore, assume their surrogate functions $\tilde{f}_t$ are strongly convex. Then, our meta-algorithm with OGD as the base algorithm attains the following adaptive regret bound,
\begin{equation}\label{eq:optregret_main}
  \E \left[\aregret_T\right] \leq \tilde{\mathcal{O}}(H^2 L \cdot \sqrt{OPT}) ~.
\end{equation}
\end{theorem}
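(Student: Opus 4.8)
The meta-algorithm here is the OCO-with-memory analogue of MARC: run $N=T$ copies of online gradient descent (OGD), with copy $i$ playing its initialization until round $i$ and running OGD from round $i$ on, and select which copy to follow via Fixed-Share (the exponential-weights plus $\sigma$-uniform mixing update of Algorithm~\ref{alg:adaptive_control_main}) coupled with the ``sticky'' resampling of \citet{geulen2010regret} that re-draws the followed copy only when its weight drops. The plan is to establish \eqref{eq:optregret_main} by: (a) reducing memory to the surrogate losses at the cost of the meta-algorithm's action shift; (b) splitting the surrogate adaptive regret on any interval through the copy born at that interval's left endpoint; (c) bounding the action shift by the expected number of Fixed-Share switches; and (d) closing a self-referential inequality in $\aregret$ and tuning $\eta$.

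For (a), fix $I=[r,s]$. Coordinate-wise $L$-Lipschitzness of $f_t$ gives $|f_t(z_{t-H:t})-\tilde f_t(z_t)|\le L\sum_{j=1}^H\|z_t-z_{t-j}\|_1 \le LH\sum_{k=0}^{H-1}\|z_{t-k}-z_{t-k-1}\|_1$, and summing over $t\in I$, so that each one-step move is charged at most $H$ times, yields $\sum_{t\in I}f_t(z_{t-H:t})\le\sum_{t\in I}\tilde f_t(z_t)+LH^2\,\mathcal S_I$, where $\mathcal S_I$ is the total consecutive-action movement of the meta-algorithm over $I$ (and the $H$ rounds before it); a fixed comparator contributes no movement, so $\E[\aregret_T]\le \E[\aregret_T^{\,\tilde f}] + LH^2\,\E[\sup_I \mathcal S_I]$. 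For (b), on $I=[r,s]$ write $\sum_{t\in I}\tilde f_t(z_t)-\min_{z^\star}\sum_{t\in I}\tilde f_t(z^\star)$ as the tracking gap $\sum_{t\in I}\tilde f_t(z_t)-\sum_{t\in I}\tilde f_t(z_t^{(r)})$ plus the regret $\sum_{t\in I}\tilde f_t(z_t^{(r)})-\min_{z^\star}\sum_{t\in I}\tilde f_t(z^\star)$ of copy $r$, which has run OGD for exactly $|I|$ rounds. Strong convexity of $\tilde f_t$ makes the latter $\tilde O(1)$ (logarithmic OGD regret). For the former, copy $r$ carries weight at least $\sigma/N$ as soon as it activates, so the standard Fixed-Share / FLH computation bounds the tracking gap (in expectation, using that the played copy has marginal $p_t$) by $\tilde O(1/\eta)+\eta\sum_{t\in I}\langle p_t,\tilde f_t\rangle$; since losses lie in $[0,1]$ the second-order term is at most $\eta$ times the meta-algorithm's expected surrogate loss on $I$. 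Hence, for every $I$, $\E[\sum_{t\in I}\tilde f_t(z_t)]-\min_{z^\star}\sum_{t\in I}\tilde f_t(z^\star)\le \tilde O(1/\eta)+\eta\,\E[\sum_{t\in I}\tilde f_t(z_t)]+\tilde O(1)$.

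Step (c) is where the sticky coupling enters. Between consecutive switches the meta-algorithm tracks one OGD copy, whose movement over any window is $\tilde O(1)$ (step sizes $\Theta(1/(\alpha t))$, telescoped), and at a switch it jumps by at most the domain diameter, so $\E[\mathcal S_I]\le \tilde O(1)\,(\E[\#\text{switches on }I]+1)$. The round-$t$ switch probability is $1-w_t^{i_{t-1}}/w_{t-1}^{i_{t-1}}\le \sigma+\eta\,\tilde f_{t-1}(u_{t-1}^{i_{t-1}})$ (from $1-(1-\sigma)e^{-\eta\ell}\le \sigma+\eta\ell$), so with $\sigma=1/T$ one gets $\E[\#\text{switches on }I]\le 1+\eta\,\E[\sum_{t\in I}\tilde f_t(z_t)]$. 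For (d), let $I^\star$ attain the adaptive regret and $\hat R=\E[\aregret_T]$; since $\tilde f_t\ge 0$ the meta-algorithm's surrogate loss on $I^\star$ is at most $\opt+\hat R$ with $\opt=\min_z\sum_{t=1}^T\tilde f_t(z)$. Substituting (a)--(c) gives
\begin{align*}
\hat R &\;\le\; \tilde O(1/\eta) + \eta\,(\opt+\hat R) + \tilde O(1) + LH^2\,\tilde O(1)\,\big(1+\eta\,(\opt+\hat R)\big).
\end{align*}
Taking $\eta$ small enough that $\tilde O(LH^2)\,\eta\le\tfrac12$ absorbs the $\hat R$ on the right, leaving $\hat R\le \tilde O(1/\eta)+\tilde O(LH^2)+\tilde O(LH^2)\,\eta\,\opt$; choosing $\eta\asymp 1/\sqrt{LH^2\opt}$ (doable without prior knowledge of $\opt$ by the self-confident/doubling tuning referenced after Theorem~\ref{thm:meta_controller_main}) balances the two $\opt$-terms and yields $\hat R\le \tilde O\big(H\sqrt{L\opt}+LH^2\big)\le \tilde O(H^2 L\sqrt{\opt})$, i.e.\ \eqref{eq:optregret_main}. (Carrying the $O(HL)$ Lipschitz constant of $\tilde f_t$ and the strong-convexity constant through the hidden $\tilde O(1)$'s refines the $H,L$ polynomial but not the $\sqrt{\opt}$ rate.)

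The hard part is step (c): controlling the meta-algorithm's action shift. Vanilla Fixed-Share re-draws the followed copy every round and can switch $\Omega(T)$ times, which would make $LH^2\mathcal S_I$ linear in $T$ and collapse the bound; the sticky/shrinking coupling is exactly what forces the expected switch count down to $\tilde O(1+\eta\cdot\text{loss})$. The two delicate points are verifying that this coupling preserves the marginal $p_t$ of the played copy (so that step (b)'s passage from $\langle p_t,\tilde f_t\rangle$ to the realized losses $\tilde f_t(z_t)$ is valid) and establishing the per-round switch-probability estimate above; a secondary subtlety is the $\E[\sup_I(\cdot)]$ versus $\sup_I\E[(\cdot)]$ gap, which I would handle by stating the Fixed-Share bound for the mixture loss uniformly over all intervals and then concentrating.
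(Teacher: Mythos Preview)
Your proposal is essentially correct and follows the same architecture as the paper: reduce memory to surrogate losses via the movement term (Lemma~\ref{lem:memadregret}), split surrogate regret on $I=[r,s]$ through expert $r$ (Lemma~\ref{lem:adaptive} plus the logarithmic scOGD regret of Fact~\ref{claim:scogd}), and bound movement by switches times the per-expert action shift (Lemma~\ref{lem:stability}). The paper's Lemma~\ref{lem:shrinking} is exactly the ``marginals preserved'' fact you flag as delicate in (c).

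The one substantive difference is how the switch count is bounded. The paper (Lemma~\ref{lem:switches}) sums the switching probabilities via $q_{t+1}\le 1-W_{t+1}/W_t$, telescopes to $\log(W_r/W_{s+1})$, and bounds this by $\eta\min_i\sum_{t\in I}\tilde f_t(z_t^i)+\log(TN)+2$ --- i.e.\ directly in terms of the \emph{best expert's} loss, with no self-reference. Your per-round estimate $1-w_t^{i_{t-1}}/w_{t-1}^{i_{t-1}}\le\sigma+\eta\tilde f_{t-1}(z_{t-1})$ instead yields the \emph{realized} loss $\E[\sum_{t\in I}\tilde f_t(z_t)]$, and your step (d) closes a self-referential inequality. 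This is valid in spirit, but the specific claim in (d) that the surrogate loss on $I^\star$ is at most $\opt+\hat R$ is not quite right: $\hat R$ is defined on the memory losses $f_t$, not on $\tilde f_t$, so the inequality does not follow from nonnegativity alone. The clean fix is to drop the self-reference: rearranging your step (b) as $(1-\eta)\,\E[\sum_{t\in I}\tilde f_t(z_t)]\le \opt+\tilde O(1/\eta)$ already gives $\E[\sum_{t\in I}\tilde f_t(z_t)]\le 2\opt+\tilde O(1/\eta)$ for $\eta\le 1/2$, and substituting this directly into (b) and (c) reaches the same final bound without circularity. This is effectively what the paper does by bounding everything through $\min_i\sum_{t\in I}\tilde f_t(z_t^i)\le\opt+\reg_{\text{scOGD}}(|I|)$.

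On the $\E[\sup_I]$ versus $\sup_I\E$ issue you raise: the paper's formal version (Theorem~\ref{thm:adaoco}) is in fact stated as ``for all $I=[r,s]$, $\sum_{t\in I}\E[f_t(z_{t-H:t})]\le\ldots$'', i.e.\ $\sup_I\E$, and the conclusion explicitly leaves open lifting this to high probability without losing the first-order rate. So your acknowledged gap there matches the paper's scope.
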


We also provide an efficient variant of the algorithm with only an additional $O(\log T)$ multiplicative factor on the bound in \eqref{eq:optregret_main}. The idea behind the efficient implementation is to avoid the redundancy of maintaining all the experts and instead only consider a sparse working set: update the weights of these {\it active} experts separately, and handle the {\it inactive} expert weights in a collective way. This ensures that we only incur $O(\log T)$ computational overhead over efficient non-adaptive methods such as OGD.

Henceforth, we show that it is impossible to achieve $o(\sqrt{T})$ adaptive regret for functions with memory even under the assumption of strong convexity. Recall that, given Lipschitz loss and strongly convex surrogate loss, $O(\log T)$ logarithmic {\it standard} regret is attained for functions with memory \citep{anava2015online}. Furthermore, for strongly convex losses with {\it no} memory ($H = 0$) adaptive (poly-)logarithmic regret is also attained via the FLH method \citep{hazan2009efficient}. However, these two cannot be combined together to attain adaptive regret bounds less than $\sqrt{T}$.

\begin{theorem}\label{thm:lower_informal}
Assume the losses $f_t$ have $H>0$ memory and their surrogate losses are strongly convex. For any online algorithm $\mathcal{A}$, there exists a sequence of loss functions s.t.
\begin{equation}\label{eq:lower}
    \aregret_T(\A) = \Omega(\sqrt{T}) ~.
\end{equation}
\end{theorem}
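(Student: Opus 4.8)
The plan is to construct, for every $T$, a single deterministic one-dimensional instance on which every (possibly randomized) learner incurs $\Omega(\sqrt T)$ adaptive regret. I would take $\K=[-\tfrac12,\tfrac12]$, memory $H=1$ (for larger $H$ simply let $f_t$ ignore its first $H-1$ arguments), and a two-phase schedule: $f_t(z_{t-1},z_t)=\tfrac12(z_t-\theta_t)^2+T\,(z_t-z_{t-1})^2$ with $\theta_t=-\tfrac12$ for $t\le T/2$ and $\theta_t=+\tfrac12$ for $t>T/2$. Each $f_t$ is jointly convex and its surrogate $\tilde f_t(z)=\tfrac12(z-\theta_t)^2$ is $1$-strongly convex, so the hypotheses hold; note that the memory coefficient — hence the coordinate-wise Lipschitz constant — scales with $T$, which is exactly why this does not contradict the $L$-dependent upper bound of Theorem~\ref{thm:adaoco_informal}. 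On any interval contained in a single phase the comparator pays zero, attained at that phase's target.

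The heart of the proof will be a dichotomy capturing the conflict between agility and stability. Fixing the learner's realized trajectory $z_1,\dots,z_T$ and setting $\ell=\tfrac12\sqrt T$, I would argue that one of the following always occurs: (A) some length-$\ell$ interval inside $[1,T/2]$ has $z_t>-\tfrac14$ throughout, in which case the first term of $f_t$ contributes more than $\tfrac1{32}$ per round and the learner's loss on that interval exceeds $\tfrac{\ell}{32}=\Omega(\sqrt T)$ against a zero comparator; (B) the mirror event holds on $[T/2+1,T]$, again yielding $\Omega(\sqrt T)$; or (C) neither. In case (C), negating (A) on the last $\ell$ rounds of phase one produces a round $\sigma\le T/2$ with $z_\sigma\le-\tfrac14$, and negating (B) on the first $\ell$ rounds of phase two produces a round $\rho>T/2$ with $z_\rho\ge\tfrac14$; since $\rho-\sigma<2\ell$, the learner moves by at least $\tfrac12$ within fewer than $2\ell$ steps, so Cauchy--Schwarz gives $\sum_{t=\sigma+1}^{\rho}(z_t-z_{t-1})^2\ge\tfrac{(1/2)^2}{2\ell}$ and the memory term alone charges at least $\tfrac{T}{8\ell}=\tfrac{\sqrt T}{4}$ on $I=[\sigma+1,\rho]$. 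Because $I$ straddles the phase boundary the comparator on $I$ is nonzero, but the choice $z^\star=0$ bounds it by $|I|/8<\tfrac{\ell}{4}=\tfrac{\sqrt T}{8}$, so the regret on $I$ is still at least $\tfrac{\sqrt T}{8}$. Since in every case some interval witnesses $\Omega(\sqrt T)$ regret for every realization of the trajectory, taking expectations yields $\E[\aregret_T(\A)]=\Omega(\sqrt T)$.

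I expect the delicate part to be case (C): one must choose $\ell$ large enough that the short straddling interval still forces a large jump, yet small enough that the nonzero comparator cost on that interval stays lower-order, and this two-sided squeeze is what forces $\ell=\Theta(\sqrt T)$ together with a memory coefficient of order $T$. A secondary subtlety is ruling out ``pre-positioning'' — the learner might try to hedge by sitting near $0$ during phase one or drifting toward $+\tfrac12$ before the switch — but any such behavior exhibits a long phase-one interval on which $z_t$ is bounded away from $-\tfrac12$, so it is already absorbed into event (A) (or into the intra-phase-one jump analysis of (C)). The randomized case needs no extra work: the dichotomy applies pointwise to each realized trajectory, so the bound survives taking expectation.
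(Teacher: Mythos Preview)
Your argument is correct as stated: the three-case dichotomy is exhaustive, the Cauchy--Schwarz step in case~(C) is sound, and the comparator bound on the straddling interval via $z^\star=0$ does leave an $\Omega(\sqrt T)$ gap. So you do prove the theorem as written.

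However, your route differs substantially from the paper's, and the paper's construction is meaningfully stronger. The paper first isolates a tradeoff lemma: any algorithm with adaptive regret $R$ on strongly convex (memoryless) losses must have action shift $\Omega(T/R)$, proved by alternating the target between $\pm1$ in $\Theta(T/R)$ blocks of size $\Theta(R)$ and forcing the iterate near each block's minimizer. It then takes $f_t(z_{t-1},z_t)=\tilde f_t(z_t)+|z_t-z_{t-1}|$, so the memory penalty \emph{equals} the action shift, and concludes $\aregret_T \ge \max\{R,\Omega(T/R)\}=\Omega(\sqrt T)$. The crucial point is that this memory term has a $T$-independent Lipschitz constant, so the lower bound holds with $L=O(1)$ and genuinely matches the $\tilde O(H^2 L\sqrt{\opt})$ upper bound of Theorem~\ref{thm:adaoco_informal}. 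Your construction, by contrast, needs the coefficient $T$ on $(z_t-z_{t-1})^2$ to make the Cauchy--Schwarz step bite, giving $L=\Theta(T)$; you note this yourself, and it means your instance does not establish tightness against the upper bound. What your approach buys is a simpler, self-contained, algorithm-oblivious two-phase instance with a direct case analysis; what the paper's approach buys is the stronger constant-$L$ statement and a reusable adaptive-regret--versus--action-shift lemma of independent interest.
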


The key insight behind this lower bound is that in the standard OCO setting with {\it no} memory, adaptive regret guarantees necessarily mean large action shift. More specifically, we show that any algorithm with $R = o(T)$ adaptive regret needs to have large action shift $\Omega(T/R)$. We construct a loss function sequence that drives the algorithm to $\pm 1$ alternatively given the adaptive regret bound and obtain the lower bound on action shift. This tradeoff deters loss functions with memory to exhibit low adaptive regret despite the strong convexity assumption.


\section{Experimental Results} \label{sec:experiments}

To show the applicability of our algorithm in more realistic (and harder) scenarios, we consider the control of a nonlinear system via iterative linearization as detailed in the introduction. 
We experiment on the inverted pendulum environment, a commonly used benchmark consisting of a nonlinear and unstable system, popularized by OpenAI Gym \citep{brockman2016openai}.

For this system, the state consists of the deviation angle $\theta$ and the rotational velocity $\dot{\theta}$, while the action corresponds to the applied torque $\ddot{\theta}$. The objective is to balance the inverted pendulum by applying torque that will stabilize it in a vertically upright position. At each timestep, the learner incurs a cost of $\theta^2 + 0.1 \cdot \dot{\theta}^2 + 0.001 \cdot \ddot{\theta}^2$, where $\theta$ is normalized between $-\pi$ and $\pi$. We first experiment with the original noiseless system. We conduct a second experiment in which we introduce a sinusoidal shock in the middle of the run, i.e. for timesteps $ t\in [T/3, 2T/3]$ an adversarial perturbation of $0.3*\sin(t/2\pi)$ is added to the state.



\begin{figure}[H]
\centering
\includegraphics[width=66.2mm]{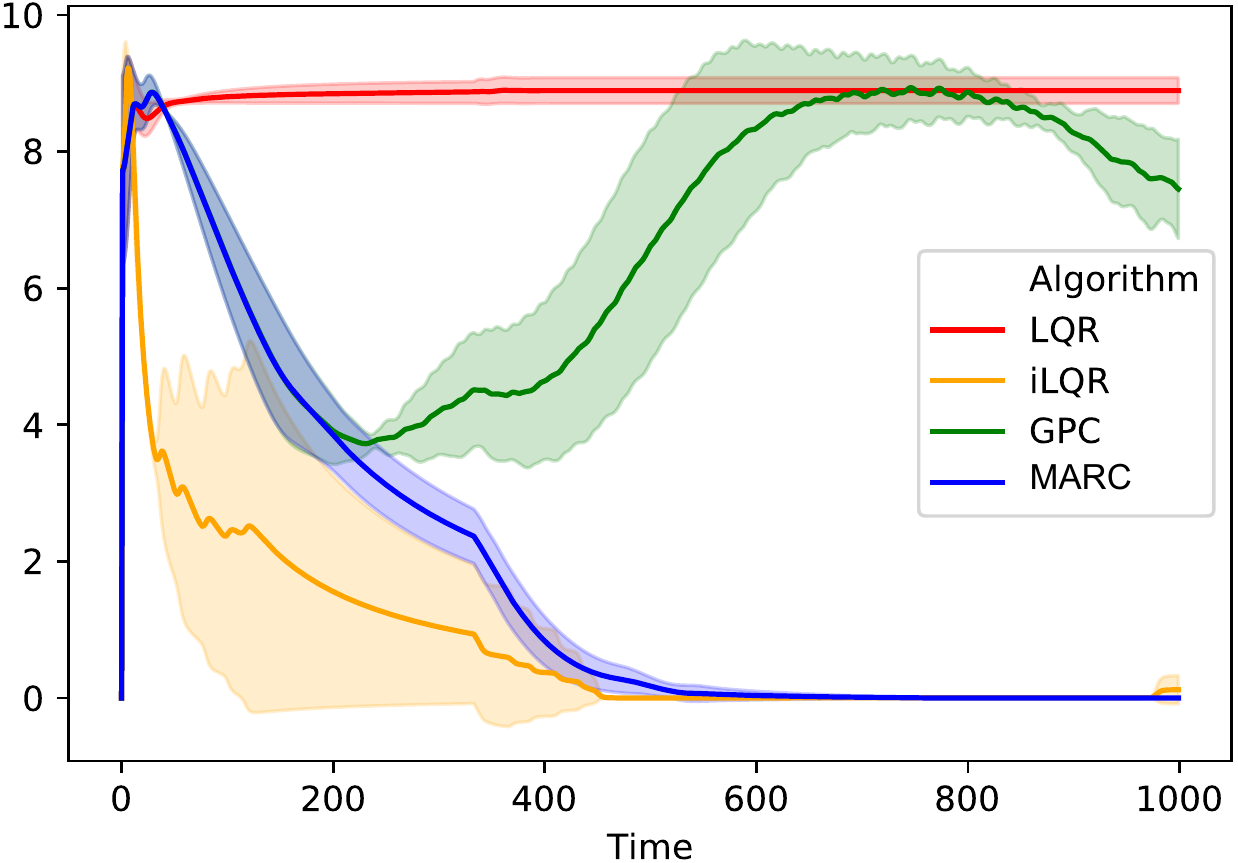}
\includegraphics[width=69mm]{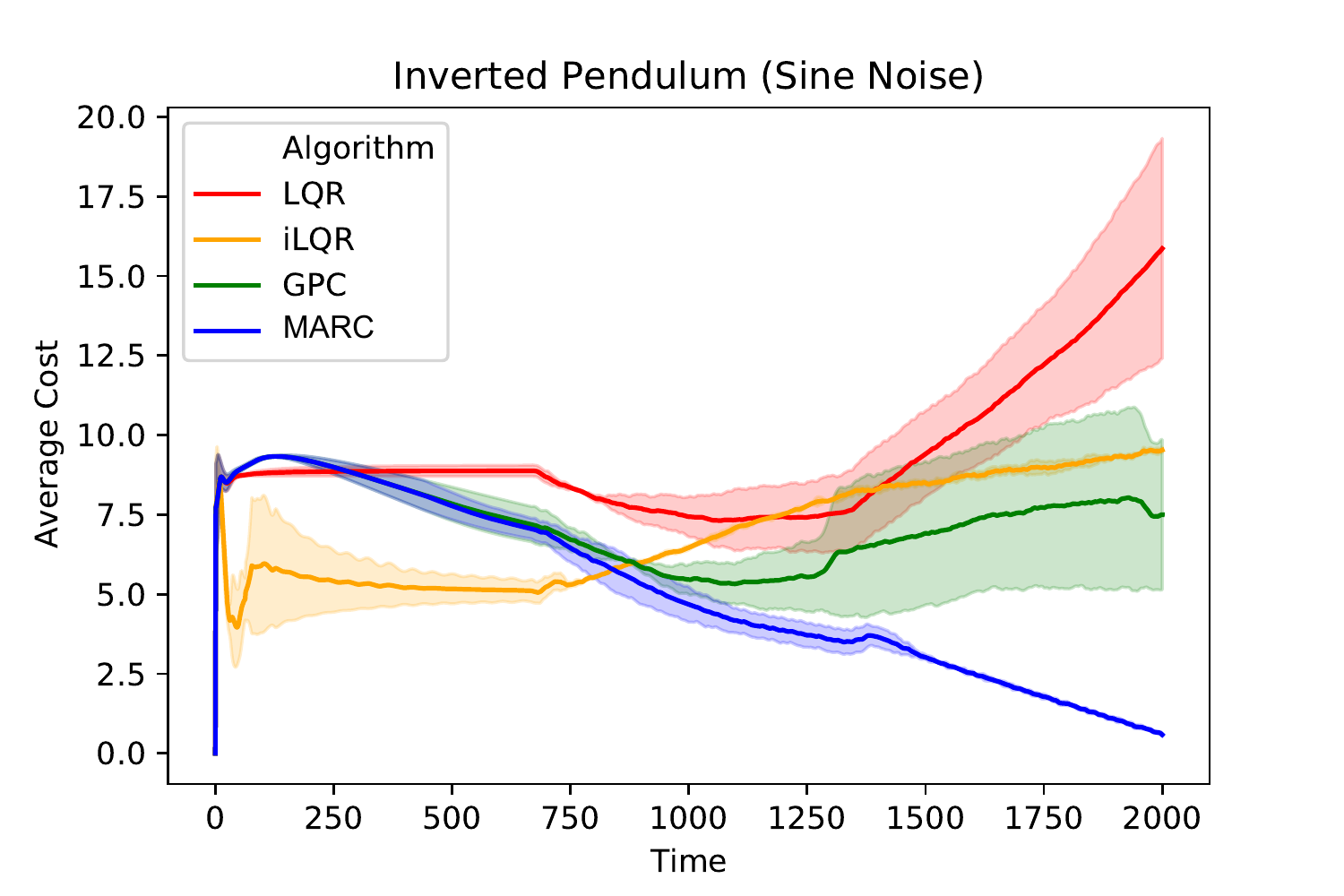}
\caption{Comparison of $T/3$-window averaged costs on a noiseless pendulum environment (left) and a pendulum environment experiencing a midway sinusoidal shock (right).} \label{pendulum}
\end{figure}

For MARC, we implement the efficient version of Algorithm \ref{alg:adaptive_control_main} using the GPC algorithm from \citet{agarwal2019logarithmic} as the base controller. As sanity checks, we compare our performance to GPC and to the linear controller LQR which acts according to the algebraic Ricatti equation computed at the start of the experiment. More relevantly, we also compare against iLQR, a planning method for non-linear control via iterative linearization, with the improvements described in \citet{iLQR}.

In the left plot of Figure \ref{pendulum}, we see that our method enables a controller originally developed for linear systems (GPC) to be used to solve this harder, nonlinear task, only slightly slower than the iLQR baseline. In the right plot, we see that iLQR is unable to adapt to an unanticipated shock due to its static and environment-agnostic design. Yet, our controller MARC demonstrates its robustness to the adversarial noise, and succeeds at this new harder task. More generally, we see that our algorithm works well in the setting of nonlinear control via iteratize linearization. These results confirm that the proposed approach is highly promising even from a practical standpoint, and provides a viable alternative to the classic planning approach.

We remark that there are numerous other planning methods that are robust to noise (see extensive survey \cite{MBBL}), which we did not evaluate: our experiments are geared to demonstrate the advantage of {\it  online control}. These methods do not need to know the dynamics ahead of time, and are less computation intensive than planning methods, as they consist of iterative gradient-based updates.

\section{Conclusion}\label{sec:conclusion}

We considered the control of time-varying linear dynamical systems from the perspective of online learning. Using tools from the theory of adaptive regret, we devise new efficient algorithms with provable guarantees in both online control and online prediction: they attain near-optimal \emph{first-order} regret bounds on any interval in time. 


In terms of future directions and open problems, it is interesting to extend our results to \emph{strongly} adaptive regret: in particular, it is interesting to answer the question whether strongly adaptive first-order regret, i.e. depending on the optimal cost per interval, can be achieved. This cannot be done trivially by the approach of \cite{daniely2015strongly} and answering this question would resolve optimality in this setting given our lower bound. The provided expected regret results can be stated with high probability using standard techniques with an additional $\sqrt{T}$ term in the regret. However, obtaining high probability bounds without impeding the first-order regret bound is quite more challenging and of independent interest to attain.

Finally, our guarantees hold w.r.t. adaptive, rather than oblivious adversaries, which is crucial for nonlinear control. It is interesting to map out which properties of nonlinear dynamics allow effective control via the  LTV approximation.

\newpage

\bibliographystyle{abbrvnat}
\bibliography{ref}

\newpage

\appendix

\tableofcontents

\newpage

\section*{Notation}
Throughout this work, we use $[n] = [1, n]$ as a shorthand. The norm $\| \cdot \|$ refers to Euclidean/spectral norms unless stated otherwise. We denote $\rho(\cdot)$ to be the spectral radius of a matrix.

The notation $D$ refers to the diameter of the convex constraint set $\mathcal{K}$ and $G$ refers to the gradient norm bound of functions $\tilde{f}_t$ in the OCO (with memory) setting.

For an online algorithm $\mathcal{A}$, by abuse of notation and for ease of algebraic manipulations, denote $\mathcal{R}_{\mathcal{A}}(T)$ to be the regret, $\mathcal{S}_{\mathcal{A}}(T)$ to be the action shift, and $\text{Ad}\mathcal{R}_{\mathcal{A}}(T)$ to be the adaptive regret of the algorithm over a time interval of length $T$. The same notation applies to a controller $\mathcal{C}$.

The Big-Oh notation $O(\cdot)$ hides problem and absolute constants, with $o(\cdot), \Omega(\cdot), \Theta(\cdot)$ following suit, $\tilde{O}(\cdot)$ hides terms poly-logarithmic in $T$, $\mathcal{O}(\cdot)$ hides all irrelevant problem parameters (anything not depending on $T$), and $\text{poly}(\cdot)$ denotes a function with polynomial complexity over its argument.

\section{Adaptive Regret Results for OCO with Memory}\label{sec:ocom_adareg}

\subsection{Adaptive regret for functions with memory} \label{sec:ada_regret}
In this section, we provide a new meta-algorithm for OCO with memory with adaptive regret guarantees over bounded loss functions. The main benefit of the algorithm is its low action shift due to the shrinking technique \citep{geulen2010regret}. This allows for novel adaptive regret results over (strongly convex) functions with memory. The meta-algorithm, essentially an expert switching scheme, is presented in detail in Algorithm \ref{alg:adaptive_reg}.

\begin{algorithm}
\caption{Meta Adaptive Regret Algorithm (MARA)}\label{alg:adaptive_reg}
\begin{algorithmic}[1]
\STATE \textbf{Input:} action set $\mathcal{K}$, number of rounds $T$, $N$ online algorithms $\mathcal{A}_1,...,\mA_N$, parameters $\eta, \sigma$
\STATE \textbf{Setup:} pick arbitrary $z_0 \in \mathcal{K}$, assign $w_1^i = 1$ and denote feedback $\mathcal{F}_0^i = \{z_0\}$, $\forall i \in [N]$

\FOR{$t = 1, ..., T$}
\STATE compute $z_t^i = \mathcal{A}_i(\mathcal{F}_{t-1}^i)$ for all $i \in [N]$, denote $W_t = \sum_{i=1}^N w_t^i$
\IF{$t = 1$}
    \STATE choose $i_t = i$ w.p. $p_t^i = w_t^i / W_t$ for all $i \in [N]$
\ELSE
    \STATE keep $i_t = i_{t-1}$ w.p. $w_t^{i_{t-1}} / w_{t-1}^{i_{t-1}}$, o.w. choose $i_t = i$ w.p. $p_t^i = w_t^i / W_t$ for all $i \in [N]$
\ENDIF
\STATE play $z_t = z_t^{i_t}$, suffer loss $f_t(z_{t-H:t})$, observe $\tilde{f}_t(\cdot)$ and compute $\tilde{f}_t(z_t^i)$ for all $i \in [N]$
\STATE compute $\overline{w}_{t+1}^i = w_t^i e^{-\eta \tilde{f}_t(z_t^i)}$ and $w_{t+1}^i = (1-\sigma) \overline{w}_{t+1}^i + \sigma \overline{W}_{t+1} / N$ for all $i \in [N]$ with $\overline{W}_{t+1} = \sum_{i=1}^N \overline{w}_{t+1}^i$
\STATE update $\mathcal{F}_t^i = \mathcal{F}_{t-1}^i \cup \{ z_t^i, \tilde{f}_t \}$ for all $i \in [N]$
\ENDFOR
\end{algorithmic}
\end{algorithm}

\begin{theorem}\label{thm:meta}
Let $f_t$ be coordinate-wise $L$-Lipschitz with $H$ memory and range $[0,1]$. For $\sigma=1/T$, Algorithm \ref{alg:adaptive_reg} achieves the following bound for any experts $\{\mA_i\}$ over any interval $I = [r, s] \subseteq [T]$:
\begin{equation*}
    \sum_{t=r}^s \E[f_t(z_{t-H:t})] \leq \gamma_{\eta} \min_{i \in [N]} \sum_{t=r}^s \tilde{f}_t(z_t^i) + \left( \frac{4}{\eta} + 5 H^2 L \mathcal{S}_{\mA}(|I|) \right) \log (T N) 
\end{equation*}
where we denote $\mathcal{S}_{\mA}(T) = \max\limits_{i \in [N]} \mathcal{S}_{\mA_i}(T)$ and $\gamma_\eta = 1 + 4 H^2 L \mathcal{S}_{\mA}(|I|) \eta$.
\end{theorem}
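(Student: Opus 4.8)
The plan is to split the left-hand side into three contributions and recombine: (a) a \emph{memory reduction} that passes from the with-memory losses $f_t(z_{t-H:t})$ to the surrogate losses $\tilde f_t(z_t)$ at the price of the total movement of the played sequence; (b) a standard fixed-share / tracking-the-best-expert bound applied to $\tilde f_t$ viewed as ordinary OCO losses in $[0,1]$; and (c) a bound on the \emph{expected} movement of the random played sequence $z_t = z_t^{i_t}$, obtained from the shrinking coupling in the sampling rule. The multiplicative factor $\gamma_\eta$ will emerge from (c): the expected number of times the active expert switches on $I$ is controlled by a term proportional to $\eta$ times the cumulative loss, and folding this into (a) produces the factor $1 + O(\eta H^2 L \mathcal{S}_{\mA}(|I|))$. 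Note strong convexity is not used here — the statement is black-box over arbitrary experts $\{\mA_i\}$.

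First, for the memory reduction: since $f_t$ is coordinate-wise $L$-Lipschitz with $H$ memory, $|f_t(z_{t-H:t}) - \tilde f_t(z_t)| \le L \sum_{j=1}^{H}\|z_{t-j}-z_t\| \le LH \sum_{m=t-H}^{t-1}\|z_{m+1}-z_m\|$, and summing over $t \in I$ gives $\sum_{t\in I} f_t(z_{t-H:t}) \le \sum_{t\in I}\tilde f_t(z_t) + LH^2 M_I$ up to $O(H)$ boundary terms, where $M_I$ is the total movement of the played sequence on (a slight left-enlargement of) $I$. I then rewrite $M_I$ as a sum over \emph{epochs} — maximal blocks of consecutive rounds on which $i_t$ is constant — of the within-epoch movement of the corresponding expert, plus the jumps at switch points. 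The within-epoch movement of a single expert over a block of length $\le |I|$ is at most $\mathcal{S}_{\mA}(|I|)$, and each jump between two experts' iterates is at most the diameter $D$, which is itself $\le \mathcal{S}_{\mA}(|I|)$ since even one (projected) step of an expert can traverse $\mathcal{K}$. Hence $M_I \le (1 + 2K_I)\,\mathcal{S}_{\mA}(|I|)$, where $K_I$ is the number of switches inside $I$. The $O(H)$ boundary overhang is absorbed into the stated constants via $D \le \mathcal{S}_{\mA}(|I|)$.

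Second, the weight bookkeeping in Algorithm~\ref{alg:adaptive_reg} is exactly fixed-share exponential weights on $\tilde f_t \in [0,1]$. Two structural facts drive the analysis: $W_t = \sum_i w_t^i$ is non-increasing (because $\bar w_{t+1}^i = w_t^i e^{-\eta \tilde f_t(z_t^i)} \le w_t^i$ and the mixing step preserves total mass, so $W_{t+1}=\bar W_{t+1}$), and consequently every expert retains relative weight $p_t^i \ge \sigma/N$ at every round. The coupling in the sampling rule keeps the marginal law of $i_t$ equal to $p_t$, so the usual potential argument on $\log(W_{s+1}/W_r)$ gives the tracking bound $\sum_{t\in I}\E[\tilde f_t(z_t)] \le (1+O(\eta))\min_i \sum_{t\in I}\tilde f_t(z_t^i) + O(\tfrac{1}{\eta}\log(TN))$ for $\sigma = 1/T$ — the ``$H=0$'' content of the statement. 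Third, for the switch count: conditioned on the past, a switch at round $t$ occurs with probability $\bigl(1 - w_t^{i_{t-1}}/w_{t-1}^{i_{t-1}}\bigr)_+$, and since $w_t^i/w_{t-1}^i \ge (1-\sigma) e^{-\eta\tilde f_{t-1}(z_{t-1}^i)}$ this is at most $\sigma + \eta\tilde f_{t-1}(z_{t-1}^i)$; summing over $t\in I$ and taking expectations yields $\E[K_I] \le \sigma|I| + \eta\sum_{t\in I}\E[\tilde f_t(z_t)] \le 1 + \eta\sum_{t\in I}\E[\tilde f_t(z_t)]$.

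Finally I combine: plugging the tracking bound of step two into the switch bound of step three gives $\E[K_I] \le 2 + \eta\,\opt_I + O(\log(TN))$ with $\opt_I = \min_i \sum_{t\in I}\tilde f_t(z_t^i)$; feeding this into $\E[M_I] \le (1+2\E[K_I])\mathcal{S}_{\mA}(|I|)$, then into the memory-reduction inequality, and once more invoking the tracking bound for $\sum_{t\in I}\E[\tilde f_t(z_t)]$, yields $\sum_{t\in I}\E[f_t(z_{t-H:t})] \le \bigl(1 + O(\eta H^2 L\mathcal{S}_{\mA}(|I|))\bigr)\opt_I + O\!\bigl((\tfrac{1}{\eta} + H^2 L\,\mathcal{S}_{\mA}(|I|))\log(TN)\bigr)$, which is the claimed bound after bookkeeping the constants. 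I expect the main obstacle to be step three together with the epoch accounting in step two: one must run the shrinking-type potential/coupling argument through the fixed-share update — which, unlike a pure shrinking scheme, can also \emph{increase} weights via the $\sigma$-mixing, so the keep-probability must be truncated at $1$ and the $\sigma$-dilution tracked carefully — and, more delicately, every quantity must be localized to $I$, i.e. the expert action shift must enter as $\mathcal{S}_{\mA}(|I|)$ rather than $\mathcal{S}_{\mA}(T)$, which forces the epoch/jump decomposition and the boundary overhang to be handled relative to $I$ alone.
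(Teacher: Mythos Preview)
Your proposal is correct and follows essentially the same architecture as the paper: memory reduction via coordinate-wise Lipschitzness (the paper's Lemma~\ref{lem:memadregret}), the fixed-share tracking bound on the surrogate losses together with the observation that the marginal of $i_t$ remains $p_t$ (Lemmas~\ref{lem:shrinking} and~\ref{lem:adaptive}), the epoch/jump decomposition $M_I \le (1+2K_I)\mathcal{S}_{\mA}(|I|)$ under the simplifying assumption $D \le \mathcal{S}_{\mA}(|I|)$ (Lemma~\ref{lem:stability}), and the final recombination.

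The one place you diverge is in bounding $\E[K_I]$. You bound the per-round switch probability conditionally by $\sigma + \eta\tilde f_{t-1}(z_{t-1})$, sum to obtain $\E[K_I] \le 1 + \eta\sum_{t\in I}\E[\tilde f_t(z_t)]$, and then plug the tracking bound back in to replace the algorithm's loss by $\opt_I$. The paper instead aggregates the switch probability across experts as $q_{t+1} \le 1 - W_{t+1}/W_t$, telescopes to $\E[k_I] \le \log(W_r/W_{s+1})$, and observes that this very potential was already bounded by $\eta\,\opt_I + \log(TN) + 2$ inside the proof of the tracking lemma. The paper's route lands on $\opt_I$ directly without the extra substitution step and avoids the per-expert truncation issue you flag (the aggregate $1 - W_{t+1}/W_t$ is automatically nonnegative since total mass is nonincreasing); your route is more local and does not require revisiting the potential argument. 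Both yield the same constants up to the stated simplifying assumptions $\eta \le 1/2$, $H^2 L\,\mathcal{S}_{\mA}(|I|) \ge 1$, and $\max(H,2) \le \log(TN)$.
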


The most interesting application of this theorem is for strongly convex functions, as given below. 

\begin{theorem}\label{thm:adaoco}
Let $\tilde{f}_t$ be strongly convex with bounded gradient norms. Take $N = T$ with $\mA_i$ starting $\mA_{scOGD}$ (see Fact \ref{claim:scogd}) with initial point $z_0$ at time $t=i$ while outputting constant $z_0$ at times $t < i$ for each $i \in [N]$. Assign the parameter value $\eta = (4 H^2 L \mathcal{S}_{\mA}(T) \log T)^{-1}$ to get
\begin{equation}\label{eq:logfactor}
    \forall I = [r, s] \subseteq [T], \quad \sum_{t=r}^s \E[f_t(z_{t-H:t})] \leq \left( 1 + \frac{1}{\log T} \right) \min_{z \in \mathcal{K}} \sum_{t=r}^s \tilde{f}_t(z) + O(H^2 L \log^3 T) ~.
\end{equation}
Furthermore, if $\text{OPT} = \min\limits_{z \in \mathcal{K}} \sum_{t=1}^T \tilde{f}_t(z)$ is known, then with $\eta = (4 H^2 L \mathcal{S}_{\mathcal{A}}(T) \sqrt{OPT})^{-1}$, we get 
\begin{equation}\label{eq:optregret}
   \forall I = [r, s] \subseteq [T], \quad \sum_{t=r}^s \E[f_t(z_{t-H:t})] \leq \min_{z \in \mathcal{K}} \sum_{t=r}^s \tilde{f}_t(z) + \tilde{\O}(H^2 L \cdot \sqrt{OPT}) ~.
\end{equation}
\end{theorem}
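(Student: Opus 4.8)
The plan is to derive Theorem~\ref{thm:adaoco} as a corollary of the master bound in Theorem~\ref{thm:meta}, by making the two parameter choices for $\eta$ explicit and plugging in the known regret and action-shift guarantees of $\mathcal{A}_{scOGD}$. First I would recall the structure of Theorem~\ref{thm:meta}: on any interval $I = [r,s]$ it bounds $\sum_{t=r}^s \E[f_t(z_{t-H:t})]$ by $\gamma_\eta \min_{i}\sum_{t=r}^s \tilde f_t(z_t^i) + \big(\tfrac{4}{\eta} + 5H^2 L\, \mathcal{S}_{\mathcal{A}}(|I|)\big)\log(TN)$ with $\gamma_\eta = 1 + 4H^2 L\,\mathcal{S}_{\mathcal{A}}(|I|)\eta$. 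The key observation is that the expert index $i = r$ is legitimate: the expert $\mathcal{A}_r$ starts a fresh copy of scOGD exactly at time $t=r$, so $\min_i \sum_{t=r}^s \tilde f_t(z_t^i) \le \sum_{t=r}^s \tilde f_t(z_t^r)$, and by the standard regret guarantee for strongly convex OCO (Fact~\ref{claim:scogd}) this is at most $\min_{z\in\mathcal{K}}\sum_{t=r}^s \tilde f_t(z) + O(\log|I|) = \min_{z\in\mathcal{K}}\sum_{t=r}^s \tilde f_t(z) + O(\log T)$. Also $\mathcal{S}_{\mathcal{A}}(|I|) \le \mathcal{S}_{\mathcal{A}}(T) = O(\log T)$ for scOGD (its step sizes are $1/(\alpha t)$, so consecutive moves are $O(1/t)$ and telescope to $O(\log T)$ — I would cite this from Fact~\ref{claim:scogd} rather than reprove it).

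Next I would carry out the two substitutions. For \eqref{eq:logfactor}, set $\eta = (4H^2 L\,\mathcal{S}_{\mathcal{A}}(T)\log T)^{-1}$. Then $4H^2 L\,\mathcal{S}_{\mathcal{A}}(|I|)\eta \le 4H^2 L\,\mathcal{S}_{\mathcal{A}}(T)\eta = 1/\log T$, so $\gamma_\eta \le 1 + 1/\log T$, giving the claimed multiplicative factor. For the additive term, $\tfrac{4}{\eta} = 16 H^2 L\,\mathcal{S}_{\mathcal{A}}(T)\log T = O(H^2 L \log^2 T)$ using $\mathcal{S}_{\mathcal{A}}(T) = O(\log T)$, and $5H^2 L\,\mathcal{S}_{\mathcal{A}}(|I|) = O(H^2 L\log T)$; multiplying by $\log(TN) = \log(T^2) = O(\log T)$ yields $O(H^2 L\log^3 T)$. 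Combining with the $\gamma_\eta \cdot O(\log T)$ contribution from replacing $\min_i\sum \tilde f_t(z_t^i)$ by $\min_z \sum \tilde f_t(z) + O(\log T)$ — which is absorbed into the same $O(H^2 L\log^3 T)$ since $\gamma_\eta \le 2$ — gives exactly \eqref{eq:logfactor}.

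For \eqref{eq:optregret}, set $\eta = (4H^2 L\,\mathcal{S}_{\mathcal{A}}(T)\sqrt{\mathrm{OPT}})^{-1}$. Now $\gamma_\eta \le 1 + 4H^2 L\,\mathcal{S}_{\mathcal{A}}(T)\eta = 1 + 1/\sqrt{\mathrm{OPT}}$, so the multiplicative overhead contributes an extra $\tfrac{1}{\sqrt{\mathrm{OPT}}}\min_z\sum_{t=r}^s\tilde f_t(z) \le \tfrac{1}{\sqrt{\mathrm{OPT}}}\cdot\mathrm{OPT} = \sqrt{\mathrm{OPT}}$ (bounding the interval optimum by the global one, valid since $\tilde f_t \ge 0$ — here I would note we only need $f_t$ range $[0,1]$, which forces $\tilde f_t \ge 0$); this is the source of the $\sqrt{\mathrm{OPT}}$ term. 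The additive term $\big(\tfrac{4}{\eta} + 5H^2 L\,\mathcal{S}_{\mathcal{A}}(|I|)\big)\log(TN)$ becomes $O\big(H^2 L\,\mathcal{S}_{\mathcal{A}}(T)\sqrt{\mathrm{OPT}}\big)\log(T^2) + O(H^2 L\log^2 T) = \tilde{O}(H^2 L\sqrt{\mathrm{OPT}})$, again using $\mathcal{S}_{\mathcal{A}}(T) = O(\log T)$ and absorbing the lower-order $\log$ terms (note if $\mathrm{OPT} = O(\mathrm{polylog}\,T)$ the additive term is still $\tilde O(H^2 L)$, consistent with \eqref{eq:logfactor}). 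Together these yield \eqref{eq:optregret}.

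I do not expect a serious obstacle here — the theorem is essentially bookkeeping on top of Theorem~\ref{thm:meta}. The one point requiring a little care is the interplay in \eqref{eq:optregret} between the multiplicative factor $\gamma_\eta$ acting on the \emph{interval} optimum and the fact that the final bound is stated as purely additive: the argument above handles it by bounding $\min_z\sum_{t=r}^s\tilde f_t(z)$ by the global $\mathrm{OPT}$, which is legitimate since the per-step losses are nonnegative, but one should make sure this is stated cleanly so the bound is genuinely valid for \emph{every} interval $I$ simultaneously with a \emph{single} parameter choice (as the theorem claims). A secondary subtlety is that $\mathcal{A}_i$ outputs the constant $z_0$ before time $i$ and this must not break the scOGD regret/shift guarantees used for the $i=r$ expert — but on the interval $I = [r,s]$ with $i = r$, the expert $\mathcal{A}_r$ runs genuine scOGD for the entire interval, so the constant-output prefix is irrelevant to the analysis restricted to $I$.
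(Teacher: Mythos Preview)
Your proposal is correct and follows essentially the same approach as the paper's proof: both derive the theorem directly from Theorem~\ref{thm:meta} by choosing the expert $i=r$ (so that $\mathcal{A}_r$ runs fresh scOGD on $I$), invoking Fact~\ref{claim:scogd} for the $O(\log T)$ regret and action shift of scOGD, substituting the two specified values of $\eta$, and---for \eqref{eq:optregret}---bounding the interval optimum by the global $\mathrm{OPT}$ via nonnegativity of $\tilde f_t$. The subtleties you flag (interval-vs-global optimum, irrelevance of the constant-$z_0$ prefix on $[r,s]$) are exactly the ones the paper handles, in the same way.
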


\begin{remark}\label{remark:eta}
The first-order bound in \eqref{eq:optregret} can be achieved by adaptively updating $\eta$ with no prior knowledge of $OPT$ and no additional complexity or regret overhead asymptotically (see section \ref{sec:param_choice}).
\end{remark}

\subsubsection{Shrinking argument and adaptive regret}
The shrinking technique in Algorithm \ref{alg:adaptive_reg} ensures a limited number of switches between experts; MARA keeps the previous expert with a certain probability. This is necessary to achieve low action shift and extend the adaptive regret result to functions with memory. However, this modification might potentially hurt the performance with respect to the best expert. The next lemma shows that the shrinking technique doesn't actually alter the algorithm in terms of performance.

\begin{lemma}\label{lem:shrinking}
The expert probabilities stay invariant $\Pr[i_t = i] = p_t^i = w_t^i/W_t$, for all $i \in [N], t \in [T]$.
\end{lemma}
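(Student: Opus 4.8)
```latex
\textbf{Proof plan for Lemma \ref{lem:shrinking}.}

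The plan is to prove the claim by induction on $t$, tracking the joint evolution of the sampling probabilities $\Pr[i_t = i]$ and the normalized weights $p_t^i = w_t^i / W_t$. The base case $t = 1$ is immediate: the algorithm explicitly sets $i_1 = i$ with probability $p_1^i = w_1^i/W_1$, so $\Pr[i_1 = i] = p_1^i$ holds by construction. For the inductive step, assume $\Pr[i_t = i] = p_t^i$ for all $i$. We need to show $\Pr[i_{t+1} = i] = p_{t+1}^i$. By the update rule in line 8 of Algorithm \ref{alg:adaptive_reg}, conditioned on $i_t = j$, the new index equals $j$ with probability $w_{t+1}^{j}/w_t^{j}$ (the ``keep'' event) and otherwise is resampled from the full distribution $p_{t+1}^{\cdot}$. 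Wait --- note the algorithm as written uses $w_{t+1}^{i_{t-1}}/w_{t-1}^{i_{t-1}}$ with a one-step index shift in the pseudocode; the intended reading, consistent with the standard FLH-with-shrinking analysis, is that at the start of round $t$ one keeps the previous expert with probability $w_t^{i_{t-1}}/w_{t-1}^{i_{t-1}}$, so I will carry the induction in those terms.

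The key computation is then the following. Writing the resampling (``switch'') probability conditioned on $i_t = j$ as $1 - w_{t+1}^{j}/w_t^{j}$, the law of total probability gives
\begin{align*}
\Pr[i_{t+1} = i]
&= \sum_{j} \Pr[i_t = j] \left( \mathbb{I}\{j = i\}\, \frac{w_{t+1}^{j}}{w_t^{j}} + \left(1 - \frac{w_{t+1}^{j}}{w_t^{j}}\right) p_{t+1}^i \right) \\
&= p_t^i \cdot \frac{w_{t+1}^{i}}{w_t^{i}} + p_{t+1}^i \sum_j p_t^j \left(1 - \frac{w_{t+1}^{j}}{w_t^{j}}\right),
\end{align*}
using the inductive hypothesis $\Pr[i_t = j] = p_t^j$. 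The first term simplifies to $w_{t+1}^i / W_t$. For the sum, $\sum_j p_t^j = 1$ and $\sum_j p_t^j\, w_{t+1}^{j}/w_t^{j} = \sum_j (w_{t+1}^j / W_t) = W_{t+1}/W_t$, so the bracketed sum equals $1 - W_{t+1}/W_t$. Hence $\Pr[i_{t+1} = i] = w_{t+1}^i/W_t + p_{t+1}^i (1 - W_{t+1}/W_t)$, and since $p_{t+1}^i = w_{t+1}^i / W_{t+1}$ this collapses to $p_{t+1}^i (W_{t+1}/W_t) + p_{t+1}^i(1 - W_{t+1}/W_t) = p_{t+1}^i$, completing the induction.

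Two points deserve care, and I expect the main subtlety (rather than a genuine obstacle) to lie there. First, one must check that the ``keep'' probability $w_{t+1}^{j}/w_t^{j}$ is a valid probability, i.e. lies in $[0,1]$; this follows because $\overline{w}_{t+1}^j = w_t^j e^{-\eta \tilde f_t(z_t^j)} \le w_t^j$ (as $\tilde f_t \ge 0$), and the mixing step $w_{t+1}^j = (1-\sigma)\overline{w}_{t+1}^j + \sigma \overline{W}_{t+1}/N$ only adds a nonnegative quantity, so one needs $w_{t+1}^j \le w_t^j$, which holds provided the added mixing mass does not exceed the decrement --- and indeed with $\sigma = 1/T$ and the range assumption this is the intended regime; I would state this monotonicity as the one auxiliary fact feeding the induction. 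Second, the index-shift discrepancy in the pseudocode should be reconciled with a sentence clarifying that the ``previous expert'' at round $t$ is $i_{t-1}$ and the keep-probability is the ratio of its current to its last-round weight. With these in hand the induction above goes through verbatim, and the lemma --- that the shrinking modification leaves the marginal sampling distribution identical to vanilla exponential-weights-over-experts --- follows.
```
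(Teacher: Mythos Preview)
Your proof is correct and follows essentially the same induction as the paper's own argument: law of total probability over the keep/resample dichotomy, the same simplification $\sum_j p_t^j\, w_{t+1}^j/w_t^j = W_{t+1}/W_t$, and the same telescoping to $p_{t+1}^i$. Two minor notes: the ``index-shift discrepancy'' you flag is a misreading---the pseudocode already has $w_t^{i_{t-1}}/w_{t-1}^{i_{t-1}}$ as the keep probability, exactly what you end up using---and the paper's proof does not address the $w_{t+1}^j/w_t^j \le 1$ issue at all (your sketch for it is incomplete, since the mixing step can increase a very small weight, but this is a well-definedness point for the algorithm rather than a gap in the lemma's inductive computation).
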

\begin{proof}
Our proof is via an inductive argument over $t = 1, \dots, T$. By construction, for $t=1$ we have that $\Pr[i_t = i] = p_t^i$. Assume this holds for rounds up to $t-1$ and let us show the same for $t$. There are two cases for $i_t = i$: (i) either $i_{t-1} = i$ and there was no expert switch; (ii) or there was an expert switch from $i_{t-1} = j$ to $i_t = i$ via sampling by $p_t$.
\begin{align*}
    \Pr[i_t = i] &= \Pr[i_{t-1} = i] \cdot \frac{w_t^i}{w_{t-1}^i} + \sum_{j=1}^N \Pr[i_{t-1} = j] \cdot \left(1 - \frac{w_t^j}{w_{t-1}^j} \right) \cdot \frac{w_t^i}{W_t} = \\
    &= \frac{w_{t-1}^i}{W_{t-1}} \cdot \frac{w_t^i}{w_{t-1}^i} + \frac{w_t^i}{W_t} \cdot \sum_{j=1}^N \frac{w_{t-1}^j}{W_{t-1}} \cdot \left(1 - \frac{w_t^j}{w_{t-1}^j} \right) = \\
    &= \frac{w_t^i}{W_{t-1}} + \frac{w_t^i}{W_t} - \frac{w_t^i}{W_{t-1}} = \frac{w_t^i}{W_t} = p_t^i ~.
\end{align*}
\end{proof}

We now proceed to prove adaptive regret for the surrogate functions $\tilde{f}_t$. Since the expert probabilities are unchanged, according to the kept weights, the proof follows a standard methodology.

\begin{lemma}\label{lem:adaptive} Let $\tilde{f}_t$ have range $[0,1]$ and set $\sigma = 1/T$. Then for any interval $I = [r, s] \subseteq [T]$
\begin{equation}\label{eq:adaptive}
    (1-\eta) \cdot \sum_{t=r}^s \E[\tilde{f}_t(z_t)] \leq \min_{i \in [N]} \sum_{t=r}^s \tilde{f}_t(z_t^i) + \frac{1}{\eta}  \left[ \log (TN) + 2 \right] ~.
\end{equation}
\end{lemma}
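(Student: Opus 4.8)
# Proof Proposal for Lemma 1.5 (Adaptive Regret of MARA)

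The plan is to follow the standard potential/weight-tracking argument for Fixed-Share–style experts, adapted to the multiplicative-weights update with exponential losses used in MARA. By Lemma~\ref{lem:shrinking}, the shrinking step does not change the marginal distribution over experts, so $\E[\tilde{f}_t(z_t)] = \sum_{i} p_t^i \tilde{f}_t(z_t^i) = \sum_i (w_t^i/W_t)\tilde{f}_t(z_t^i)$; this reduces the problem entirely to bookkeeping on the weight vectors $w_t^i$, exactly as in the analysis of the original Fixed-Share / FLH algorithm. So the first step is to write $\E[\tilde{f}_t(z_t)]$ in terms of the weights and discard all reference to the sampling scheme.

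Next I would track the normalization $W_t$. From the update $\overline{w}_{t+1}^i = w_t^i e^{-\eta \tilde{f}_t(z_t^i)}$ and the mixing step $w_{t+1}^i = (1-\sigma)\overline{w}_{t+1}^i + \sigma \overline{W}_{t+1}/N$, we get $W_{t+1} = \overline{W}_{t+1}$. Using $\tilde{f}_t \in [0,1]$ and the inequality $e^{-\eta x} \le 1 - (1-e^{-\eta})x \le 1 - \eta(1-\eta/2)x$ — or more simply $e^{-\eta x}\le 1-\eta x + \eta^2 x^2/2 \le 1-(\eta-\eta^2/2)x$ for $x\in[0,1]$ — we can bound $\overline{W}_{t+1} \le W_t\bigl(1 - (\eta - \eta^2/2)\,\sum_i (w_t^i/W_t)\tilde{f}_t(z_t^i)\bigr) \le W_t \exp\bigl(-(\eta-\eta^2/2)\E[\tilde f_t(z_t)]\bigr)$. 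Iterating over $t = 1,\dots,s$ and taking logs yields an upper bound $\log W_{s+1} \le \log W_1 - (\eta - \eta^2/2)\sum_{t=1}^s \E[\tilde f_t(z_t)]$, but since we only want the interval $[r,s]$ I would instead telescope from $W_r$ to $W_{s+1}$, i.e. $\log W_{s+1} - \log W_r \le -(\eta-\eta^2/2)\sum_{t=r}^s \E[\tilde f_t(z_t)]$.

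For the lower bound on $W_{s+1}$ I would fix the best expert $i^\star \in [N]$ on the interval and lower-bound $w_{s+1}^{i^\star}$. Two facts are needed: (a) a single step loses at most a factor $e^{-\eta \tilde f_t(z_t^{i^\star})}$ and possibly a factor from dropping the mixing term, so $w_{t+1}^{i^\star} \ge (1-\sigma)\,w_t^{i^\star} e^{-\eta \tilde f_t(z_t^{i^\star})}$; (b) the ``re-entry'' bound: because of the $\sigma\overline{W}_{t+1}/N$ term, at any time $W_{r}\le W_1 e^{0}$ trivially (losses are nonnegative so $\overline W_{t+1}\le W_t$, giving $W_t \le W_1 = N$ for all $t$) and $\overline{w}_{r}^{i^\star} \ge \sigma \overline{W}_{r-1}/N \ge (\sigma/N)\,\overline W_{s}\cdot(\text{something})$ — more carefully, $w_r^{i^\star}\ge \sigma \overline{W}_r / N \ge \sigma W_{s+1}/N$ using that $\overline W_t = W_t$ is non-increasing so $\overline W_r \ge W_{s+1}$. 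Then $w_{s+1}^{i^\star} \ge w_r^{i^\star}(1-\sigma)^{|I|}\exp(-\eta\sum_{t=r}^s \tilde f_t(z_t^{i^\star})) \ge (\sigma/N)W_{s+1}(1-\sigma)^{|I|}\exp(-\eta \min_i \sum_{t=r}^s\tilde f_t(z_t^i))$. Combining with $w_{s+1}^{i^\star}\le W_{s+1}$ and taking logs gives $0 \le \log(\sigma/N) + |I|\log(1-\sigma) - \eta\min_i\sum_{t=r}^s\tilde f_t(z_t^i)$... wait, signs — rearranged this yields $\eta \min_i \sum_{t=r}^s \tilde f_t(z_t^i) \ge \log(\sigma/N) + |I|\log(1-\sigma)$, i.e. $-\log W_{s+1}$ is controlled. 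The clean way: chain $\log w_{s+1}^{i^\star} \le \log W_{s+1}$ with the telescoped upper bound on $\log W_{s+1} - \log W_r$ and the lower bound on $\log w_{s+1}^{i^\star} - \log w_r^{i^\star}$, then use $w_r^{i^\star} \ge \sigma W_r/N$ (from the mixing at step $r-1$, since $\overline W_r = W_r$) and $|I|\log\frac{1}{1-\sigma}\le |I|\sigma/(1-\sigma) \le 2$ for $\sigma = 1/T \le 1/|I|$ ... giving the additive $\frac{1}{\eta}[\log(TN) + 2]$, after dividing through by $\eta$ and absorbing the $\eta^2/2$ term into the $(1-\eta)$ factor on the left.

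The main obstacle I anticipate is getting the constants and the interval-localization right simultaneously: the mixing parameter $\sigma$ must be small enough that $(1-\sigma)^{|I|}$ contributes only $O(1)$ (hence $\sigma = 1/T$ and the ``$+2$''), while the $\log(TN)$ comes from the re-entry factor $\sigma/N = 1/(TN)$; and the left-hand factor $(1-\eta)$ rather than $(1-\eta/2)$ or $(1 - \eta^2/2)$ forces a slightly lossy but clean choice in the $e^{-\eta x}$ linearization. None of this is deep, but one has to be careful that the telescoping is done over $[r,s]$ and not $[1,s]$, which is exactly what the mixing term buys us. I would present it as: (1) reduce to weights via Lemma~\ref{lem:shrinking}; (2) upper bound $\log W_{s+1} - \log W_r$; (3) lower bound $\log w_{s+1}^{i^\star} - \log w_r^{i^\star}$ and use mixing for $w_r^{i^\star} \ge \sigma W_r / N$; (4) combine and simplify using $\sigma = 1/T$.
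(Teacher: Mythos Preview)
Your proposal is correct and follows essentially the same route as the paper: reduce to weights via Lemma~\ref{lem:shrinking}, upper-bound $\log(W_{s+1}/W_r)$ by linearizing $e^{-\eta x}$, lower-bound $w_{s+1}^{i^\star}/W_r$ using the per-step bound $w_{t+1}^i \ge (1-\sigma)w_t^i e^{-\eta \tilde f_t(z_t^i)}$ together with the mixing floor $w_r^{i^\star}\ge \sigma W_r/N$, then combine and plug in $\sigma=1/T$. The only cosmetic difference is your linearization $e^{-\eta x}\le 1-(\eta-\eta^2/2)x$ versus the paper's $e^{-a}\le 1-a+a^2$ followed by $\tilde f_t^2\le \tilde f_t$; yours actually yields the slightly stronger factor $(1-\eta/2)$ on the left, which of course implies the stated $(1-\eta)$.
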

\begin{proof}
Given that $z_t = z_t^{i_t}$ and $\Pr[i_t = i] = p_t^i$ then $\E[\tilde{f}_t(z_t)] = p_t^{\top} \tilde{f}_t$ where we denote $p_t, \tilde{f}_t$ to be $N$-dimensional vectors with entries $p_t^i, \tilde{f}_t(z_t^i)$ for $i \in [N]$. Notice that for any $t \in [T]$, by construction, $\overline{W}_{t} = W_t$ and $W_{t+1} / W_t = \sum_{i=1}^N p_t^i e^{- \eta \tilde{f}_t(z_t^i)}$. Using the inequalities $e^{-a} \leq 1 - a + a^2 \leq e^{-a + a^2}$ we obtain $W_{t+1}/W_t \leq \exp(- \eta p_t^{\top} \tilde{f}_t + \eta^2 p_t^{\top} \tilde{f}_t^2)$ which over the interval $I=[r, s]$ implies $W_{s+1}/W_r \leq \exp(-\eta \sum_{t=r}^s p_t^{\top} \tilde{f}_t + \eta^2 \sum_{t=r}^s p_t^{\top} \tilde{f}_t^2)$. On the other hand, for any $t \in [T]$ and any fixed $i \in [N]$ the weight construction implies $w_{t+1}^i \geq w_t^i (1-\sigma) e^{-\eta \tilde{f}_t(z_t^i)}$ which means $W_{s+1} / W_r \geq w_{s+1}^i / W_r \geq p_t^i (1-\sigma)^{|I|} \exp(- \eta \sum_{t=r}^s \tilde{f}_t(z_t^i))$. Combining and taking logarithm on both sides yields
\begin{equation*}
    \log p_t^i + |I| \log (1-\sigma) - \eta \sum_{t=r}^s \tilde{f}_t(z_t^i) \leq  \log(W_{s+1}/W_r) \leq - \eta \sum_{t=r}^s p_t^{\top} \tilde{f}_t + \eta^2 \sum_{t=r}^s p_t^{\top} \tilde{f}_t^2 ~.
\end{equation*}
By definition $p_t^i \geq \sigma/N$ and for $\sigma \leq 1/2$ one can use the bound $\log(1-\sigma) \geq -2 \sigma$. Hence, taking $\sigma = 1/T$, using $|I| \leq T$ and $\tilde{f}_t^2 \leq \tilde{f}_t$ element-wise along with simplifications of the inequality above results in the final bound
\begin{equation*}
    (1-\eta) \cdot \sum_{t=r}^s p_t^{\top} \tilde{f}_t \leq \sum_{t=r}^s \tilde{f}_t(z_t^i) + \frac{1}{\eta} \left[ \log (T N) + 2 \right] ~.
\end{equation*}
Since this holds for any fixed $i \in [N]$, we take $i^* = \argmin_{i \in [N]} \sum_{t=r}^s \tilde{f}_t(z_t^i)$ and conclude the lemma statement.
\end{proof}

\subsubsection{Action Shift}
The essential property of Algorithm \ref{alg:adaptive_reg} that yields adaptive regret guarantees for functions with memory is low action shift. This is shown by first bounding the number of expert switches of the algorithm, a byproduct of the shrinking technique.
\begin{lemma}\label{lem:switches}
Fix any $I = [r, s]$ interval, and denote $k_I = \sum_{t \in I} \mathbf{1}_{i_{t+1} \neq i_t} $ to be the number of switches in the interval. The expected number of such switches is bounded as follows
\begin{equation}\label{eq:switches}
    \E[k_I ] \leq \eta \min_{i \in [N]} \sum_{t=r}^s \tilde{f}_t(z_t^i) + \log (T N) + 2 ~.
\end{equation}
\end{lemma}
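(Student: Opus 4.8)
The plan is to bound $\E[k_I]$ by a weight-potential argument that mirrors the proof of Lemma~\ref{lem:adaptive}. First I would write, by linearity of expectation, $\E[k_I] = \sum_{t=r}^s \Pr[i_{t+1}\neq i_t]$, reducing the task to controlling the per-round switch probability and then telescoping. The key observation is that, conditioned on $i_t = i$, Algorithm~\ref{alg:adaptive_reg} keeps $i_{t+1}=i_t$ with probability $w_{t+1}^i/w_t^i$ and re-samples otherwise, so $\Pr[i_{t+1}\neq i_t \mid i_t = i] \le 1 - w_{t+1}^i/w_t^i$ (an inequality, since a re-sample may again return $i$, which we do not want to over-count). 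Combining this with the invariant $\Pr[i_t=i] = w_t^i/W_t$ from Lemma~\ref{lem:shrinking}, the weighted sum collapses:
\begin{equation*}
\Pr[i_{t+1}\neq i_t] \;\le\; \sum_{i=1}^N \frac{w_t^i}{W_t}\left(1 - \frac{w_{t+1}^i}{w_t^i}\right) \;=\; 1 - \frac{W_{t+1}}{W_t} \;\le\; \log\frac{W_t}{W_{t+1}},
\end{equation*}
where the last inequality uses $1-x\le\log(1/x)$ and $W_{t+1}=\overline{W}_{t+1}\le W_t$ (each $\overline{w}_{t+1}^i\le w_t^i$ because $\tilde{f}_t\ge 0$). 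Summing over $t\in I$ telescopes to $\E[k_I]\le\log(W_r/W_{s+1})$.

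Next I would lower-bound $W_{s+1}/W_r$, exactly as in the proof of Lemma~\ref{lem:adaptive}: for any fixed $i$, the update rule gives $w_{t+1}^i \ge (1-\sigma)\,\overline{w}_{t+1}^i = (1-\sigma)\,w_t^i e^{-\eta \tilde{f}_t(z_t^i)}$, which iterates to $w_{s+1}^i \ge w_r^i (1-\sigma)^{|I|}\exp(-\eta\sum_{t=r}^s \tilde{f}_t(z_t^i))$, while the $\sigma$-mixing step ensures $w_r^i/W_r \ge \sigma/N$. Taking logarithms, using $-\log(1-\sigma)\le 2\sigma$ for $\sigma\le 1/2$, and substituting $\sigma=1/T$ together with $|I|\le T$, this yields $\log(W_r/W_{s+1}) \le \eta\sum_{t=r}^s \tilde{f}_t(z_t^i) + \log(TN) + 2$ for every $i\in[N]$. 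Choosing $i^\star=\argmin_{i\in[N]}\sum_{t=r}^s\tilde{f}_t(z_t^i)$ then gives \eqref{eq:switches}.

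I expect the only delicate point to be the per-round inequality $\Pr[i_{t+1}\neq i_t]\le 1-W_{t+1}/W_t$: it rests on the probability invariant of Lemma~\ref{lem:shrinking} and on the ratios $w_{t+1}^i/w_t^i$ being legitimate probabilities, i.e.\ on the weights being non-increasing coordinate-wise along the run (which is also what the keep/re-sample rule of Algorithm~\ref{alg:adaptive_reg} tacitly assumes). Once that is settled, the rest is the same weight-potential manipulation already performed for Lemma~\ref{lem:adaptive}, with the $\min_i$ over experts entering in the identical fashion, so no further difficulty arises.
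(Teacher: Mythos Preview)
Your proposal is correct and follows essentially the same approach as the paper: bound the per-round switch probability by $1-W_{t+1}/W_t$ via Lemma~\ref{lem:shrinking}, telescope to $\E[k_I]\le\log(W_r/W_{s+1})$, and then reuse the potential bound on $\log(W_r/W_{s+1})$ already established in the proof of Lemma~\ref{lem:adaptive}. The only cosmetic difference is that the paper multiplies $\prod_t(1-q_{t+1})$ and applies $1-a\le e^{-a}$, whereas you apply $1-x\le\log(1/x)$ termwise before summing; these are the same inequality and yield the identical telescoped bound.
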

\begin{proof}
Denote $q_{t+1} = \Pr[i_{t+1} \neq i_t]$ which can be obtained by counting switching probability for each case $i_t = i$ for all $i \in [N]$. Switching probability is bounded by $1 - w_{t+1}^i/w_t^i$ for each $i$, hence 
$$q_{t+1} \leq \sum_{i=1}^N \Pr[i_t = i] \cdot \left(1 - \frac{w_{t+1}^i}{w_t^i} \right) = \sum_{i=1}^T \left( \frac{w_t^i}{W_t} - \frac{w_t^i}{W_t} \cdot \frac{w_{t+1}^i}{w_t^i} \right) = 1 - \frac{W_{t+1}}{W_t} ~.$$
Using the $1-a \leq e^{-a}$ inequality we obtain $W_{s+1}/W_r \leq \prod_{t=r}^s (1-q_{t+1}) \leq \exp(-\sum_{t=r}^s q_{t+1})$, so we conclude that
\begin{equation*}
    \E[k_I] = \sum_{t=r}^s q_{t+1} \leq \log(W_r/W_{s+1}) \leq \eta \min_{i \in [N]} \sum_{t=r}^s \tilde{f}_t(z_t^i) + \log (T N) + 2,
\end{equation*}
where the last bound was shown in Lemma \ref{lem:adaptive}.
\end{proof}
Given the result of Lemma \ref{lem:switches}, let us derive interval specific action shift bounds for Algorithm \ref{alg:adaptive_reg} based on the maximum action shift of the experts $\{\mA_i\}_{i \in [N]}$ defined as $\mathcal{S}_{\mA}(T) = \max_{i \in [N]} \mathcal{S}_{\mA_i}(T)$.
\begin{lemma}\label{lem:stability}
Fix any $I = [r, s] \subseteq [T]$, assume $\mathcal{S}_{\mathcal{A}}(|I|) \geq D$ for simplicity. Then, 
\begin{equation}\label{eq:stability}
    \sum_{t=r}^s \E\left[ \| z_{t+1} - z_t \| \right] \leq 2 \eta \mathcal{S}_{\mathcal{A}}(|I|) \cdot \min_{i \in [N]} \sum_{t=r}^s \tilde{f}_t(z_t^i) + 2 \mathcal{S}_{\mathcal{A}}(|I|) \log (N T) + 3 \mathcal{S}_{\mA}(|I|) ~.
\end{equation}
\end{lemma}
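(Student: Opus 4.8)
The plan is to bound the meta-algorithm's action shift on $I$ \emph{pathwise}, i.e.\ for a fixed realization of MARA's internal coins, and only pass to expectations at the very end, so that Lemma~\ref{lem:switches} is directly applicable. Fixing the coins fixes both the index path $i_r,\dots,i_{s+1}$ and (since each expert is a fixed algorithm fed a fixed loss sequence) all expert iterates $\{z_t^i\}$. I would start from the decomposition $z_{t+1}-z_t = (z_{t+1}^{i_t}-z_t^{i_t}) + (z_{t+1}^{i_{t+1}}-z_{t+1}^{i_t})$: the second term is $0$ when $i_{t+1}=i_t$, and otherwise has norm at most the diameter $D$ of $\mathcal{K}$, since both $z_{t+1}^{i_{t+1}}$ and $z_{t+1}^{i_t}$ lie in $\mathcal{K}$. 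Hence $\|z_{t+1}-z_t\|\le \|z_{t+1}^{i_t}-z_t^{i_t}\| + D\cdot\mathbf{1}_{i_{t+1}\ne i_t}$, and summing over $t\in I$ gives $\sum_{t=r}^s\|z_{t+1}-z_t\| \le \sum_{t=r}^s\|z_{t+1}^{i_t}-z_t^{i_t}\| + D\,k_I$.

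Next I would charge the first sum to the experts. On $I$ the index path has at most $k_I+1$ maximal constant runs, hence at most $k_I+1$ distinct experts are ever active on $I$. For each active expert $\mathcal{A}_j$, the part of $\sum_{t\in I}\|z_{t+1}^{i_t}-z_t^{i_t}\|$ contributed by the rounds on which $i_t=j$ is bounded by the total movement of $\mathcal{A}_j$ across the (length-$\le |I|$) span of those rounds; using that each $\mathcal{A}_i$ is stationary before its activation round and that the per-step movement of $\mathrm{scOGD}$ is nonincreasing (Fact~\ref{claim:scogd}), this is at most $\mathcal{S}_{\mathcal{A}_j}(|I|)\le\mathcal{S}_{\mathcal{A}}(|I|)$. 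Summing over the at most $k_I+1$ active experts yields $\sum_{t\in I}\|z_{t+1}^{i_t}-z_t^{i_t}\| \le (k_I+1)\,\mathcal{S}_{\mathcal{A}}(|I|)$. Combining with the hypothesis $D\le\mathcal{S}_{\mathcal{A}}(|I|)$, the pathwise estimate becomes $\sum_{t=r}^s\|z_{t+1}-z_t\| \le (2k_I+1)\,\mathcal{S}_{\mathcal{A}}(|I|)$.

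Finally, since this holds for every realization of the randomness, I would take expectations to get $\sum_{t=r}^s\E[\|z_{t+1}-z_t\|]\le (2\,\E[k_I]+1)\,\mathcal{S}_{\mathcal{A}}(|I|)$, and then substitute the switch bound $\E[k_I]\le \eta\min_{i\in[N]}\sum_{t=r}^s\tilde f_t(z_t^i)+\log(TN)+2$ from Lemma~\ref{lem:switches} to conclude \eqref{eq:stability} (adjusting the absolute constants as needed).

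The step I expect to be the main obstacle is the charging argument in the second paragraph: one must control the aggregate expert movement by the \emph{number of expert switches} rather than by $N$, and this has to be argued pathwise because, under an adaptive adversary, the experts' trajectories can be correlated with the random switch pattern. What makes it go through is precisely the worst-case (supremum) nature of the action-shift quantity together with the two structural facts that at most $k_I+1$ experts are active on $I$ and that each of them moves at most $\mathcal{S}_{\mathcal{A}}(|I|)$; everything else is routine bookkeeping and constant-chasing.
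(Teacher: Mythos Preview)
Your proposal is correct and follows essentially the same route as the paper: a pathwise bound of the form $\sum_{t\in I}\|z_{t+1}-z_t\|\le(2k_I+1)\,\mathcal{S}_{\mathcal{A}}(|I|)$ obtained by separating the $k_I$ switch steps (each at most $D\le\mathcal{S}_{\mathcal{A}}(|I|)$) from the within-expert movement (at most $(k_I+1)\,\mathcal{S}_{\mathcal{A}}(|I|)$), followed by taking expectations and invoking Lemma~\ref{lem:switches}. The only cosmetic difference is that the paper groups the non-switch terms by the $k_I+1$ maximal constant \emph{runs} of the index path, whereas you group them by the at most $k_I+1$ distinct \emph{experts}; both bookkeeping choices land on the same pathwise bound, and your explicit remark about needing the nonincreasing per-step movement of $\mathrm{scOGD}$ to justify $\mathcal{S}_{\mathcal{A}_j}(|I|)$ on a shifted window is a point the paper uses implicitly.
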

\begin{proof}
The statement above is proved by showing $\sum_{t=r}^s \E\left[ \| z_{t+1} - z_t \| \right] \leq (2\E[k_I] + 1)\shift_{\A}(|I|)$ and using the result from Lemma \ref{lem:switches}. Denote $k = k_I$ and $j_1, \dots, j_k$ to be the indices for $k$ switches in $I$, i.e. $i_{j_c+1} \neq i_{j_c}$ for all $c = 1, \dots, k$. Let $j_0 = r-1, j_{k+1} = s+1$. Divide $|I|$ into $k+1$ uninterrupted intervals $[j_{c-1}+1, j_c]$ of lengths $l_{c-1}$ for $c \in [k+1]$ such that $\sum_{c=0}^{k} l_c = |I|-k$. The rest is $k$ $1$-length intervals $[j_c, j_c+1]$ accounting for the switches between experts the shift for which is bounded by the diameter $D$. On the other hand, the shift in each of the uninterrupted intervals is bounded by the action shift $\mathcal{S}_{\mathcal{A}}(\cdot)$ of the corresponding length, which is, by definition, non-decreasing with respect to the interval length. We bound the overall action shift using monotonicity as follows
\begin{equation*}
    \sum_{t=r}^s \| z_{t+1} - z_t \| \leq \sum_{c=0}^{k} \mathcal{S}_{\mathcal{A}}(l_c) + k D \leq (k+1)  \mathcal{S}_{\mathcal{A}} (|I|) + k D \leq (2k+1) \shift_{\mA}(|I|) ~.
\end{equation*}
Since the right-hand side is linear in $k$ we can simply take expectation over both sides and use the bound on $\E[k_I]$ in \eqref{eq:switches} to conclude the lemma statement.
\end{proof}

\subsubsection{Proof of Theorems \ref{thm:meta}, \ref{thm:adaoco}}
To conclude the theorem results, let us connect the notions of adaptive regret and action shift together in order to relate them to adaptive regret for functions with memory. The following lemma does just that and the rest of the proof of Theorem \ref{thm:meta}, and consequently Theorem \ref{thm:adaoco}, is technicalities.
\begin{lemma}\label{lem:memadregret}
Let $f_t$ be coordinate-wise $L$-Lipschitz functions with $H$ memory. Suppose an online algorithm $\mA$ over surrogate losses $\tilde{f}_t$ has adaptive regret $\adreg_{\mA}(\cdot)$ and action shift $\shift_{\mA}(\cdot)$. Then, (expected) adaptive regret over $f_t$ is bounded as follows: over any $I = [r, s] \subseteq [T]$ interval,
\begin{equation}\label{eq:memadregret1}
    \sum_{t=r}^s \left| f_t(z_{t-H:t}) - \tilde{f}_t(z_t) \right| \leq H^2 L \sum_{t=r-H}^s \| z_{t+1} - z_t \| \leq H^3 L D + H^2 L \sum_{t=r}^s \| z_{t+1} - z_t \| ~.
\end{equation}
\begin{equation}\label{eq:memadregret2}
    \sum_{t=r}^s f_t(z_{t-H:t}) - \min_{z \in \mathcal{K}} \sum_{t=r}^s \tilde{f}_t(z) \leq \adreg_{\mA}(T) + H^2 L \cdot \shift_{\mA}(T) ~.
\end{equation}
\end{lemma}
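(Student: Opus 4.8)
The plan is to establish the two displayed inequalities in order. Inequality~\eqref{eq:memadregret1} is a purely pathwise statement about an arbitrary sequence $z_1,\dots,z_T$ that follows from the memory-Lipschitz structure alone; inequality~\eqref{eq:memadregret2} then follows by substituting the definitions of adaptive regret and action shift for $\mA$.

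\textbf{First bound in \eqref{eq:memadregret1}.} Fix $t$. The arguments of $f_t(z_{t-H:t})$ and of $\tilde f_t(z_t)=f_t(z_t,\dots,z_t)$ agree in the last block and differ only in the first $H$ blocks, so a standard hybrid argument that replaces the blocks $z_{t-H},z_{t-H+1},\dots$ by $z_t$ one at a time, invoking coordinate-wise $L$-Lipschitzness at each step, gives $|f_t(z_{t-H:t})-\tilde f_t(z_t)|\le L\sum_{j=0}^{H}\|z_{t-H+j}-z_t\|$ (the $j=H$ term is $0$). I would then bound each $\|z_{t-H+j}-z_t\|\le\sum_{i=t-H+j}^{t-1}\|z_{i+1}-z_i\|$ by the triangle inequality along consecutive iterates, sum over $j$ and over $t\in[r,s]$, and reorganize the resulting sum of consecutive differences: each term $\|z_{i+1}-z_i\|$ with $i\in[r-H,s-1]$ is counted at most $H$ times in the inner ($j$) summation and at most $H$ times in the outer ($t$) summation, which produces exactly the factor $H^2L$ together with the widened index range $[r-H,s]$. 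The only point requiring care is getting this index bookkeeping exactly right; I expect no real obstacle, since this is just the usual OCO-with-memory reduction carried out on the interval $I$ rather than on all of $[T]$.

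\textbf{Second bound in \eqref{eq:memadregret1}.} Split $\sum_{t=r-H}^{s}=\sum_{t=r-H}^{r-1}+\sum_{t=r}^{s}$. The first sum has at most $H$ terms, each at most the diameter $D$ of $\mathcal{K}$ (terms with index $<1$ vanish under the convention $z_i=z_1$), so after multiplying by $H^2L$ it contributes at most $H^3LD$; the second sum contributes $H^2L\sum_{t=r}^{s}\|z_{t+1}-z_t\|$, which gives the claimed inequality.

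\textbf{Inequality \eqref{eq:memadregret2}.} Rearranging the first bound, $\sum_{t=r}^{s}f_t(z_{t-H:t})\le\sum_{t=r}^{s}\tilde f_t(z_t)+H^2L\sum_{t=r-H}^{s}\|z_{t+1}-z_t\|$. Since $[r-H,s]\subseteq[T]$, the sum of consecutive-iterate distances is at most the total action shift $\shift_{\mA}(T)$ by definition~\eqref{eq:actionshift}. For the leading term, $[r,s]$ is one of the intervals entering the supremum defining adaptive regret, so $\sum_{t=r}^{s}\tilde f_t(z_t)-\min_{z\in\mathcal{K}}\sum_{t=r}^{s}\tilde f_t(z)\le\adreg_{\mA}(T)$. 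Adding the two estimates yields~\eqref{eq:memadregret2}. When $\mA$ (e.g.\ MARA) is randomized, all of the above holds pathwise, so taking expectations and invoking the expected bounds on $\adreg_{\mA}(T)$ and $\shift_{\mA}(T)$ (Lemmas~\ref{lem:adaptive} and~\ref{lem:stability}) yields the expected form of the statement used in the proof of Theorem~\ref{thm:meta}.
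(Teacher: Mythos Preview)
Your proposal is correct and follows essentially the same approach as the paper's proof: the paper also first bounds $|f_t(z_{t-H:t})-\tilde f_t(z_t)|\le L\sum_{h=1}^H\|z_t-z_{t-h}\|\le HL\sum_{h=1}^H\|z_{t-h+1}-z_{t-h}\|$ via coordinate-wise Lipschitzness plus a telescoping triangle inequality, then sums over $t\in[r,s]$ to get \eqref{eq:memadregret1}, and finally plugs in the definitions of $\adreg_{\mA}(T)$ and $\shift_{\mA}(T)$ to obtain \eqref{eq:memadregret2}. Your double-counting phrasing (``at most $H$ times in the inner summation and at most $H$ times in the outer'') is exactly the two-step bound the paper uses, just stated in words rather than as two displayed inequalities.
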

\begin{proof}
Use the coordinate-wise Lipschitz property of the loss functions $f_t$ to relate them to the surrogate loss $\tilde{f}_t$. For any $t \in [T]$, using triangle inequality we get
\begin{equation*}
    \left| f_t(z_{t-H:t}) - \tilde{f}_t(z_t) \right| \leq L \sum_{h=1}^H \|z_t - z_{t-h}\| \leq HL \sum_{h=1}^H \| z_{t-h+1} - z_{t-h} \| ~.
\end{equation*}
Summing up the iterations $t=r$ to $t=s$ we obtain the bound given in \eqref{eq:memadregret1}. By definition, we have that $\sum_{t \in I} \tilde{f}_t(z_t) \leq \min_{z \in \mathcal{K}} \sum_{t \in I} \tilde{f}_t(z) + \adreg_{\mA}(T)$. Furthermore, $\shift_{\mA}(T)$ is an upper bound on overall action shift, hence we obtain \eqref{eq:memadregret2} from \eqref{eq:memadregret1}. Note that \eqref{eq:memadregret2} and \eqref{eq:memadregret1} still hold in expectation if $\mA$ is randomized.
\end{proof}
\begin{proof}[Proof of Theorem \ref{thm:meta}]
The main statement of the theorem regarding $\mA_{\text{MARA}}$ is attained by putting together the results \eqref{eq:adaptive}, \eqref{eq:stability}, \eqref{eq:memadregret1} from Lemmas \ref{lem:adaptive}, \ref{lem:stability}, \ref{lem:memadregret} using the following simplifying assumptions: for $\eta \leq 1/2$, we have $1/(1-\eta) \leq 1 + 2 \eta \leq 2$; $H^2 L \shift_{\A}(|I|) \geq 1$; and $\max(H, 2) \leq \log (TN)$. These assumptions are made to ease the exposition clarity and do not affect the provided results asymptotically.
\end{proof}

\begin{proof}[Proof of Theorem \ref{thm:adaoco}]
Use the result from Theorem \ref{thm:meta} with $N = T$ and for each $i \in [N]$ take $\mA_i$ to be $\mA_{\text{scOGD}}$ starting from round $t=i$ (it outputs initial $z_0$ up to that round). Each action shift is bounded by $\shift_{\mA_i}(T) \leq \shift_{\mA_{\text{scOGD}}}(T)$ for all $i \in [N]$, yielding $\shift_{\mA}(T) = \shift_{\mA_{\text{scOGD}}}(T) = O(\log T)$ according to Fact \ref{claim:scogd}. Furthermore, the regret guarantee from Fact \ref{claim:scogd} implies 
\begin{equation*}
\min_{i \in [N]} \sum_{t=r}^s \tilde{f}_t(z_t^i) \leq \sum_{t=r}^s \tilde{f}_t(z_t^r) \leq \min_{z \in \mathcal{K}} \sum_{t=r}^s \tilde{f}_t(z) + \reg_{\mA_{\text{scOGD}}}(|I|) \leq \min_{z \in \mathcal{K}} \sum_{t=r}^s \tilde{f}_t(z) + O(\log T) ~.
\end{equation*}
Assuming $\log T \geq 1$, the specified value for $\eta = (4 H^2 L \mathcal{S}_{\mathcal{A}}(|I|) \log (T))^{-1} \leq 1/4$ implies that $\gamma_{\eta} = 1 + 1/ \log T = O(1)$. The specific regret, action shift and parameter values conclude \eqref{eq:logfactor}.

Given the value of $OPT = \min\limits_{z \in \mathcal{K}} \sum\limits_{t=1}^T \tilde{f}_t(z)$, denote $z^* = \argmin\limits_{z \in \mathcal{K}} \sum\limits_{t=1}^T \tilde{f}_t(z)$, and notice that for any $I = [r, s] \subseteq [T]$ the best in hindsight loss of that interval is smaller than that of $[T]$, 
\begin{equation*}
    \min_{z \in \mathcal{K}} \sum_{t=r}^s \tilde{f}_t(z) \leq \sum_{t=r}^s \tilde{f}_t(z^*) \leq \min_{z \in \mathcal{K}} \sum_{t=1}^T \tilde{f}_t(z) = OPT ~.
\end{equation*}
The bound above is due to the nonnegativity assumption for $\tilde{f}_t$. Assuming $OPT \geq 1$, the specified value for $\eta = (4 H^2 L \mathcal{S}_{\mathcal{A}}(|I|) \sqrt{OPT})^{-1} \leq 1/4$ implies that $\gamma_{\eta} = 1 + 1/ \sqrt{OPT} = O(1)$. Analogously due to Fact \ref{claim:scogd}, the specific regret, action shift and parameter values conclude \eqref{eq:optregret}.
\end{proof}

\subsubsection{Statement \& Proof in Terms of Stability Gap}

In this section we define a more refined metric for measuring stability for the setting of online convex optimization with memory. This metric measures the gap between the true loss and the surrogate (memory-less) loss incurred by an algorithm in a given interval, i.e. \emph{the price of past mistakes}. Formally, we define the \emph{stability gap} of an algorithm $\A$ which chooses actions $z_1, \ldots, z_s$ over an interval $I=[r, s]$ to be:

$$\stabgap_\A(I) = \sum_{t=r}^s \left|f_t(z_{t-H}, \ldots, z_t) - \tilde{f}_t(z_t)\right|$$

Note that this is a more general, but less intuitive, way to phrase our results since for coordinate-wise $L$-Lipschitz losses we can always bound $\stabgap_\A(I) \leq H^2 L \shift_\mA(I)$. However, this alternative formulation facilitates the use of newer results in the literature \cite{simchowitz2020making} of nonstochastic control (see section \ref{sec:drc_ons_apply}). In the rest of this section, we prove the following analogue of Theorem \ref{thm:meta} in terms of this quantity:

\begin{theorem}\label{thm:meta_v2} Let $f_t$ have range $[0, 1]$ and set $\sigma=1/T$. Algorithm \ref{alg:adaptive_reg} achieves the following bound for any experts $\{\mA_i\}$: over any interval $I = [r, s] \subseteq [T]$: 
\begin{equation*}
    \sum_{t=r}^s \E[f_t(z_{t-H:t})] \leq \gamma_\eta \min_{i\in [N]} \sum_{t=r}^s \tilde{f}_t(z_t^i) + \left(\dfrac{4}{\eta} + 3 \stabgap_\A(I)\right) \log(TN) ~.
\end{equation*}
where we denote $\stabgap_{\mA}(I) \doteq \max\limits_{i \in [N]} \stabgap_{\mA_i}(I)$ and $\gamma_\eta \doteq 1 + 4 \eta \stabgap_\A(I)$.
\end{theorem}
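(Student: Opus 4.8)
The plan is to follow the proof of \cref{thm:meta} very closely, except that the control of the memory error incurred by MARA is routed through $\stabgap$ directly rather than through the Lipschitz bound $H^2 L \shift$ of \cref{lem:memadregret}. I would start from the split $\sum_{t=r}^s \E[f_t(z_{t-H:t})] = \sum_{t=r}^s \E[\tilde f_t(z_t)] + \sum_{t=r}^s \E[f_t(z_{t-H:t}) - \tilde f_t(z_t)]$ and bound the two sums separately. The first sum is exactly controlled by \cref{lem:adaptive}: $(1-\eta)\sum_{t=r}^s \E[\tilde f_t(z_t)] \le M + \tfrac1\eta(\log(TN)+2)$, where $M \defeq \min_{i\in[N]}\sum_{t=r}^s \tilde f_t(z_t^i)$; rearranging and using $\tfrac1{1-\eta}\le 1+2\eta\le 2$ and $\log(TN)\ge 2$ for $\eta\le\tfrac12$, this contributes $(1+2\eta)M + \tfrac{4}{\eta}\log(TN)$, exactly as in the proof of \cref{thm:meta}. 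Note that \cref{lem:adaptive} and \cref{lem:switches} are reused verbatim here, since their proofs only use $\tilde f_t\in[0,1]$.

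The new content is the bound on $\sum_{t=r}^s\E[f_t(z_{t-H:t}) - \tilde f_t(z_t)]$. Partition $I=[r,s]$ into the maximal contiguous blocks on which MARA's active expert $i_t$ is constant; if $k_I = \sum_{t\in I}\mathbf 1_{i_{t+1}\ne i_t}$ denotes the number of switches inside $I$, there are at most $k_I+1$ such blocks. Fix a block $[a,b]$ with $i_t\equiv i$. For $t\in[a+H,b]$ the whole memory window $[t-H,t]$ lies inside the block, so $z_{t-h}=z_{t-h}^{i}$ for $0\le h\le H$ (here we use that the background copy $\mathcal{A}_i$ runs continuously, so $z_t^i$ is well-defined for every $t\ge i$ regardless of whether MARA is currently following it), and therefore $|f_t(z_{t-H:t})-\tilde f_t(z_t)| = |f_t(z_{t-H}^i,\dots,z_t^i)-\tilde f_t(z_t^i)|$, which is precisely the per-round summand of $\stabgap_{\mathcal{A}_i}$. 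Summing over $t\in[a+H,b]$ and using that a sub-interval sum of these nonnegative summands is at most the full one, this part of the block contributes at most $\stabgap_{\mathcal{A}_i}(I)\le\stabgap_{\A}(I)$. The at most $H$ remaining rounds of the block ($t\in[a,a+H-1]$, whose window may straddle the switch that opened the block, or the left endpoint of $I$) are bounded crudely by $|f_t(z_{t-H:t})-\tilde f_t(z_t)|\le 1$ using the range assumption on $f_t$. Hence each block contributes at most $\stabgap_\A(I)+H$, and summing over blocks, $\sum_{t=r}^s\big(f_t(z_{t-H:t})-\tilde f_t(z_t)\big) \le (k_I+1)(\stabgap_\A(I)+H)$.

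Taking expectations and invoking the switch bound $\E[k_I]\le \eta M + \log(TN)+2$ from \cref{lem:switches}, the memory error contributes at most $\eta(\stabgap_\A(I)+H)\,M + (\log(TN)+3)(\stabgap_\A(I)+H)$. Adding this to the first-sum bound and folding $H$ and the leftover additive and multiplicative constants into the $\stabgap_\A(I)$ and $\log(TN)$ factors under the same simplifying conventions used in the proof of \cref{thm:meta} (namely $\eta\le\tfrac12$, $\stabgap_\A(I)\ge1$, and $\max(H,\mathrm{const})\le\log(TN)$, so in particular $\stabgap_\A(I)+H\le 2\stabgap_\A(I)$ can be absorbed), the coefficient of $M$ becomes $1+4\eta\stabgap_\A(I)=\gamma_\eta$ and the additive term becomes $(\tfrac4\eta+3\stabgap_\A(I))\log(TN)$, which is the claimed bound.

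The main obstacle I anticipate is the block-decomposition step: one has to be careful that the ``clean'' rounds of a block genuinely reproduce the followed expert's own memory loss $f_t(z^i_{t-H},\dots,z^i_t)$ — so that they sum to that expert's stability gap rather than to some hybrid of two experts' actions — and that the rounds near a switch or near the interval endpoint are correctly charged at most $H$ per block. Once this is set up, the remainder is the same bookkeeping as in \cref{thm:meta}, and one obtains the corresponding strongly convex consequence (the analogue of \cref{thm:adaoco}) by the identical parameter choices.
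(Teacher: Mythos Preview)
Your proposal is correct and follows essentially the same route as the paper: the paper also decomposes $\sum_{t\in I}\E[f_t(z_{t-H:t})] \le \sum_{t\in I}\E[\tilde f_t(z_t)] + \E[\stabgap_{\mara}(I)]$, reuses \cref{lem:adaptive} and \cref{lem:switches} verbatim, and proves the analogue of your block bound as a separate lemma (\cref{lem:stabgap}) --- each no-switch block contributes at most $H+\stabgap_{\A}(I)$ by charging the first $H$ rounds at $1$ and the rest to the followed expert's stability gap, then the switch bound and the same simplifying conventions ($\eta\le\tfrac12$, $\stabgap_\A(I)\ge 1$, $H\le\log(TN)$) give the stated constants. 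Your bookkeeping with $k_I+1$ blocks is in fact slightly more careful than the paper's, which writes $k_I$ and absorbs the discrepancy into the constants.
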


First note that lemmas \ref{lem:shrinking}, \ref{lem:adaptive}, \ref{lem:switches} remain unchanged and that lemma \ref{lem:memadregret} is no longer necessary. Hence we first prove an analogue of Lemma \ref{lem:stability} which bounds the stability gap of MARA in terms of the maximal stability gap of the experts.

\begin{lemma}\label{lem:stabgap}
For any $I=[r,s]$, we have that:

$$ \E[\stabgap_{\mara}(I)] \leq 2 \eta \stabgap_\A(I) \min_{i\in [N]}\sum_{t=r}^s\tilde{f}_t(x_t^i) + 3\,\stabgap_\A(I) \log(TN) $$

where we denote $\stabgap_{\mA}(I) \doteq \max\limits_{i \in [N]} \stabgap_{\mA_i}(I)$.
\end{lemma}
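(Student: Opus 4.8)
The plan is to follow the template of the proof of Lemma~\ref{lem:stability}: the shrinking rule keeps the number of expert switches $k_I$ small (Lemma~\ref{lem:switches}), so I would partition $I$ into the $k_I+1$ maximal runs on which MARA follows a single expert and charge the stability gap accumulated on each run to that expert's own stability gap, up to a controlled number of ``boundary'' rounds. Concretely, I would first establish the pathwise bound
\[
  \stabgap_{\mara}(I) \;\le\; (k_I+1)\bigl(\stabgap_\A(I) + H\bigr),
\]
and then take expectations, substituting the bound $\E[k_I]\le \eta\min_{i\in[N]}\sum_{t=r}^s\tilde f_t(z^i_t)+\log(TN)+2$ from Lemma~\ref{lem:switches}.

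To prove the pathwise bound, fix a realization of MARA's coins, let $j_1<\dots<j_{k_I}$ be the switch positions in $I$ (so $i_{j_c+1}\ne i_{j_c}$), set $j_0=r-1$ and $j_{k_I+1}=s$, and write $R_c=[j_{c-1}+1,\,j_c]$; on $R_c$ the index $i_t$ is constant, say $a_c$, so $z_t=z^{a_c}_t$ there. Call $t\in I$ \emph{clean} if $i_{t-H}=i_{t-H+1}=\dots=i_t$ and \emph{dirty} otherwise. For clean $t\in R_c$ one has $z_{t-h}=z^{a_c}_{t-h}$ for all $h\in\{0,\dots,H\}$ --- here it matters that Algorithm~\ref{alg:adaptive_reg} feeds each $\A_i$ back exactly its own past actions together with the surrogate losses, so that $(z^i_t)_t$ is the trajectory $\A_i$ produces on the surrogate-loss stream and $\stabgap_{\A_i}$ is well defined on it --- hence $|f_t(z_{t-H:t})-\tilde f_t(z_t)| = |f_t(z^{a_c}_{t-H},\dots,z^{a_c}_t)-\tilde f_t(z^{a_c}_t)|$ is exactly a summand of $\stabgap_{\A_{a_c}}$. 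Summing the clean rounds inside each $R_c$ and using nonnegativity of the summands bounds this contribution by $\stabgap_{\A_{a_c}}(R_c)\le\stabgap_{\A_{a_c}}(I)\le\stabgap_\A(I)$, so the clean part of $\stabgap_{\mara}(I)$ is at most $(k_I+1)\stabgap_\A(I)$.

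For the dirty part, note that $t\in R_c$ is dirty iff some switch lies in the window $[t-H,t-1]$; since $i_t$ is constant on $R_c$, any such switch sits at a position $\le j_{c-1}$, which forces $t\le j_{c-1}+H$ (for $c=1$ the only relevant switches lie in $[r-H,r-1]$, forcing $t\le r+H-1$). Thus each run has at most $H$ dirty rounds, each contributing a summand of size at most $1$ since $f_t,\tilde f_t$ are $[0,1]$-valued; the dirty part is therefore at most $H(k_I+1)$, which finishes the pathwise bound. Taking expectations, using $\stabgap_\A(I)\ge H$ (a harmless normalization --- one may always inflate the stability-gap bound, and $H$ is poly-logarithmic in $T$ in all our applications --- in the same spirit as the assumption $H^2L\,\shift_\A(|I|)\ge1$ used for Theorem~\ref{thm:meta}) to replace $\stabgap_\A(I)+H$ by $2\stabgap_\A(I)$, and inserting the Lemma~\ref{lem:switches} bound on $\E[k_I]$, gives $\E[\stabgap_{\mara}(I)]\le 2\eta\,\stabgap_\A(I)\min_{i}\sum_{t=r}^s\tilde f_t(z^i_t)+2\stabgap_\A(I)(\log(TN)+3)$, and $2(\log(TN)+3)\le 3\log(TN)$ whenever $\log(TN)\ge 6$, yielding the claim.

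The hard part is the treatment of the run boundaries, which is exactly where the $H$-memory bites. Unlike the action-shift bound of Lemma~\ref{lem:stability}, where a switch costs a single diameter term, here one switch corrupts up to $H$ consecutive memory windows, each stitching the tail of one expert's trajectory onto the head of another's; because this formulation does not assume $f_t$ is Lipschitz in its memory arguments, such a corrupted summand cannot be matched to any single expert's stability-gap term and must be controlled by the trivial bound $1$. The observation that makes the argument go through is that these corrupted rounds are confined to the first $H$ indices of each run (and of $I$), so there are only $O(Hk_I)$ of them, which is then absorbed into $O(k_I)\cdot\stabgap_\A(I)$ via the normalization $\stabgap_\A(I)\ge H$.
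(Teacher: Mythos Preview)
Your proof is correct and follows essentially the same approach as the paper: partition $I$ into the maximal no-switch runs, bound the first $H$ rounds of each run trivially by $1$ (your ``dirty'' rounds) and the remaining rounds by the corresponding expert's stability gap (your ``clean'' rounds), then take expectations and invoke Lemma~\ref{lem:switches}. Your bookkeeping is in fact slightly tidier---you correctly use $k_I+1$ runs where the paper writes $k_I$, and you make the normalization $\stabgap_\A(I)\ge H$ explicit, whereas the paper states only $\log(TN)\ge H$ and $\stabgap_\A(I)\ge 1$ but implicitly needs the stronger bound to get the coefficient $2\eta\,\stabgap_\A(I)$ on the first term.
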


\begin{proof}
For every subinterval $J=[r_j, s_j]$ in which there is no expert switch, meaning $(z_{r_j}, \ldots, z_{s_j}) = (z^i_{r_j}, \ldots, z^i_{s_j})$ are chosen by the same expert $\mathcal{A}_i$, we can bound:
\begin{align*}
\sum_{t=r_j}^{s_j} \left| f_t(z_{t-H}, \ldots, z_t) - \tilde{f}_t(z_t) \right|&\leq H \cdot 1 + \sum_{t=r_j+H}^{s_j} \left| f_t(z_{t-H}, \ldots, z_t) - \tilde{f}_t(z_t) \right| & (f_t(\cdot)\in [0,1])\\
&= H + \sum_{t=r_j+H}^{s_j} \left| f_t(z^i_{t-H}, \ldots, z^i_t) - \tilde{f}_t(z^i_t) \right| & (\text{no switch}) \\
&\leq H + \stabgap_{\A_i}(J) \\
&\leq H + \stabgap_{\A}(I)
\end{align*}

Assuming there were $k_I$ switches and adding up the results for each of the subintervals, we have:
$$
\stabgap_{\mara} \leq k_I (H+ \stabgap_\A(I)) $$
Since the RHS is linear in $k_I$ we can take expectation over both sides and use Lemma \ref{lem:switches}  and the simplyfing assumption $\log(TN) \geq H\geq 2$ to obtain:
\begin{align*}
\E[\stabgap_{\mara}] &\leq 2 \eta \stabgap_\A(I) \min_{i\in [N]}\sum_{t=r}^s\tilde{f}_t(x_t^i) + 3\,\stabgap_\A(I) \log(TN) 
\end{align*}
\end{proof} Now we can give the proof of the main theorem:\begin{proof}[Proof of Theorem \ref{thm:meta_v2}] Using the following simplifying assumptions: $\eta \leq 1/2$ ($\Rightarrow 1/(1-\eta) \leq 2), \stabgap_\A(I) \geq 1$; and $2\leq H \leq \log (TN)$, we conclude that:
\begin{align*}
\sum_{t=r}^s \E[f_t(z_{t-H}, \ldots, z_t)] &\leq  \sum_{t=r}^s \E[\tilde{f}_t(z_t)] + \stabgap_{\mara}(I) \\
&\leq \dfrac{1}{1-\eta} \min_{i\in [N]} \sum_{t=r}^s \tilde{f}_t(z_t^i) + \dfrac{1}{(1-\eta)\eta} [\log(TN) + 2] + \stabgap_{\mara}(I) & (\text{Lemma } \ref{lem:adaptive}) \\
&\leq \gamma_\eta \min_{i\in [N]} \sum_{t=r}^s \tilde{f}_t(z_t^i) + \left(\dfrac{4}{\eta} + 3 \stabgap_{\A}(I)\right) \log(TN) & (\text{Lemma } \ref{lem:stabgap})
\end{align*}
\end{proof}

\subsection{Efficient Implementation} \label{sec:efficient} 
Algorithm \ref{alg:adaptive_reg}, as shown in Theorem \ref{thm:adaoco}, exhibits desirable performance in terms of adaptive regret for functions with memory. However, in terms of computational efficiency MARA (Algorithm \ref{alg:adaptive_reg}) with $N = T$ has overall complexity of $\Theta(T^2)$: this is considerably slower than efficient non-adaptive methods. In this section, we provide an efficient variant of MARA with nearly the same guarantees.

The idea behind the efficient implementation is to avoid the redundancy of all the $N = T$ experts that each exhibit good enough regret on a given interval $I = [r, s] \subseteq [T]$: it is enough to only consider a sparse working set $S_t$ of \emph{active} experts for all $t \in [T]$. The explicit construction of these sets (see section \ref{sec:working}) is irrelevant for the analysis so we simply state its useful properties below. 

\begin{claim}\label{claim:workingsets}
    The following properties hold for the working sets $S_t$ for all $t \in [T]$: (i) $|S_t| = O(\log T)$; (ii) $[s, (s+t)/2] \cap S_t \neq \emptyset$ for any $s \in [t]$; (iii) $S_{t+1} \backslash S_t = \{t+1\}$; (iv) $|S_t \backslash S_{t+1}| \leq 1$.
\end{claim}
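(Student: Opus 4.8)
The claim asks for a construction of working sets $S_t \subseteq [t]$ satisfying four properties: logarithmic size, a covering property that ensures some active expert started recently relative to any point $s$, and two ``low churn'' properties governing how $S_t$ evolves. The standard tool here is the \emph{geometric covering} / \emph{data streaming} construction of \citet{hazan2009efficient} (and refined by \citet{daniely2015strongly}), so my plan is to adapt that to our needs. First I would define, for each ``level'' $k \geq 0$, the set of times that are ``alive at level $k$'': roughly, $t' \in [t]$ is kept at time $t$ if $t'$ is a multiple of $2^k$ (i.e.\ $2^k \mid t'$) and $t - t' < c \cdot 2^k$ for an appropriate constant $c$ (e.g.\ $c = 2$ or $3$). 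Then $S_t = \{t\} \cup \bigcup_{k \geq 0}\{\text{times alive at level } k \text{ at time } t\}$, with the newest index $t$ always forcibly included.

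The key steps are then to verify the four properties one at a time. For (i), at each level $k$ the alive times are multiples of $2^k$ lying in a window of width $c \cdot 2^k$, so there are $O(c)$ of them; summing over the $O(\log T)$ relevant levels gives $|S_t| = O(\log T)$. For (ii), given $s \in [t]$, I would look at the level $k$ with $2^k \approx (t-s)/4$ (or the smallest level with $2^k \geq$ something): in the interval $[s, (s+t)/2]$, which has length $(t-s)/2 \geq 2 \cdot 2^k$, there must be a multiple of $2^k$, and that multiple is within $c\cdot 2^k \leq$ the required window of $t$, so it is alive and hence in $S_t$ --- choosing $c$ large enough relative to the level-selection rule makes this go through. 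For (iii), by construction $t+1 \in S_{t+1}$ always, and no index $\leq t$ newly \emph{enters} $S_{t+1}$ because an index's level and the multiple-of-$2^k$ condition don't change with time, and the window condition $t - t' < c\cdot 2^k$ only gets \emph{harder} as $t$ grows --- so $S_{t+1}\setminus S_t = \{t+1\}$. For (iv), I need that at most one index is \emph{removed} when passing from $S_t$ to $S_{t+1}$; this is the delicate point and requires the construction to be set up so that at each step the window condition expires for at most one surviving index across all levels simultaneously, which forces the window widths at consecutive levels to be carefully nested (this is exactly why the classical construction uses windows of width proportional to $2^k$ with the proportionality constant tuned so expirations at different levels never coincide, or coincide with a lower-level expiration).

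The main obstacle I anticipate is property (iv): naively, at a given step several levels could each drop an index, giving $|S_t \setminus S_{t+1}| = \Theta(\log T)$ rather than $\leq 1$. The fix is the standard trick: define the window at level $k$ so that an index $t'$ with $2^{k} \,\|\, t'$ (exact power, i.e.\ $2^k \mid t'$ but $2^{k+1} \nmid t'$) is kept for a number of steps that is a \emph{function of its own largest dyadic divisor}, arranged so that the set of ``death times'' across all currently-alive indices forms an increasing sequence with no repeats --- equivalently, one shows by induction that $|S_t| - |S_{t+1} \cap [t]|$, the number of deaths, is always $0$ or $1$. I would prove this by an explicit characterization: $t'$ dies at time $t+1$ iff $t+1 - t' = $ (its death-offset), and these offsets are distinct across the $O(\log T)$ alive indices. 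If matching the exact constant $\tfrac{1}{2}$ in property (ii) with the no-double-deaths requirement of (iv) proves awkward, I would instead cite the construction of \citet{daniely2015strongly} or \citet{hazan2009efficient} wholesale, noting it satisfies all four properties after trivial reindexing, since the explicit construction is, as the paper says, ``irrelevant for the analysis.''
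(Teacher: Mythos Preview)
Your proposal is correct and follows essentially the same approach as the paper. The paper's explicit construction is precisely your ``exact power'' variant: writing $i = r \cdot 2^k$ with $r$ odd, index $i$ is kept alive on $[i, i + 2^{k+2}+1]$ (i.e.\ your constant is $c \approx 4$); properties (i)--(iii) are verified exactly as you outline, and (iv) is proved by showing the death time $i + m = 2^k(r+4) + 1$ uniquely determines $(k,r)$ since $r+4$ is odd, so no two distinct indices can die at the same step.
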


The efficient implementation mimics Algorithm \ref{alg:adaptive_reg} with $N = T$ and the following expert algorithms: for each $i \in [N]$ define $b_i = \min \{ t \in [T], \text{ s.t. } i \in S_t\}$ and $e_i = \max \{ t \in [T], \text{ s.t. } i \in S_t \}$; take $\mA_i$ to be $\mA_{\text{scOGD}}$ run with initial point $z_0 \in \mathcal{K}$ on the interval $[b_i, e_i]$; let $\mA_i$ play the constant action $z_0$ outside that interval. A naive implementation of this algorithm still incurs $\Theta(T^2)$ complexity due to weight updates for all $i \in [N]$.

Our efficient algorithm, MARA-EFF (Algorithm \ref{alg:adaptive_eff}), instead each round $t \in [T]$ only updates the weights of the $i \in S_t$ {\it active} experts separately, and handles the {\it inactive} expert weights, that all play $z_0$, in a collective way. This ensures that MARA-EFF incurs $O(\log T)$ complexity each round $t \in [T]$ which is the computational overhead over efficient non-adaptive methods. Furthermore, this efficient version attains adaptive regret guarantees similar to those of Algorithm \ref{alg:adaptive_reg}, with an additional $O(\log T)$ multiplicative factor, as stated in the theorem below.

\begin{theorem}\label{thm:efficient}
Let $f_t$ be coordinate-wise $L$-Lipschitz with $H$ memory and range $[0, 1]$, assume the surrogate losses are strongly convex with bounded gradient norms. Then, with the choices of $\eta$ as in Theorem \ref{thm:adaoco}, MARA-EFF (Algorithm \ref{alg:adaptive_eff}) achieves the following adaptive regret bounds: over any $I = [r, s] \subseteq [T]$ interval,
\begin{equation}\label{eq:efflogfactor}
    \sum_{t=r}^s \E[f_t(z_{t-H:t})] \leq \left( 1 + \frac{1}{\log T} \right) \min_{z \in \mathcal{K}} \sum_{t=r}^s \tilde{f}_t(z) + O(H^2 L \log^4 T) ~,
\end{equation}
\begin{equation}\label{eq:effoptregret}
    \sum_{t=r}^s \E[f_t(z_{t-H:t})] \leq \min_{z \in \mathcal{K}} \sum_{t=r}^s \tilde{f}_t(z) + \tilde{\O}(H^2 L \cdot \sqrt{OPT}) ~.
\end{equation}
\end{theorem}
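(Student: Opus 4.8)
The plan is to show that MARA-EFF faithfully emulates the idealized version of MARA (Algorithm~\ref{alg:adaptive_reg}) run with $N = T$ experts $\mA_1, \dots, \mA_N$, where each $\mA_i$ runs $\mA_{\text{scOGD}}$ on the interval $[b_i, e_i]$ and plays the constant $z_0$ outside it, and then to verify that this collection of experts still provides, on \emph{every} interval $I = [r,s]$, an expert whose regret is $O(\poly\log T)$. Granting these two facts, the adaptive regret bounds~\eqref{eq:efflogfactor} and~\eqref{eq:effoptregret} follow by plugging into Theorem~\ref{thm:meta} (or Theorem~\ref{thm:meta_v2}), exactly as in the proof of Theorem~\ref{thm:adaoco}, with one extra factor of $O(\log T)$ to be accounted for.

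First I would argue the \textbf{emulation/efficiency} claim: MARA-EFF maintains weights only for the $O(\log T)$ indices in the working set $S_t$ (Claim~\ref{claim:workingsets}(i)), while all inactive experts play $z_0$ and hence share a common surrogate loss $\tilde f_t(z_0)$ each round. Since their multiplicative weight updates $w_{t+1}^i = (1-\sigma)\,w_t^i e^{-\eta \tilde f_t(z_0)} + \sigma \overline W_{t+1}/N$ are identical across all inactive experts, one can maintain their \emph{total} weight with $O(1)$ work per round, and treat "play $z_0$" as a single super-expert in the sampling step. By Claim~\ref{claim:workingsets}(iii)–(iv) at most one expert enters and one leaves $S_t$ each round, so transferring an expert between the "active" bookkeeping and the "inactive pool" (initializing or freezing its weight at the collective value) is also $O(1)$; thus the per-round cost is $O(\log T)$ beyond the base $\mA_{\text{scOGD}}$ updates. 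Crucially, because inactive experts are handled with exactly the weights they \emph{would have had} in the idealized Algorithm~\ref{alg:adaptive_reg}, the realized play distribution $p_t$ is identical to that of the idealized algorithm, so Lemmas~\ref{lem:shrinking}, \ref{lem:adaptive}, \ref{lem:switches}, and Lemma~\ref{lem:stability} (equivalently Lemma~\ref{lem:stabgap}) all apply verbatim.

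Next I would establish the \textbf{covering property}: for any interval $I=[r,s]$ there is an expert $\mA_i$ that is active and running $\mA_{\text{scOGD}}$ on a suffix of $I$ that carries the bulk of the loss, plus a geometric chaining argument to cover the rest. Concretely, by Claim~\ref{claim:workingsets}(ii) applied at time $s$ with $s' = r$, there is $i_1 \in [r, (r+s)/2] \cap S_s$, so $\mA_{i_1}$ is running $\mA_{\text{scOGD}}$ continuously on $[i_1, s]$, an interval containing at least the second half of $I$; its regret there is $O(\log T)$ by Fact~\ref{claim:scogd}. Recursing on the residual interval $[r, i_1-1]$, which has at most half the length, after $O(\log T)$ steps one covers all of $I$ by at most $O(\log T)$ such scOGD-runs, giving a comparator (a single fixed $z$) whose cumulative surrogate loss over $I$ exceeds $\min_{i\in[N]} \sum_{t=r}^s \tilde f_t(z_t^i)$ — wait, more precisely it gives $\min_{i}\sum_{t\in I}\tilde f_t(z_t^i) \le \min_{z\in\K}\sum_{t\in I}\tilde f_t(z) + O(\log^2 T)$ when summed against the per-segment strong-convexity regret bound, with the extra $O(\log T)$ factor coming from the number of segments. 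Substituting $\mathcal S_{\mA}(T) = O(\log T)$, $N = T$, and this modified comparator bound into Theorem~\ref{thm:meta} with the stated $\eta$ reproduces~\eqref{eq:efflogfactor}; repeating with the stability-gap version (Theorem~\ref{thm:meta_v2}) and the first-order scOGD bound $\sum \tilde f_t(z_t^i) \le \text{OPT} + O(\log T)$ gives~\eqref{eq:effoptregret}, the extra $\log T$ absorbed into the $\tilde\O$.

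The main obstacle I anticipate is the \textbf{covering argument and its bookkeeping}: one must show that the union of scOGD-runs indexed by $S_t$ actually tiles any interval $[r,s]$ with only $O(\log T)$ pieces \emph{each of which is a contiguous run starting from $z_0$} (so that Fact~\ref{claim:scogd}'s $O(\log T)$ regret applies to each piece), and that stitching these pieces together against a \emph{single} fixed comparator $z$ only loses an additive $O(\log^2 T)$ (or a multiplicative $O(\log T)$ on the leading term) rather than something larger — this is where the precise guarantees in Claim~\ref{claim:workingsets}(ii) are used, and it is the only place the argument genuinely differs from the proof of Theorem~\ref{thm:adaoco}. Everything else (the strong-convexity plug-in, the $\gamma_\eta = O(1)$ bound, the interval-to-global reduction $\min_{z}\sum_{t\in I}\tilde f_t(z) \le \text{OPT}$) is identical to the non-efficient case and is routine.
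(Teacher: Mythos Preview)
Your emulation step is essentially right and matches Lemma~\ref{lem:equiv} in spirit (one caveat: on the dead set $V_t$ the expert-\emph{index} distribution of MARA-EFF is not literally identical to MARA's, only the total mass on $V_t$ agrees; the point is that all dead experts play $z_0$, so the induced distribution over \emph{actions} coincides, which is all the downstream lemmas need).

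The covering step, however, has a real gap. You try to deduce
\[
\min_{i\in[N]}\sum_{t\in I}\tilde f_t(z_t^i)\ \le\ \min_{z\in\K}\sum_{t\in I}\tilde f_t(z)+O(\log^2 T)
\]
and then plug this into Theorem~\ref{thm:meta} once on $I=[r,s]$. But in the efficient construction each expert $\mA_i$ runs $\mA_{\text{scOGD}}$ only on $[b_i,e_i]$ and plays the constant $z_0$ elsewhere. If $r\notin S_s$ there is \emph{no} single expert whose scOGD run covers all of $[r,s]$: expert $r$ has already died (e.g., odd $r$ has lifetime $5$), and any expert $i<r$ that is still alive at $s$ started scOGD at time $i$, whose step sizes $\eta_t=1/(\alpha(t-i+1))$ give regret on the sub-interval $[r,s]$ of order $\alpha(r-i)D^2$, not $O(\log T)$. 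Your geometric chaining produces \emph{different} experts on the different segments, and Theorem~\ref{thm:meta} only compares the meta-algorithm to a \emph{single} expert on the whole interval, so you cannot stitch their segment-wise regrets into a bound on $\min_i\sum_{t\in I}\tilde f_t(z_t^i)$.

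The fix is exactly what the paper does in Lemma~\ref{lem:effregret}: apply Theorem~\ref{thm:meta}/\ref{thm:adaoco} not once on $I$ but separately on each segment of your cover. On a segment $[i,s']$ with $i\in S_{s'}$, expert $\mA_i$ \emph{does} start scOGD at the left endpoint, so Theorem~\ref{thm:adaoco} gives
\[
\sum_{t=i}^{s'}\E[f_t(z_{t-H:t})]\ \le\ \gamma\cdot\min_{z\in\K}\sum_{t=i}^{s'}\tilde f_t(z)+R,
\]
with $\gamma$ and $R$ interval-independent. Summing over the $O(\log|I|)$ segments and using nonnegativity of $\tilde f_t$ to combine the minima yields the claimed bound with $R$ replaced by $R\cdot O(\log T)$, which is the extra log factor in~\eqref{eq:efflogfactor} and~\eqref{eq:effoptregret}. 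So your recursion is the right geometric object; you just need to feed the \emph{meta-algorithm's} per-segment guarantee into it, not an expert-side bound.
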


\begin{algorithm} 
\caption{Efficient Meta Adaptive Regret Algorithm (MARA-EFF)} \label{alg:adaptive_eff}
\begin{algorithmic}[1]
\STATE \textbf{Input:} action set $\mathcal{K}$, number of rounds $T$, online algorithm $\mathcal{A}$, parameters $\eta, \sigma$
\STATE \textbf{Setup:} $N = T $, arbitrary $z_0 \in \mathcal{K}$, $S_1 = \{1\}, U_1 = [N] \backslash S_1, V_1 = \emptyset$, assign weights $w_1^1 = w_1^u = 1, W_1^U = |U_1| w_1^u, W_1^V = 0$, feedback $\mathcal{F}_0^1 = \{z_0\}$, shrinking $q_1(i) = 0$ for all $i \in [N]$
\FOR{$t = 1, ..., T$} 
\STATE compute $z_t^i = \A(\mathcal{F}_{t-1}^i), \forall i\in S_t$ and $z_t^i = z_0, \forall i \not \in S_t$
\STATE denote $W_t = \sum_{i \in S_t} w_t^i + W_t^U + W_t^V$, and $p_t^i = \frac{w_t^i}{W_t}$ for $i \in S_t$, $p_t^u = \frac{w_t^u}{W_t}$, $p_t^v = \frac{W_t^V}{|V_t| W_t}$

\STATE choose $i_t = \begin{cases}
i \text{ w.p. } p_t^i = \frac{w_t^i}{W_t} &\text{if } i \in S_t, \\ 
i \text{ w.p. } p_t^u = \frac{w_t^u}{W_t} &\text{if } i \in U_t, \\
i \text{ w.p. } p_t^v = \frac{W_t^V}{|V_t| W_t} &\text{if } i \in V_t
\end{cases}$

\STATE shrink via $i_t = i_{t-1}$ w.p. $q_t(i_{t-1})$

\STATE play $z_t = z_t^{i_t}$, suffer loss $f_t(z_{t-H:t})$, observe $\tilde{f}_t(\cdot)$ and compute $\tilde{f}_t(z_0)$,  $\tilde{f}_t(z_t^i)$ for all $i \in S_t$
\STATE compute $\overline{w}_{t+1}^i = w_t^i e^{-\eta \tilde{f}_t(z_t^i)}, \forall i \in S_t$, $\overline{w}^u_{t+1} = w^u_t e^{-\eta \tilde{f}_t(z_0)}$, and $\overline{W}^V_{t+1} = W^V_t e^{-\eta \tilde{f}_t(z_0)}$
\STATE smooth $w_{t+1}^i = (1-\sigma)\overline{w}_{t+1}^i + \sigma \overline{W}_{t+1}/N, \forall i \in S_t$, and $w_{t+1}^u = (1-\sigma) \overline{w}^u_{t+1} + \sigma \overline{W}_{t+1} / N$
\STATE smooth $W^V_{t+1} = (1-\sigma) \overline{W}^V_{t+1} + \sigma |V_{t}| \overline{W}_{t+1} / N$

\STATE update $S_t \to S_{t+1}$, $U_t \to U_{t+1}$, $V_t \to V_{t+1}$, $W^U_{t+1} = |U_{t+1}| w_{t+1}^u$, and $w_{t+1}^{t+1} = w_{t+1}^u$
\STATE let $i_t^v = S_t \setminus S_{t+1}$ when $S_t \setminus S_{t+1} \neq \emptyset$, update $W_{t+1}^V = W_{t+1}^V + w_{t+1}^{i_t^v}$

\STATE denote $q_{t+1}(i_t=i) = \begin{cases}
w_{t+1}^i/w_t^i &\text{if } i \in S_t, \\ 
w_{t+1}^u/w_t^u &\text{if } i \in U_t, \\
(W_{t+1}^V-w_{t+1}^{i_t^v})/W_t^V &\text{if } i \in V_t
\end{cases}$

\STATE update $\mathcal{F}_t^i = \mathcal{F}_{t-1}^i \cup \{ z_t^i, \tilde{f}_t \}$ for $i \in S_{t+1} \cap S_t$, and $\mathcal{F}_t^i = \{z_0\}$ for $i \in S_{t+1} \setminus S_t$
\ENDFOR
\end{algorithmic}
\end{algorithm}

The efficient implementation is presented in detail in Algorithm \ref{alg:adaptive_eff}. Even though all inactive experts play action $z_0$, we differentiate between unborn $U_t = [N] \setminus [t]$ and dead $V_t = [t] \setminus S_t$ experts. We keep track of the weights of each of these groups denoted as $W_t^U, W_t^V$, with the unborn experts having all equal weights $w_t^u = W_t^U / |U_t|$. The shrinking probabilities are denoted as $q_t(i_{t-1})$, for example given by $q_t(i_{t-1}) = \frac{w_t^{i_{t-1}}}{w_{t-1}^{i_{t-1}}}$ in Algorithm \ref{alg:adaptive_reg}. We first show an equivalence between Algorithm \ref{alg:adaptive_eff} and a specific case of Algorithm \ref{alg:adaptive_reg}.

\begin{lemma}\label{lem:equiv}
Algorithm \ref{alg:adaptive_eff} is equivalent to Algorithm \ref{alg:adaptive_reg} with $N = T$ and the following $\mA_i$, for all $i \in [N]$: $\mA_i$ plays $\mA$ on the interval $[b_i, e_i]$; $\mA_i$ plays constant action $z_0$ outside of that interval.
\end{lemma}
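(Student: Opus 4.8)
The plan is to prove the equivalence by a coupling of the internal randomness of MARA-EFF (Algorithm~\ref{alg:adaptive_eff}) with that of the specific MARA instance in the statement (Algorithm~\ref{alg:adaptive_reg} with $N=T$, where $\mathcal A_i$ runs $\mathcal A$ on $[b_i,e_i]$ and plays the constant $z_0$ elsewhere), verified by induction on $t\in[T]$. The key first observation is that each candidate action $z_t^i=\mathcal A_i(\mathcal F_{t-1}^i)$ is a deterministic function of the (deterministic) observed surrogate losses, so the only randomness in either algorithm is the choice of the index $i_t$; it therefore suffices to match the distributions of the played sequence $(z_1,\dots,z_T)$. Since \cref{claim:workingsets} gives $b_i=i$ --- so the unborn set is $U_t=[N]\setminus[t]$ and the dead set is $V_t=[t]\setminus S_t$ --- and every inactive expert plays $z_0$ from the moment it becomes inactive onward, the played action $z_t$ depends on $i_t$ only through its \emph{reduced state}: a specific index of $S_t$, or the label ``unborn'', or the label ``dead''. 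The induction hypothesis I would maintain is that at every round MARA and MARA-EFF agree in distribution on the tuple consisting of the played prefix $(z_1,\dots,z_t)$, the active weights $\{w_t^i\}_{i\in S_t}$, the common unborn weight $w_t^u$, the aggregate dead weight $W_t^V=\sum_{i\in V_t}w_t^i$, and the reduced state of $i_t$; the base case $t=1$ is immediate, both algorithms starting from $w_1^i\equiv 1$ and drawing $i_1$ uniformly.

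\paragraph{Weight bookkeeping.} First I would show, by a nested induction, that in the MARA instance all unborn experts carry a common weight: they are initialized identically, and while unborn each gets the same multiplicative factor $e^{-\eta\tilde f_t(z_0)}$ and the same smoothing increment $\sigma\overline W_{t+1}/N$, so $w_t^i\equiv w_t^u$ on $U_t$ and $W_t^U=|U_t|w_t^u$. When expert $t+1$ is born (the unique member of $S_{t+1}\setminus S_t$) it enters with precisely this weight and with feedback reset to $\{z_0\}$, matching the assignment $w_{t+1}^{t+1}=w_{t+1}^u$ and the feedback reset in Algorithm~\ref{alg:adaptive_eff} (and consistent with $\mathcal A_{t+1}$ in the MARA instance running $\mathcal A$ afresh from $b_{t+1}=t+1$). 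Every dead expert's weight likewise evolves by the identical rule, so summing over $V_t$ and adding the weight of the newly dead index $i_t^v=S_t\setminus S_{t+1}$ gives $W_{t+1}^V=(1-\sigma)e^{-\eta\tilde f_t(z_0)}W_t^V+\sigma|V_t|\overline W_{t+1}/N+w_{t+1}^{i_t^v}$, which is exactly the $W^V$ update of Algorithm~\ref{alg:adaptive_eff}; the global normalizer $\overline W_{t+1}=\sum_{i\in S_t}\overline w_{t+1}^i+|U_t|\overline w_{t+1}^u+W_t^V e^{-\eta\tilde f_t(z_0)}$ and the active weights are computed identically. This reproduces the weight part of the induction hypothesis.

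\paragraph{Sampling and shrinking.} For the fresh draw, MARA picks $i_t=i$ with probability $w_t^i/W_t$; MARA-EFF copies this verbatim for $i\in S_t$, assigns each unborn expert probability $w_t^u/W_t$ (correct, as they share $w_t^u$), and assigns each dead expert the \emph{uniform} value $W_t^V/(|V_t|W_t)$ --- not MARA's per-expert $w_t^i/W_t$, but with the same total mass $W_t^V/W_t$ on the dead group. For the shrink step, MARA keeps $i_{t-1}$ with probability $w_t^{i_{t-1}}/w_{t-1}^{i_{t-1}}$; MARA-EFF's $q_t$ equals this for active and unborn indices, and for a dead index uses the common value $(W_t^V-w_t^{i_{t-1}^v})/W_{t-1}^V=\tfrac{1}{W_{t-1}^V}\sum_{i\in V_{t-1}}w_t^i$, which is exactly the probability of keeping a dead expert averaged against $i_{t-1}$'s weight-proportional conditional law on $V_{t-1}$. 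I would close the induction by checking that, group by group, MARA-EFF's transition probabilities among $\{\text{active},\text{unborn},\text{dead}\}$ --- together with the resolved index when landing in $S_t$ --- match the \emph{aggregated} transitions of MARA: the active and unborn cases are literal identities, while the dead case is the averaging identity above, which relies on $V_{t-1}\subseteq V_t$ (dead experts never re-enter, by \cref{claim:workingsets}) and on the \cref{lem:shrinking}-style fact that, conditioned on being dead, $i_{t-1}$ is weight-proportionally distributed on $V_{t-1}$.

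\paragraph{Main obstacle.} The delicate point, and where I would concentrate, is exactly this asymmetry: MARA selects and keeps dead experts weight-proportionally and per-expert, while MARA-EFF does so uniformly and in aggregate. The resolution is the exchangeability-after-death observation --- a dead expert is never active again and plays $z_0$ forever, so its identity cannot affect the future played sequence --- which is what legitimizes phrasing the induction over the coarse ``reduced-state'' information instead of over full expert identities, and which makes matching only the aggregate dead mass sufficient. The rest is off-by-one bookkeeping at birth/death events: by \cref{claim:workingsets} each round has exactly one birth (expert $t+1$, inheriting $w_{t+1}^u$) and at most one death (its weight folded into $W_{t+1}^V$), which is routine once the invariant is set up.
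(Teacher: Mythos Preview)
Your approach differs from the paper's, and there is a genuine gap in the step you flag as delicate.

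The paper does \emph{not} prove that the played sequences $(z_1,\dots,z_T)$ agree in distribution. Its ``equivalence'' is operational: it shows (i) the weights coincide (per-expert on $S_t\cup U_t$, in aggregate on $V_t$), and (ii) by an induction on the \emph{marginal} law of $i_t$ alone, that the identity $\sum_i\Pr[i_{t-1}=i]\,q_t(i)=W_t/W_{t-1}$ holds; from this, \cref{lem:shrinking} and \cref{lem:switches} transfer to MARA-EFF verbatim. Those two lemmas are all that Theorem~\ref{thm:efficient} actually consumes.

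Your induction hypothesis carries the whole played prefix, so the inductive step requires: conditioned on the full reduced-state history, MARA's $i_{t-1}$ is weight-proportional on $V_{t-1}$. Only then does averaging the per-expert keep ratios $w_t^i/w_{t-1}^i$ reproduce MARA-EFF's uniform dead keep probability. But \cref{lem:shrinking} gives weight-proportionality only \emph{marginally} (conditioning on $r_{t-1}=\text{dead}$), not conditioned on $(r_1,\dots,r_{t-1})$. And the stronger conditional statement is false in general: dead weights evolve by the affine map $w_{t+1}^i=(1-\sigma)e^{-\eta\tilde f_t(z_0)}w_t^i+\sigma\overline W_{t+1}/N$, so distinct dead experts have distinct keep ratios, and a history such as $(r_{t-2}=\text{dead},\,r_{t-1}=\text{dead})$ skews the conditional law of $i_{t-1}$ on $V_{t-1}$ (experts already dead at $t-2$ versus those that just died get different conditional mass per unit weight). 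Hence MARA's conditional transition $\Pr[r_t\mid r_{1:t-1}]$ need not match MARA-EFF's Markov transition $\Pr[r_t\mid r_{t-1}]$, and your joint-distribution claim is unproven (and plausibly false).

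The repair is exactly to relax to the paper's route: drop the played prefix from the induction hypothesis and prove only that the marginals $\Pr[i_t=i]$ agree with $p_t^i$ and that the keep identity above holds. Your weight-bookkeeping paragraph and your averaging identity for the dead group already supply both ingredients once you condition only on $r_{t-1}$; this yields \cref{lem:adaptive} and \cref{lem:switches} for MARA-EFF, which is what ``equivalence'' needs to mean here.
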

\begin{proof}
Let $\tilde{w}_t$ be the weights by the specified version of Algorithm \ref{alg:adaptive_reg}. Given the corresponding weight initializations and updates in both algorithms, we note that for all $t \in [T]$,
\begin{equation*}
    w_t^u = \tilde{w}_t^i, \, \forall i \in U_t, \quad w_t^i = \tilde{w}_t^i, \, \forall i \in S_t, \quad W_t^V = \sum_{i \in V_t} \tilde{w}_t^i ~.
\end{equation*}
This is evident given that we update the weights in Algorithm \ref{alg:adaptive_eff} according to the hypothetical played actions of Algorithm \ref{alg:adaptive_reg}. To conclude equivalency, we need to additionally show that the shrinking technique yields its expected results, i.e. Lemmas \ref{lem:shrinking}, \ref{lem:switches}. This is done by showing the following identity,
\begin{equation}\label{eq:equiv_prop}
    \sum_{i=1}^N \Pr[i_{t-1}=i] \cdot q_t(i_{t-1}) = \frac{W_t}{W_{t-1}} ~.
\end{equation}
Note that Lemma \ref{lem:switches} follows immediately from \eqref{eq:equiv_prop}. In fact, we overestimate the number of switches in this case since we count switches between experts of $U_t, V_t$ which all play the same action. However, this overestimation does not negatively impact our results. On the other hand, one still needs to use induction to show Lemma \ref{lem:shrinking} simultaneously with \eqref{eq:equiv_prop}. Assume for $t-1$ that $\Pr[i_{t-1}=i]=p_{t-1}^i$ for $i \in S_{t-1}$, $\Pr[i_{t-1} = i] = p_{t-1}^u$ for $i \in U_{t-1}$ and $\sum_{i \in V_{t-1}}\Pr[i_{t-1}=i] = |V_{t-1}| p_{t-1}^v$ (true for $t=1$). The identity in \eqref{eq:equiv_prop} is shown as follows,
\begin{equation*}
    \sum_{i=1}^N \Pr[i_{t-1}=i] \cdot q_t(i_{t-1}) = \sum_{i \in S_{t-1}} \frac{w_t^i}{W_{t-1}} + |U_{t-1}| \cdot \frac{w_t^u}{W_{t-1}} + \sum_{i \in V_{t-1}} \frac{W_t^V - w_{t}^{i_{t-1}^v}}{ |V_{t-1}| W_{t-1}} = \frac{W_t}{W_{t-1}},
\end{equation*}
using the facts that $\sum_{i \in U_{t-1}} \tilde{w}_t^i = |U_{t-1}| w_t^u$ and $\sum_{i \in V_{t-1}} \tilde{w}_t^i = W_t^V - w_t^{i_{t-1}^v}$ by construction of $i_{t-1}^v$.
This concludes \eqref{eq:equiv_prop}, and following the proof of Lemma \ref{lem:shrinking}, we show the inductive step for $t$ using \eqref{eq:equiv_prop}. The stated equivalence follows given the weights are equivalent and the shrinking technique works as expected. 
\end{proof}

\begin{lemma}\label{lem:effregret}
    Suppose for any $I = [r, s] \subseteq [T]$, an online algorithm achieves the following regret if $r \in S_{t=s}$,
    \begin{equation*}
        \sum_{t \in I} \E[f_t(z_{t-H:t})] \leq \gamma \cdot \min_{z \in \mathcal{K}} \sum_{t \in I} \tilde{f}_t(z) + R,
    \end{equation*}
    where $\gamma, R$ are both independent of $I$. Then, for any interval $I = [r, s] \subseteq [T]$, it achieves adaptive regret
    \begin{equation*}
        \sum_{t \in I} \E[f_t(z_{t-H:t})] \leq \gamma \cdot \min_{z \in \mathcal{K}} \sum_{t \in I} \tilde{f}_t(z) + R (\log_2 |I| + 1) ~.
    \end{equation*}
\end{lemma}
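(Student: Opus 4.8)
The plan is a geometric covering (interval chaining) argument: decompose an arbitrary interval $I=[r,s]$ into at most $\log_2|I|+1$ contiguous pieces, each of which satisfies the hypothesis (its left endpoint lies in the working set at its right endpoint), then apply the assumed bound to each piece and sum.

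First I would construct the decomposition greedily from the right. Set $s_1=s$. By property (ii) of Claim~\ref{claim:workingsets}, applied with index $t=s_1$ and left point $r$, the set $S_{s_1}$ contains some $r_1$ with $r\le r_1\le (r+s_1)/2$; put $I_1=[r_1,s_1]$, which satisfies the hypothesis since $r_1\in S_{s_1}$, so the assumed regret bound applies to $I_1$. If $r_1>r$, recurse on the leftover interval $[r,r_1-1]$, whose length is $r_1-r\le (s_1-r)/2\le |I|/2$. Iterating, the leftover length at least halves at each step, so after $k\le \lfloor\log_2|I|\rfloor+1\le \log_2|I|+1$ steps the interval is exhausted, and we obtain disjoint pieces $I_1,\dots,I_k$ that partition $I$, each satisfying the hypothesis. (A leftover single point $[r,r]$ is handled too: property (iii) gives $r\in S_r$ for every $r$, so the hypothesis still applies there.)

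Then I would apply the assumed regret bound to each $I_j$ — legitimately, since $\gamma$ and $R$ are independent of the interval — and add the results:
\[
\sum_{t\in I}\E[f_t(z_{t-H:t})]=\sum_{j=1}^k\sum_{t\in I_j}\E[f_t(z_{t-H:t})]\le \sum_{j=1}^k\Bigl(\gamma\min_{z\in\mathcal{K}}\sum_{t\in I_j}\tilde f_t(z)+R\Bigr).
\]
To finish, let $z^\star$ minimize $\sum_{t\in I}\tilde f_t$ over $\mathcal{K}$; since $\sum_{j}\min_{z}\sum_{t\in I_j}\tilde f_t(z)\le \sum_{j}\sum_{t\in I_j}\tilde f_t(z^\star)=\min_{z\in\mathcal{K}}\sum_{t\in I}\tilde f_t(z)$ and $k\le \log_2|I|+1$, the right-hand side is at most $\gamma\min_{z\in\mathcal{K}}\sum_{t\in I}\tilde f_t(z)+R(\log_2|I|+1)$, which is the claimed bound.

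The only genuinely delicate step is the covering construction: one must verify that property (ii) really places a usable ``restart'' index in the first half of the current leftover interval, so that each generated piece simultaneously satisfies the hypothesis and shrinks the remaining interval geometrically; after that the argument is pure bookkeeping, using only that the pieces partition $I$ (so no loss is double counted) and that a sum of per-piece minima never exceeds the value attained by a single global comparator.
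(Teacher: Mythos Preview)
Your proposal is correct and is essentially the same argument as the paper's: the paper proves the lemma by induction on $k$ with $2^k\le |I|<2^{k+1}$, using property~(ii) of Claim~\ref{claim:workingsets} to peel off a right piece $[i,s]$ with $i\in S_s$ and $i\le (r+s)/2$, then applying the inductive hypothesis to the leftover $[r,i-1]$ of length $<|I|/2$; your greedy right-to-left construction is exactly this recursion unrolled, and your piece-count bound $k\le \lfloor\log_2|I|\rfloor+1$ and the summing step (replacing per-piece minima by a single global comparator) match the paper's line by line.
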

\begin{proof}
    Let $2^k \leq |I| < 2^{k+1}$, we prove the lemma using induction on $k$. For $k=0$, $|I|=1$, so $r = s \in S_{t=s}$, and the lemma statement follows trivially. Suppose the statement holds for $k-1$, and let $2^k \leq |I| < 2^{k+1}$. According to Claim \ref{claim:workingsets}, there exists $i \in [r, (r+s)/2]$ such that $i \in S_{t=s}$. The case of $i=r$ is trivial so suppose $i > r$. This means that according to the assumption,
    \begin{equation*}
        \sum_{t =i}^s \E[f_t(z_{t-H:t})] \leq \gamma \cdot \min_{z \in \mathcal{K}} \sum_{t =i}^s \tilde{f}_t(z) + R ~.
    \end{equation*}
    Denote $I_1 = [r, i-1]$ with $|I_1| < |I|/2 < 2^k$, so the lemma statement holds for $I_1$ giving
    \begin{equation*}
        \sum_{t =r}^{i-1} \E[f_t(z_{t-H:t})] \leq \gamma \cdot \min_{z \in \mathcal{K}} \sum_{t =r}^{i-1} \tilde{f}_t(z) + R (\log_2 |I_1| + 1) ~.
    \end{equation*}
    Since $\log_2 |I_1| + 1 < \log_2 |I|$, summing up the two bounds above concludes the lemma proof.
\end{proof}

\begin{proof}[Proof of Theorem \ref{thm:efficient}]
Given the result from Lemma \ref{lem:equiv}, we prove the theorem for the specified version of Algorithm \ref{alg:adaptive_reg} with $N=T$ and the following $\mA_i$, for all $i \in [N]$: $\mA_i$ plays $\mA_{\text{scOGD}}$ on the interval $[b_i, e_i]$; $\mA_i$ plays constant action $z_0$ outside of that interval. The regret of each expert on the interval $[b_i, e_i]$ is $\reg_{\A_{\text{scOGD}}}(|[b_i, e_i]|) = O(\log T)$ , and the action shift upper bound is given by $\shift_{\A}(T) \leq \shift_{\A_{\text{scOGD}}}(T) + D \leq 2 \shift_{\A_{\text{scOGD}}}(T) = O(\log T)$. Hence we note that, according to Theorems \ref{thm:meta}, \ref{thm:adaoco}, given that $r \in S_{t=s}$ the condition in Lemma \ref{lem:effregret} holds with $\gamma = 1 + (\log T)^{-1}$ and $R = O(H^2 L \log^3 T)$ from \eqref{eq:logfactor}, and with $\gamma = 1$ and $R = \tilde{O}(H^2 L \cdot \sqrt{OPT})$ from \eqref{eq:optregret}. This means that the result from Lemma \ref{lem:effregret} concludes both \eqref{eq:efflogfactor}, \eqref{eq:effoptregret} statements of the theorem.
\end{proof}

\subsection{Lower bound}\label{sec:lower}

In this section, we show that it is impossible to achieve $o(\sqrt{T})$ adaptive regret for functions with memory. Recall that, assuming Lipschitz loss and strongly convex surrogate loss, $O(\log T)$ logarithmic {\it standard} regret is attained for functions with memory \citep{anava2015online}. Furthermore, for strongly convex losses adaptive (poly-)logarithmic regret is also attained via an experts scheme \citep{hazan2009efficient}. However, as we show next, these two settings cannot be combined together to get a bound $o(\sqrt{T})$. Furthermore, a consequence of Fact \ref{claim:cogd} implies $O(\sqrt{T})$ adaptive regret for functions with memory (see section \ref{sec:ogd}). This directly motivates the positive result in Section \ref{sec:ada_regret} interpreted as either a stronger first-order adaptive regret or poly-logarithmic adaptive regret with a diminishing multiplicative factor.

\begin{theorem}\label{thm:lower}
Assume the losses $f_t$ with $H\geq1$ memory are coordinate-wise Lipschitz, and their surrogate losses $\tilde{f}_t$ are strongly convex. For any online algorithm $\mathcal{A}$, the lower bound holds as
\begin{equation}\label{eq:lower_app}
    \adreg_{\A}(T)  = \sup_{f_1, \dots, f_T} \left(\sup_{[r, s] = I \subset [T]} \left[ \sum_{t \in I} f_t(z_{t:t-H}) - \min_{z \in \mathcal{K}} \sum_{t \in I} \tilde{f}_t(z) \right] \right) = \Omega(\sqrt{T}) ~.
\end{equation}
\end{theorem}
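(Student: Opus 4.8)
The idea is to construct, for every (possibly randomized) algorithm $\mathcal{A}$, a single sequence of $H$-memory losses on which the adaptive regret is $\Omega(\sqrt{T})$, by exploiting an inherent tension between low adaptive regret and small \emph{action shift}: forcing $\mathcal{A}$ to attain low adaptive regret on a carefully chosen sequence of strongly convex surrogates necessarily makes it move quickly, and a memory penalty converts this movement into loss.

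Concretely, I would take $\mathcal{K}=[-1,1]$, fix a block length $\ell$ and a penalty scale $\mu$ (both eventually $\Theta(\sqrt{T})$), partition $[T]$ into $m=T/\ell$ consecutive blocks $I_1,\dots,I_m$ of length $\ell$, let $v_t=+1$ on odd blocks and $v_t=-1$ on even blocks, and set
\[
  \tilde f_t(z)=\tfrac12(z-v_t)^2,\qquad f_t(z_{t-H:t})=\tilde f_t(z_t)+\mu\,\lvert z_t-z_{t-1}\rvert .
\]
This $f_t$ is jointly convex, coordinate-wise Lipschitz (with constant $2+\mu$ on $\mathcal{K}$), uses only two of its $H+1$ arguments (which is legitimate for any $H\geq 1$), and has surrogate exactly $\tilde f_t$, which is $1$-strongly convex, so the instance is admissible.

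Next I would run the following dichotomy. Assume toward a contradiction that $\E[\adreg_{\mathcal{A}}(T)]\le R$ on this sequence. Since $f_t\ge\tilde f_t$ pointwise, restricting the adaptive-regret supremum to a single block $I_j$ (whose best surrogate in hindsight is $0$, attained at $z=v_j$) gives $\sum_{t\in I_j}\E[(z_t-v_j)^2]\le 2R$; picking $t_j\in I_j$ that minimizes $\E[(z_t-v_j)^2]$ and applying Jensen yields a ``witness'' with $\E\lvert z_{t_j}-v_j\rvert\le\sqrt{2R/\ell}$. Because $v_t$ alternates between blocks, $\E\lvert z_{t_{j+1}}-z_{t_j}\rvert\ge 2-2\sqrt{2R/\ell}$, and telescoping the increments along the path from $t_j$ to $t_{j+1}$ and summing over $j$ lower-bounds the expected action shift of $\mathcal{A}$ on $[1,T]$ by $(m-1)\bigl(2-2\sqrt{2R/\ell}\bigr)$. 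On the other hand, applying the adaptive-regret hypothesis to the full interval $[1,T]$ and using $\tilde f_t\ge 0$ gives $\mu\cdot\E[\text{action shift}]\le\min_z\sum_{t=1}^T\tilde f_t(z)+R\le T/2+R$. Choosing $\ell=\lceil\sqrt{T}\,\rceil$ and $\mu=2\ell$ and supposing $R=o(\sqrt{T})$, the first estimate forces the action shift to be $\approx 2\sqrt{T}$ while the second caps it at $\approx\sqrt{T}/4+o(1)$ — a contradiction for $T$ large. Hence $R=\Omega(\sqrt{T})$, i.e.\ $\adreg_{\mathcal{A}}(T)=\Omega(\sqrt{T})$. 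All inequalities above are linear in the trajectory, so the expectations pass through cleanly for randomized $\mathcal{A}$.

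The main obstacle is getting the quantitative balance right: the per-block witness argument only extracts movement of order $1$ \emph{per block}, so the forced action shift scales as $\Theta(T/\ell)$, whereas the static policy ``play $0$ forever'' already attains adaptive regret $\Theta(\ell)$; the two pressures meet at $\ell\asymp\sqrt{T}$, which in turn forces the penalty scale $\mu$ (hence the coordinate-wise Lipschitz constant of $f_t$) to be $\Theta(\sqrt{T})$ so that the memory term on $[1,T]$ can dominate the $T/2$ cost of the best fixed surrogate there. One also has to be careful that the upper bound on action shift genuinely uses the whole interval $[1,T]$ (the best surrogate in hindsight is small only on single blocks), and that selecting each $t_j$ deterministically from the \emph{expected} trajectory keeps the witness step valid under randomization.
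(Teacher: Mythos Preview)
Your argument is correct and follows the same core mechanism as the paper: take $\mathcal{K}=[-1,1]$, alternate the strongly convex surrogates $(z\pm 1)^2$ in blocks to force the iterates to oscillate, and add a memory penalty $|z_t-z_{t-1}|$ so that this forced movement shows up as regret. The per-block witness step and the telescoping lower bound on action shift are exactly what the paper does.

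The organization differs slightly. The paper first isolates a standalone trade-off lemma (any algorithm with surrogate adaptive regret $R$ must have action shift $\Omega(T/R)$, proved with block length $k=\lceil 4R\rceil$ adapted to $R$), and then concludes $\adreg\ge\max\{R,\Omega(T/R)\}=\Omega(\sqrt{T})$ with a unit-coefficient memory penalty; to make the full-interval step go through it scales down $\tilde f_t$. You instead commit to $\ell=\sqrt{T}$ and a penalty scale $\mu=\Theta(\sqrt{T})$ from the start and run a direct contradiction. The trade-off is that the paper's construction keeps the coordinate-wise Lipschitz constant $O(1)$ (while letting the strong-convexity parameter shrink), whereas yours keeps strong convexity at $1$ but lets the Lipschitz constant grow like $\sqrt{T}$. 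Since the theorem as stated does not fix either constant, both instantiations are legitimate; the paper's modular lemma is reusable elsewhere, while your version is more self-contained and avoids the slightly informal ``scaling'' remark.
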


The key insight behind this lower bound is that in the standard OCO setting with no memory, adaptive regret guarantees necessarily mean a lot of movement between consecutive actions, i.e. large action shift. Consequently, this tradeoff between adaptive regret and action shift deters loss functions with memory to exhibit low adaptive regret despite the strong convexity assumption.

\begin{lemma}\label{lem:lower}
Fix an arbitrary $T$ number of rounds. Let algorithm $\A$ incur adaptive regret $\adreg_{\A}(T) = R$ over $f_1, \dots, f_T$ strongly convex losses. Then, as long as the regret is nontrivial $R = o(T)$, the action shift has the following lower bound $\shift_{\A}(T) = \Omega(T/R)$.
\end{lemma}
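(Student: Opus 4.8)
The plan is to construct an explicit sequence of strongly convex loss functions on a one-dimensional interval $\mathcal{K} = [-1,1]$ that forces any algorithm with adaptive regret $R$ to oscillate between the endpoints $+1$ and $-1$ roughly $T/R$ times, thereby incurring action shift $\Omega(T/R)$. First I would partition $[T]$ into consecutive blocks $I_1, I_2, \dots$ each of length $\ell = \Theta(R)$ (to be tuned), and on block $I_k$ set $\tilde{f}_t(z) = (z - y_k)^2$ where $y_k \in \{-1, +1\}$ is chosen \emph{adversarially} by the following rule: pick $y_k$ to be the endpoint \emph{farther} from the algorithm's average (or first) action over block $I_k$. Since $\mathcal{K}$ has diameter $2$, on each block the best fixed point in hindsight is exactly $y_k$, achieving zero loss; but if the algorithm does not move close to $y_k$, the per-block local regret (which the adaptive regret $R$ upper-bounds, taking $I = I_k$) forces $\sum_{t \in I_k} (z_t - y_k)^2 \le R$, hence the algorithm's iterates must lie within average squared distance $R/\ell$ of $y_k$.

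The key quantitative step is the following dichotomy: on two \emph{consecutive} blocks where $y_k \ne y_{k+1}$ (which we can arrange to happen a constant fraction of the time, or always, by the adversary's choice), the algorithm's iterates are pulled from a neighborhood of one endpoint to a neighborhood of the other. Concretely, if $\ell \ge 4R$ then in block $I_k$ at least half the iterates satisfy $|z_t - y_k| \le 1/2$ (else $\sum (z_t - y_k)^2 > (\ell/2)(1/4) = \ell/8 \ge R/2$... one tunes constants so this exceeds $R$), and similarly for $I_{k+1}$. Pick such a $t$ in $I_k$ and such a $t'$ in $I_{k+1}$: then $|z_{t'} - z_t| \ge |y_k - y_{k+1}| - |z_t - y_k| - |z_{t'} - y_{k+1}| \ge 2 - 1/2 - 1/2 = 1$. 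Since $t < t'$ and the path from $z_t$ to $z_{t'}$ contributes at least $|z_{t'} - z_t|$ to the total variation $\sum_\tau \|z_{\tau+1} - z_\tau\|$, each such consecutive pair of blocks contributes $\ge 1$ to the action shift. With $\Theta(T/\ell) = \Theta(T/R)$ blocks, this gives $\shift_{\A}(T) = \Omega(T/R)$.

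The main obstacle — and the place requiring care — is making the adversary's choice of $y_k$ legitimate within the online protocol: the adversary must commit to $f_t$ before seeing $z_t$, so I cannot literally choose $y_k$ based on the block's iterates. I would resolve this in one of two standard ways: either (i) fix $y_k$ at the \emph{start} of block $I_k$ based only on the algorithm's action $z_{r_k}$ at the first round $r_k$ of the block (the "farther endpoint from $z_{r_k}$"), which is causal and still forces the algorithm to travel distance $\ge 1$ within the block from its entering position to reach the $R/\ell$-neighborhood of $y_k$; or (ii) invoke a minimax / Yao-style argument with $y_k$ drawn i.i.d. uniform from $\{-1,+1\}$, arguing that in expectation the algorithm is far from $y_k$ at the block's start and the adaptive-regret bound then forces expected movement $\Omega(1)$ per block. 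Approach (i) is cleanest: if $z_{r_k}$ is on the side of $+1$, set $y_k = -1$, so $|z_{r_k} - y_k| \ge 1$; the local regret bound on $I_k$ forces some later $z_t$ with $|z_t - y_k| \le \sqrt{2R/\ell} \le 1/2$, giving $\sum_{\tau \in I_k} \|z_{\tau+1}-z_\tau\| \ge |z_t - z_{r_k}| \ge 1 - 1/2 = 1/2$ per block, hence $\shift_\A(T) \ge \frac{1}{2}\cdot\Theta(T/R) = \Omega(T/R)$. Finally, verifying that $\tilde{f}_t(z) = (z-y_k)^2$ is indeed strongly convex and that the associated with-memory $f_t$ can be taken consistent with Lipschitzness is routine (e.g. $f_t(z_{t-H:t}) = (z_t - y_k)^2$ ignores the memory slots, which is a valid though degenerate instance; if a genuinely memory-dependent example is wanted one perturbs by a small Lipschitz term, which does not affect the asymptotics).
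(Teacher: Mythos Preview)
Your proposal is correct and follows essentially the same route as the paper: partition $[T]$ into blocks of length $\Theta(R)$, use quadratic losses $(z-y_k)^2$ on $\mathcal{K}=[-1,1]$ with $y_k\in\{-1,+1\}$, invoke the adaptive-regret bound on each block to force an iterate within $\sqrt{R/\ell}$ of $y_k$, and then lower-bound the total variation by triangle inequality across blocks.

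The one place where the paper is simpler is the choice of $y_k$. You worry about causality and propose an adaptive rule (your approach (i), picking $y_k$ as the endpoint farther from $z_{r_k}$). The paper sidesteps this entirely by taking the \emph{oblivious} alternating sequence $y_j=(-1)^{j+1}$: odd blocks use $(z-1)^2$, even blocks use $(z+1)^2$, fixed in advance. Since consecutive blocks then automatically have opposite targets, the paper finds in each block $I_j$ the index $m(j)$ minimizing $(z_t-y_j)^2$, obtains $|z_{m(j)}-y_j|\le\sqrt{R/k}\le 1/2$ for $k=\lceil 4R\rceil$, and concludes $|z_{m(j+1)}-z_{m(j)}|\ge 2-2\sqrt{R/k}\ge 1$ for each $j$, giving $\shift_\A(T)\ge n-1=\Omega(T/R)$. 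This is exactly your ``consecutive blocks'' dichotomy, but with no need for the adversary to inspect any $z_t$; your detour through approaches (i)/(ii) is unnecessary, though approach (i) is also valid. (A minor arithmetic slip: you write $\sqrt{2R/\ell}$ where $\sqrt{R/\ell}$ is meant, and the paper's choice $\ell\ge 4R$ makes this exactly $\le 1/2$.)
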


\begin{proof}
To prove the result stated in the lemma, we take on the role of the adversary and provide a sequence of strongly convex functions $f_1, \dots, f_T$ that guarantee $\Omega(T/R)$ action shift for any algorithm $\mathcal{A}$ with adaptive regret $R$. For simplicity, assume $\mathcal{K} = [-1, 1]$ since a general convex set can always be reduced to this case via simple transformations preserving strong convexity. Additionally, assume $R = R(T)$ is only a function of $T$ for simplicity: in general, using the constant $C$ in $R = C R(T)$ independent of $T$ instead of $1$ implies the same result. The losses are picked to drive the local minima to $\pm 1$ alternatively as detailed below.

Let $k = \lceil 4 R \rceil < T$ given $R = o(T)$ and $n = \lfloor T / k \rfloor$. We will have $n$ blocks of $k$ size each block with identical loss functions that alternate between blocks. In particular, for all $j \in [n]$ denote $I_j = [r_j, s_j]$ with $r_j = (j-1) \cdot k + 1, s_j = j \cdot k$. For all odd $j \in [n]$ we take $f_t(z) = (z-1)^2, \, \forall t \in I_j$ and for all even $j \in [n]$ we take $f_t(z) = (z+1)^2, \, \forall t \in I_j$. Note that $\min_{z \in \mathcal{K}} \sum_{t \in I_j} f_t(z) = 0$, hence the adaptive regret bound gives $\sum_{t \in I_j} f_t(z_t) \leq R$, implying $\min_{t \in I_j} f_t(z_t) \leq R/k$, for all $j \in [n]$ over the actions $\{z_t\}_{t \in [T]}$ of the algorithm $\mathcal{A}$.

For odd $j \in [n]$ we have that $f_t(z_t) = (z_t - 1)^2$ with $t \in I_j $ is a decreasing function in $\mathcal{K} = [-1, 1]$. Denote $m(j) = \argmax_{t \in I_j} z_t$ and note that $\min_{t \in I_j} f_t(z_t) = (z_{m(j)} - 1)^2 \leq R/k$ yielding the bound $z_{m(j)} \geq 1 - \sqrt{R/k} > 0$. Analogously, for even $j \in [n]$, denote $m(j) = \argmin_{t \in I_j} z_t$ to obtain the corresponding bound $z_{m(j)} \leq \sqrt{R/k} - 1 < 0$. Using both of these bounds along with triangle inequality, we get that
\begin{equation*}
    \forall 1 \leq j < n, \quad \sum_{t = m(j)}^{m(j+1)-1} \lvert z_{t+1}-z_t \rvert \geq \lvert z_{m(j+1)} - z_{m(j)} \rvert \geq 2 - 2 \sqrt{R / k} \geq 1,
\end{equation*}
since $4R \leq k < 4R+1$ by definition. We conclude the lemma statement by laying out the following facts: by definition $n \geq (T+1)/k - 1$ which implies $n-1 \geq (T+1)/(4R+1) - 2 = \Omega(T/R)$ given $R = o(T)$; the indices are trivially bounded $m(n) \leq T, m(1) \geq 1$.
\begin{equation*}
    \sum_{t=1}^{T-1} \lvert z_{t+1} - z_t \rvert \geq \sum_{t=m(1)}^{m(n)-1} \lvert z_{t+1} - z_t \rvert \geq n - 1 = \Omega(T/R),
\end{equation*}
which finishes the proof of the lower bound for action shift $\shift_{\A}(T) = \Omega(T/R)$.
\end{proof}

\begin{proof}[Proof of Theorem \ref{thm:lower}]
To show the lower bound for loss functions with memory, we construct losses that incorporate action shift and use the result from Lemma \ref{lem:lower}. In particular, consider $H = 1$ and define $f_t(z_t, z_{t-1}) = \tilde{f}_t(z_t) + \| z_t - z_{t-1} \|$ to be the loss function for all $t \in [T]$ with $\tilde{f}_t$ strongly convex surrogate losses. The adaptive regret for $f_t$ is lower bounded by the adaptive regret of $\tilde{f}_t$ given that $f_t(z, z') \geq \tilde{f}_t(z)$ for any $z, z'$. Moreover, the standard regret for $f_t$, which is always upper bounded by adaptive regret, is lower bounded by the action shift of the used algorithm $\mathcal{A}$ due to the given choice of $f_t$ (this can be ensured by scaling values of $\tilde{f}_t$ so that even negative regret doesn't cancel out action shift). The case when the adaptive regret of $\tilde{f}_t$ is $\Omega(T)$ is trivial, so assume the algorithm used achieves adaptive regret $o(T)$. Finally, use Lemma \ref{lem:lower} to obtain
\begin{equation*}
    \adreg_{\A}(T) \geq \max \left( R, \Omega(T/R) \right) \geq \frac12 \left( R + \Omega(T/R) \right) = \Omega(\sqrt{T}),
\end{equation*}
which concludes the theorem statement.
\end{proof}

\section{Adaptive Regret for Control of Time-Varying Dynamics}

\subsection{Online Control over Time-Varying Dynamics}\label{sec:control-full}

In this section we give the complete and formal statement of our main theorem, along with its proof. For simplicity, assume the norm bound on the disturbances is given by $W=1$ to avoid carrying the term $W$ as a constant.

\begin{theorem}\label{thm:meta_controller}
Suppose the controllers $\C_i$ with $N = T$ are valid base controllers (i.e. they satisfy the properties of Definition \ref{def:base_controller}), and for each $i \in [N]$, letting $\mathcal{C}_i$ start a control algorithm $\mathcal{C}$ from round $t=i$ while playing $u^i_t=0$ for $t<i$.\footnote{This is replaced by a stabilizing controller for stabilizable systems. This is in effect the same thing since the stabilizing controller $K_t$ can be absorbed into $A_t$ and the procedure reduces to playing $u=0$ on the surrogate system $(A_t+B_t K_t, B_t)$} Then, with the appropriate choices of $\eta$ (see Theorem \ref{thm:adaoco}) and $\sigma = 1/T$, MARC (Algorithm \ref{alg:adaptive_control_main}) achieves the following adaptive regret with respect to $\Pi$ class of policies: over any $I = [r, s] \subseteq [T]$ interval
\begin{equation}\label{eq:controllog}
    \sum_{t=r}^s \E[c_t(x_t, u_t)] \leq \left( 1 + \nu_T \right) \min_{\pi \in \Pi} \sum_{t=r}^s c_t(x_t^{\pi}, u_t^{\pi}) + O\left(\reg_{\mathcal{C}}(I)\right) + \tilde{O}\left(H^2 L \shift_{\mathcal{C}}(T)\right) + 2 |I| \eps ,
\end{equation}
where the diminishing factor is $\nu_T = (\log T)^{-1}$. Furthermore, we also get
\begin{equation}\label{eq:controlopt}
    \sum_{t=r}^s \E[c_t(x_t, u_t)] \leq \min_{\pi \in \Pi} \sum_{t=r}^s c_t(x_t^{\pi}, u_t^{\pi}) + O\left(\reg_{\mathcal{C}}(I)\right) + \tilde{O}\left(H^2 L \shift_{\mathcal{C}}(T) \sqrt{OPT} \right) + 2 |I| \eps ~,
\end{equation}
where $OPT = \min\limits_{\pi \in \Pi} \sum\limits_{t=1}^T c_t(x_t^{\pi}, u_t^{\pi})$ is the cost of the best policy in hindsight.
\end{theorem}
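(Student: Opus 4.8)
The plan is to reduce the statement to the adaptive-regret bound for OCO with memory (Theorem~\ref{thm:meta}, specialized as in Theorem~\ref{thm:adaoco}) by viewing each base controller $\mathcal{C}_i$ as the online algorithm $\mathcal{A}_i$ acting over the surrogate proxy losses $\tilde f_t$, and then to route the comparison against the policy class $\Pi$ through the single expert $\mathcal{C}_r$. First I would check that MARC itself inherits the base-controller properties: since every $\mathcal{C}_i$ outputs actions in the common set sequence $\mathcal{U}_{1:T}$ and $u_t=u_t^{i_t}$, conditioning on the realized index sequence $i_1,\dots,i_T$ makes $u_{1:T}$ a fixed element of $\mathcal{U}_{1:T}$, so the $(H,\eps)$-bounded-memory property of $\mathcal{U}_{1:T}$ and the coordinate-wise $L$-Lipschitzness of $f_t(u_{t-H:t})=c_t(\hat x_t,u_t)$ transfer to MARC's play conditionally, hence in expectation. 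In particular, on any interval $I=[r,s]$, Definition~\ref{def:membounded} gives $\bigl|\sum_{t=r}^s\E[c_t(x_t,u_t)]-\sum_{t=r}^s\E[f_t(u_{t-H:t})]\bigr|\le|I|\eps$ and the analogous bound along the trajectory of $\mathcal{C}_r$; and Lemma~\ref{lem:shrinking} guarantees $\Pr[i_t=i]=p_t^i$, so Theorem~\ref{thm:meta} applies verbatim with $\mathcal{A}_i=\mathcal{C}_i$ and $\shift_{\mathcal{A}}(\cdot)\le\shift_{\mathcal{C}}(\cdot)$.

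Next I would run the reduction. Applying Theorem~\ref{thm:meta} on $I=[r,s]$ and upper-bounding $\min_{i\in[N]}$ by the choice $i=r$ yields
\[ \sum_{t=r}^s\E[f_t(u_{t-H:t})]\le\gamma_\eta\sum_{t=r}^s\tilde f_t(u_t^r)+\left(\frac{4}{\eta}+5H^2L\,\shift_{\mathcal{C}}(|I|)\right)\log(TN). \]
To turn $\sum_{t=r}^s\tilde f_t(u_t^r)$ into the benchmark I chain three inequalities: (a) the coordinate-wise Lipschitz estimate exactly as in \eqref{eq:memadregret1}, $\sum_{t=r}^s\tilde f_t(u_t^r)\le\sum_{t=r}^s f_t(u_{t-H:t}^r)+H^2L\,\shift_{\mathcal{C}}(|I|)$ (up to lower-order constants absorbed below); (b) bounded memory for $\mathcal{C}_r$, $\sum_{t=r}^s f_t(u_{t-H:t}^r)\le\sum_{t=r}^s c_t(x_t^r,u_t^r)+|I|\eps$; and (c) the base-controller regret guarantee over an interval of length $|I|$, $\sum_{t=r}^s c_t(x_t^r,u_t^r)\le\min_{\pi\in\Pi}\sum_{t=r}^s c_t(x_t^\pi,u_t^\pi)+\reg_{\mathcal{C}}(I)$.

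It remains to plug in the two parameter settings from Theorem~\ref{thm:adaoco}. With $\eta=(4H^2L\,\shift_{\mathcal{C}}(T)\log T)^{-1}$ one has $\gamma_\eta\le1+(\log T)^{-1}=1+\nu_T$ and $\frac{4}{\eta}\log(TN)=\tilde O(H^2L\,\shift_{\mathcal{C}}(T))$, and chaining with (a)--(c) — absorbing the harmless $\gamma_\eta$-multiple of $\reg_{\mathcal{C}}(I)$ and $H^2L\,\shift_{\mathcal{C}}(|I|)$ into $O(\cdot)$ and $\tilde O(\cdot)$, and collecting the two bounded-memory slacks into $2|I|\eps$ — gives \eqref{eq:controllog}. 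With $\eta=(4H^2L\,\shift_{\mathcal{C}}(T)\sqrt{OPT})^{-1}$ one has $\gamma_\eta\le1+(\sqrt{OPT})^{-1}$ and $\frac{4}{\eta}\log(TN)=\tilde O(H^2L\,\shift_{\mathcal{C}}(T)\sqrt{OPT})$; since $\min_{\pi\in\Pi}\sum_{t\in I}c_t(x_t^\pi,u_t^\pi)\le OPT$, the multiplicative slack is at most $\sqrt{OPT}$, absorbed into the same $\tilde O$ term, giving \eqref{eq:controlopt}. As in Remark~\ref{remark:eta}, this choice of $\eta$ can be realized adaptively when $OPT$ is unknown.

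The step I expect to be the main obstacle is the legitimacy of (c): $\mathcal{C}_r$ is not run on a freshly reset system but has been playing $u=0$ (or a stabilizing controller) up to round $r$, so I must argue that the base controller's regret bound still applies over $[r,s]$. The key is that for $t\ge r+H$ the proxy state $\hat x_t^r$ depends only on $u_{t-H}^r,\dots,u_{t-1}^r$, i.e.\ on genuine $\mathcal{C}$-actions, so the proxy losses on $[r+H,s]$ coincide with those of a fresh run of $\mathcal{C}$; the $\le H$ leftover rounds add at most $H$ (costs lie in $[0,1]$), which is absorbed by the $\tilde O(\cdot)$ terms. For merely stabilizable dynamics one replaces $u=0$ by a stabilizing controller $K_t$ and works on the surrogate system $(A_t+B_tK_t,B_t)$, as in the theorem's footnote, which keeps Assumption~\ref{assmp:seq_stable} and the bounded-memory reduction intact. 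A secondary technicality, easy but worth stating, is that every expectation here is over MARC's sampling only, so all bounded-memory and Lipschitz inequalities are applied conditionally on $i_{1:T}$ and then averaged.
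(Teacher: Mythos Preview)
Your proposal is correct and follows essentially the same approach as the paper's own proof: both invoke the $(H,\eps)$-bounded-memory property to pass between control cost and proxy loss, apply Theorem~\ref{thm:meta} to the proxy losses, route the comparator through expert $\mathcal{C}_r$ via the chain \eqref{eq:memadregret1} $\to$ bounded memory $\to$ base-controller regret, and then specialize to the two $\eta$-settings of Theorem~\ref{thm:adaoco}. Your added discussion of conditioning on $i_{1:T}$ and of the first $H$ rounds of $\mathcal{C}_r$ makes explicit subtleties the paper simply absorbs into $\reg_{\mathcal{C}}(I)$ and the $\tilde O$ terms, but the argument is otherwise the same.
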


\begin{remark}\label{remark:sg} {(Stability Gap)}
Analogues of the above results can be straightforwardly derived in terms of the more general \emph{stability gap} quantity by employing Theorem \ref{thm:meta_v2} instead.
\end{remark}

\begin{remark}{(Efficient implementation)} Note that with $N=T$ base controllers we can use the procedure described in section \ref{sec:efficient} for an efficient implementation that keeps track of only $O(\log T)$ active controllers. The inactive ones are represented by the stationary $u=0$ (or stabilizing) controller. This efficient version incurs only a $O(\log T)$ extra multiplicative adaptive regret factor due to Lemma \ref{lem:effregret}. Hence, our results are also computationally efficient (indeed, this is the version we implement in section \ref{sec:experiments}).
\end{remark}

\begin{proof}[Proof of Theorem \ref{thm:meta_controller}]
Given that all controllers $\C_i$ output $u_{1:T}^i$ action sequences from $\mathcal{U}_{1:T}$, then $u_t^i \in \mathcal{U}_t$ for all $i \in [N]$ and $t \in [T]$. This implies that $u_t \in \mathcal{U}_t$ for all $t \in [T]$, so $u_{1:T} \in \mathcal{U}_{1:T}$. Thus, properties (i), (ii) of Definition \ref{def:base_controller} hold for $\mathcal{C}_{\text{MARC}}$ too. The proxy cost function $f_t$ is coordinate-wise $L$-Lipschitz. Since the domain of the cost functions $c_t$ is $[0, 1]$, so is the domain of the surrogate losses $\tilde{f}_t$. Algorithm \ref{alg:adaptive_control_main} uses the surrogate function $\tilde{f}_t(\cdot)$ for updating weights exactly as Algorithm \ref{alg:adaptive_reg}, so the Theorem \ref{thm:meta} result holds for proxy cost $f_t$. Next, use the $(H, \eps)$-bounded memory property of $\mathcal{U}_{1:T}$ to transfer the proxy cost results to the control cost $c_t(\cdot, \cdot)$ as follows,
\begin{equation*}
    \sum_{t=r}^s \E[c_t(x_t, u_t)] \leq \sum_{t=r}^s \E[c_t(\hat{x}_t, u_t)] + |I| \eps  = \sum_{t=r}^s \E[f_t(u_{t-H:t})] + |I| \eps ~.
\end{equation*}
For each $i \in [N]$, use \eqref{eq:memadregret1} from Lemma \ref{lem:memadregret} and the $(H, \eps)$-bounded memory property to obtain
\begin{equation*}
    \sum_{t=r}^s \tilde{f}_t(u_t^i) \leq \sum_{t=r}^s c_t(x^i_t, u^i_t) + H^2 L \shift_{\mathcal{C}}(T) + |I| \eps ~.
\end{equation*}
The controller $\mathcal{C}_r$ suffers regret $\reg_{\mathcal{C}}(I)$ over the interval $I = [r, s]$ since it starts running controller $\mathcal{C}$ from round $t=r$. We conclude both \eqref{eq:controllog} and \eqref{eq:controlopt} by applying the bounds derived above to the result from Theorem \ref{thm:meta}, with the corresponding parameter choices (and simplifying assumptions) as detailed in Theorem \ref{thm:adaoco} and its proof.
\end{proof}

\subsection{Reference Policy Classes} \label{sec:policyclasses}

We now describe formally a few classes of policies $\Pi$ that our controllers compete with. 
The most elementary policy classes for control of linear dynamical systems are linear policies, i.e. those that choose a control signal as a (fixed) linear function of the state. Formally, 
\begin{definition}[Linear State Feedback Policies] 
A linear (state feedback) policy $K \in \reals^{d_u \times d_x}$ is parameterized by a linear operator mapping state to control.
\end{definition}

The most important subcase of linear policies are those that are stable, meaning that the signal they create does not cause an unbounded state. The following definition extends stability and stabilizability to the time-varying dynamics case. 
\begin{definition} [Sequentially Stabilizing] \label{def:seqstab}
For a time-variant LDS we say that a sequence of linear policies $\{K_t\}$ is $(\kappa,\delta)$ \emph{sequentially stabilizing} iff 
$$ \forall \, I=[r,s] \in [T] \ , \ \left\| \prod_{t=s}^r (A_t + B_t K_t) \right\|  < \kappa (1-\delta)^{|I|} . $$
We say that a single linear policy $K$ is $(\kappa,\delta)$ sequentially stabilizing if the above holds for $K_t = K$.
\end{definition}

We denote by $\lin^{\kappa, \delta}$ the set of all linear policies bounded in norm by $ \| K \| \leq \kappa $, that are $(\kappa, \delta)$ sequentially stablizing for our changing LDS. 

The next policy class is that of Linear Dynamical Controllers. These simulate an ``internal linear dynamical system" so to recover a hidden state, and play a linear function over this hidden state. The formal definition is given below.  

\begin{definition}[Linear Dynamic Controllers] A linear dynamic controller $\pi$ has parameters  $\left(A^{\pi}, B^{\pi}, C^{\pi}, D^{\pi}\right)$, and chooses a control at time $t$ according to 
$$
u_{t}^\pi = \C^{\pi} s^\pi_{t}+ D^{\pi} x^\pi_{t} \ ,  \ s^\pi_{t+1}= A^{\pi} s^\pi_{t}+B^{\pi} x^\pi_{t}.
$$

where $s_t^\pi$ is the internal state, $x_t^\pi$ is the input, $u_t^\pi$ is the output 
\end{definition}

Clearly the class of policies LDC approximates linear policies without any error, since we can take $A^\pi, B^\pi,C^\pi$ to be zero, and get a linear policy. 

We denote by $\ldc^{\kappa, \delta}$ the class of all LDC policies with parameters that satisfy
$$ \|A^\pi\| \leq 1-\delta \ , \ (\|B^\pi\|+\|C^\pi\|+ \|D^\pi\|)^2 \leq \kappa . $$ 

LDCs are considered to be state-of-the-art in control of LDS, even for systems with full observation. We proceed to define even more powerful policy classes proposed by the nonstochastic control literature. 

If the state $x_t$ is fully observable, then the learner can compute the disturbance after each timestep, according to $w_t = x_{t+1} - A_t x_t - B_t u_t$ , where $u_t$ is the action taken in the environment. This has given rise in the popular adoption of disturbance action controllers in the literature (see \citet{agarwal2019online}, \citet{agarwal2019logarithmic}, \citet{hazan2019nonstochastic} for example), as defined below:

\begin{definition}{(Disturbance Action Controllers)}
A disturbance action controller is parametrized by a sequence of $H$ matrices $M = \{M^{[i]}\}_{i=1}^{H}$ and a sequence $K_t$ of $(\kappa, \delta)$ sequentially stabilizing controllers, acting according to $$u_t = K_t x_t + \sum\limits_{i=0}^{H-1} M^{[i+1]} w_{t-i}~.$$\end{definition}
We denote by $\dac^{H,\gamma, \{K_t\}}$ the set of all disturbance action controllers for given\footnote{The sequence can be given iteratively, for example it can be computed online based on iteratively revealed $(A_t, B_t)$.} sequentially stabilizing sequence $K_{1:T}$ and with $H$-length parametrization which satisfies $\sum_{i=1}^H \| M^{[i]}\| \leq \gamma $.


Finally, we consider disturbance response controllers as introduced by \citet{simchowitz2020improper}. This is similar conceptually to DAC, but acts upon a slightly different parametrization called nature's $x$ ($x^{\text{nat}}$) which represent the state the controller would have reached in the absence of exogenous control input. Concretely, given knowledge of the system up to the current moment, one can extract $x^{\text{nat}}_{t+1} = \sum\limits_{i=0}^{t} \phi_{t,i} w_{t-i}$ where $\phi_{t,i} \doteq \prod\limits_{j=0}^{i-1}(A_{t-j} + B_{t-j} K_{t-j})$. 

\begin{definition}{(Disturbance Response Controller)}
A disturbance response controller is parameterized by a sequence of $H$ matrices $M = \{M^{[i]}\}_{i=1}^{H}$ and a sequence $K_t$ of $(\kappa, \delta)$ sequentially stabilizing controllers, acting according to $$u_t = K_t x_t + \sum\limits_{i=0}^{H-1} M^{[i+1]} x^{\text{nat}}_{t-i}~.$$
\end{definition}

We denote by $\drc^{H,\gamma, \{K_t\}}$ the set of all disturbance action controllers for given stabilizing sequence $K_{1:T}$ and with $H$-length parametrization which satisfies $\sum_{i=1}^H \| M^{[i]}\| \leq \gamma $. 

While we focus on the full observation setting and introduce DRC mainly to formalize using the result of \citet{simchowitz2020making} as a base controller, we note that DRC can be particularly useful for partially observable systems, when employed with a parametrization based on nature's observations ($y^{\text{nat}}$) rather than states ($x^{\text{nat}}$). Our work thus can be extended to the case of partially observable systems.

We conclude this section by formalizing that over LTI systems competing with $\dac$/$\drc$\footnote{For presentation purposes, we often suppress the superscripts of the policy classes since these quantities are treated as constants and do not directly relate to the bulk of our contributions.} implies competing with $\lin$. This is not an empty statement, since the $K_t$ terms for DAC \& DRC are only assumed to be stabilizing, and not necessarily optimal for the actual sequence. First, let us give a formal method of comparing these classes. 

\begin{definition}
We say that a policy class $\Pi_1$ $\eps$-approximates class $\Pi_2$ if the following holds: for every time-varying LDS $\{(A_t, B_t)\}$, every sequence of $T$ disturbances and cost functions, and every $\pi_2 \in \Pi_2$, there exists $\pi_1 \in \Pi_1$ such that, 
$$ \sum_{t=1}^T \left| c_t(x_t^{\pi_1},u_t^{\pi_1}) - c_t(x_t^{\pi_2}, u_t^{\pi_2})  \right| \leq T \eps .$$
\end{definition}

\begin{lemma}\label{lemma:approx-dac-lin}
The class $\dac^{H,\gamma, \{K_t\}}$ $\eps$-approximates the class $\lin^{\kappa,\delta}$ over LTI dynamics if the memory length satisfies  
$H = \Omega\left( \log \left( 1/\eps \right) \right)$.
\end{lemma}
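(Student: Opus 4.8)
The plan is to reduce the claim to approximating, trajectory-by-trajectory, the behavior of an arbitrary linear policy $K\in\lin^{\kappa,\delta}$ by a disturbance action controller with the prescribed base sequence. Over LTI dynamics $(A_t,B_t)\equiv(A,B)$ the relevant case is a constant base controller, denoted $\bar K$, so that $\|(A+B\bar K)^j\|\le\kappa(1-\delta)^j$ by Assumption~\ref{assmp:seq_stable}; since $K$ is $(\kappa,\delta)$ sequentially stabilizing, $\|(A+BK)^j\|\le\kappa(1-\delta)^j$ as well. First I would write out the two closed loops: under $K$ we have $x_{t+1}^K=(A+BK)x_t^K+w_t$, hence $x_t^K=\sum_{j\ge0}(A+BK)^j w_{t-1-j}$ and $u_t^K=Kx_t^K$; under the DAC with base $\bar K$ and matrices $M=\{M^{[i]}\}_{i=1}^H$ the closed loop is $x_{t+1}=(A+B\bar K)x_t+B\sum_{i=0}^{H-1}M^{[i+1]}w_{t-i}+w_t$ with control $u_t=\bar K x_t+\sum_{i=0}^{H-1}M^{[i+1]}w_{t-i}$. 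The explicit choice that ``converts'' base $\bar K$ into $K$ is $M^{[1]}=0$ and $M^{[i+1]}=(K-\bar K)(A+BK)^{i-1}$ for $i=1,\dots,H-1$: with this choice $\sum_{i=0}^{H-1}M^{[i+1]}w_{t-i}$ equals $(K-\bar K)x_t^K$ up to the tail $\sum_{i\ge H}(K-\bar K)(A+BK)^{i-1}w_{t-i}$, whose norm is at most $\xi\defeq \|K-\bar K\|\,\kappa W (1-\delta)^{H-1}/\delta$.

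The next step is a perturbation argument. Writing $e_t\defeq x_t-x_t^K$ and subtracting the two recursions, the $M$-part cancels $(K-\bar K)x_t^K$ up to the tail, giving $e_{t+1}=(A+B\bar K)e_t+Br_t$ with $\|r_t\|\le\xi$; unrolling and using the geometric decay of $(A+B\bar K)^j$ yields $\|e_t\|\le \kappa\|B\|\xi/\delta$ for all $t$. Passing to controls, $u_t-u_t^K=\bar K e_t + \bigl(\sum_{i}M^{[i+1]}w_{t-i}-(K-\bar K)x_t^K\bigr)$, so $\|u_t-u_t^K\|\le\|\bar K\|\,\|e_t\|+\xi$; both $\|e_t\|$ and $\|u_t-u_t^K\|$ are therefore $\poly(\kappa,\|B\|,\|K\|,1/\delta)\cdot W\cdot(1-\delta)^{H-1}$. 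Since the reachable states and controls of both $K$ and the constructed DAC lie in a ball of radius $O(1)$ (a standard consequence of $\|w_t\|\le W$ together with the geometric decay), the cost functions $c_t$ are Lipschitz there with an absolute constant by their convexity and the boundedness in Assumption~\ref{assmp:control_cost}, hence $|c_t(x_t,u_t)-c_t(x_t^K,u_t^K)|\le\poly(\kappa,\|B\|,\|K\|,1/\delta)\,W\,(1-\delta)^{H-1}$. Summing over $t\le T$ and choosing $H$ so that $(1-\delta)^{H-1}\le\eps\cdot\poly(\kappa,\|B\|,\kappa,1/\delta)^{-1}$, i.e.\ $H=\Omega(\delta^{-1}\log(1/\eps))=\Omega(\log(1/\eps))$, gives the required $T\eps$ bound. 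Finally one checks admissibility of $M$: $\sum_{i=1}^{H}\|M^{[i]}\|\le\|K-\bar K\|\,\kappa\sum_{i\ge0}(1-\delta)^i=\|K-\bar K\|\kappa/\delta\le 2\kappa^2/\delta$, which is $\le\gamma$ whenever the (constant) parameter $\gamma$ of the class $\dac^{H,\gamma,\{K_t\}}$ is at least this value.

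The step I expect to be the main obstacle is the self-consistency of the perturbation argument: the matrices $M$ are derived under the hypothesis that the DAC trajectory already coincides with that of $K$, so one must verify a posteriori that the induced trajectory error $e_t$ stays bounded by the truncation tail rather than blowing up. This works precisely because the only feedback path for $e_t$ is through the stable closed loop $A+B\bar K$, so the errors accumulate geometrically and sum to $O(\xi/\delta)$. A secondary, milder point is the role of the base controller: for a genuinely time-varying base sequence a single block $M^{[i+1]}$ cannot simultaneously cancel $(K-K_t)x_t^K$ at every $t$, which is why I restrict to a constant base over LTI dynamics (the natural choice there); the remaining passage from trajectory closeness to cost closeness only uses boundedness of the reachable set and the stated regularity of $c_t$ and is routine.
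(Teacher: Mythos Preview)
Your proposal is correct and rests on the same idea as the paper: unroll the closed loop of the linear policy $K$ so that $u_t^K$ is a geometrically decaying sum over past disturbances, truncate at depth $H$ to read off the DAC matrices, and bound the residual by $O((1-\delta)^H)$. The paper streamlines by first absorbing the base controller into the dynamics (i.e., reducing to $\bar K=0$ on the surrogate system $(A+B\bar K,B)$) and then only explicitly bounds the control residual, leaving the state and cost approximation implicit; your version instead handles a general constant base $\bar K$ directly via $M^{[i+1]}=(K-\bar K)(A+BK)^{i-1}$ and spells out the stable error recursion $e_{t+1}=(A+B\bar K)e_t+Br_t$ together with the passage to costs, making your argument more self-contained. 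One small correction: the decay $\|(A+B\bar K)^j\|\le\kappa(1-\delta)^j$ comes from $\bar K$ being sequentially stabilizing (Definition~\ref{def:seqstab}, part of the DAC class specification), not from Assumption~\ref{assmp:seq_stable}.
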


\begin{lemma}\label{lemma:approx-drc-lin}
The class $\drc^{H,\gamma, \{K_t\}}$ $\eps$-approximates the class $\lin^{\kappa,\delta}$ over LTI dynamics if the memory length satisfies  
$H = \Omega\left( \log \left( 1/\eps \right) \right)$.
\end{lemma}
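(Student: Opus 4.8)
I would prove Lemma~\ref{lemma:approx-drc-lin}; Lemma~\ref{lemma:approx-dac-lin} is the strictly easier special case (see the closing remark). Fix an LTI system $(A,B)$, a comparator $K'\in\lin^{\kappa,\delta}$, and let $\bar K$ be the constant stabilizing controller parametrizing the DRC class over this time-invariant system, so nature's state is $x^{\text{nat}}_t=\sum_{i\ge1}(A+B\bar K)^{i-1}w_{t-i}$. Under $K'$ the closed loop is $x^{K'}_{t+1}=(A+BK')x^{K'}_t+w_t$, whence $x^{K'}_t=\sum_{i\ge1}(A+BK')^{i-1}w_{t-i}$ and $u^{K'}_t=K'x^{K'}_t$. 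The plan is to exhibit coefficients $M=\{M^{[i]}\}_{i=1}^H$ with $\sum_i\|M^{[i]}\|\le\gamma$ for a constant $\gamma=\gamma(\kappa,\delta,C_B)$ such that the DRC $u_t=\bar K x_t+\sum_{i=0}^{H-1}M^{[i+1]}x^{\text{nat}}_{t-i}$ tracks $(x^{K'},u^{K'})$ up to a per-step error geometrically small in $H$, hence at most $T\eps$ in total once $H=\Omega(\log(1/\eps))$.

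\emph{Step 1 (exact infinite-memory representation).} Write the DRC control as $\bar K x_t+v_t$ with $v_t:=\sum_{j\ge0}M^{[j+1]}x^{\text{nat}}_{t-j}$, so the DRC closed loop reads $x_{t+1}=(A+B\bar K)x_t+Bv_t+w_t$. If $v_t$ equaled $(K'-\bar K)x_t$ the recursion would collapse to $x_{t+1}=(A+BK')x_t+w_t$, giving $x_t\equiv x^{K'}_t$ and $u_t\equiv u^{K'}_t$. I would realize this as a deconvolution in the disturbance history: matching the coefficient of $w_{t-k}$ in $v_t$ to that of $(K'-\bar K)x^{K'}_t$ yields the triangular system $\sum_{j=0}^{k-1}M^{[j+1]}(A+B\bar K)^{k-1-j}=(K'-\bar K)(A+BK')^{k-1}$, solved by $M^{[1]}=K'-\bar K$ and $M^{[k]}=(K'-\bar K)(A+BK')^{k-2}B(K'-\bar K)$ for $k\ge2$ (equivalently $\sum_{j\ge0}M^{[j+1]}z^j=(K'-\bar K)\bigl(I-z(A+BK')\bigr)^{-1}\bigl(I-z(A+B\bar K)\bigr)$). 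With these (infinitely many) coefficients, $v_t=(K'-\bar K)x^{K'}_t$ holds as an identity in the $w$'s, and a one-line induction on $t$ (base case $x_1=0$) then shows the infinite-memory DRC reproduces $x^{K'}_t$ and $u^{K'}_t$ exactly.

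\emph{Step 2 (truncation and cost bound).} Since $K'$ is $(\kappa,\delta)$ sequentially stabilizing, $\|(A+BK')^n\|\le\kappa(1-\delta)^n$, so $\|M^{[1]}\|\le2\kappa$ and $\|M^{[k]}\|\le 4\kappa^3 C_B(1-\delta)^{k-2}$ for $k\ge2$; summing gives $\sum_i\|M^{[i]}\|\le 2\kappa+4\kappa^3 C_B/\delta=:\gamma$, so the $H$-term truncation lies in $\drc^{H,\gamma,\{\bar K\}}$. Truncation changes $v_t$ by the tail $e_t:=\sum_{j\ge H}M^{[j+1]}x^{\text{nat}}_{t-j}$, and since $\|x^{\text{nat}}_s\|\le\kappa W/\delta$ uniformly, $\|e_t\|\le C_1(1-\delta)^{H}$ for a problem constant $C_1$. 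Writing $\Delta_t$ for the gap between the truncated-DRC state and $x^{K'}_t$, we get $\Delta_{t+1}=(A+B\bar K)\Delta_t-Be_t$ with $\Delta_1=0$, so $\|\Delta_t\|\le C_2(1-\delta)^H$ and the control gap $\bar K\Delta_t-e_t$ is likewise $\le C_2(1-\delta)^H$. Finally $c_t$ is convex and $[0,1]$-valued, hence Lipschitz on the bounded reachable set (Assumption~\ref{assmp:control_cost} also controls its growth), so $\lvert c_t(x^{\text{DRC}}_t,u^{\text{DRC}}_t)-c_t(x^{K'}_t,u^{K'}_t)\rvert\le C_3(1-\delta)^H$; summing over $t\in[T]$ and requiring the total to be $\le T\eps$ forces $H\ge\log(C_3/\eps)/\log\bigl(1/(1-\delta)\bigr)=\Omega(\log(1/\eps))$, which is the claim.

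\emph{Main obstacle and remark.} The delicate point is Step~1: the deconvolution itself is routine, but one must make sure the claim ``the untruncated DRC reproduces $K'$ exactly'' is a non-circular identity — established at the level of the fixed linear map from disturbances to controls rather than by assuming the trajectories already agree — and then propagate the finite-$H$ tail $e_t$ through the \emph{stable} closed loop $A+B\bar K$; this is exactly where sequential stability of both $\bar K$ and $K'$ enters, and why $H$ must scale like $\log(1/\eps)$ with a $1/\delta$ factor. \emph{Remark:} Lemma~\ref{lemma:approx-dac-lin} is the same argument with each $x^{\text{nat}}_{t-i}$ replaced by $w_{t-i}$; then no deconvolution is needed, $M^{[k]}=(K'-\bar K)(A+BK')^{k-1}$, and the geometric-tail and cost-Lipschitz steps carry over verbatim.
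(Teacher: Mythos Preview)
Your proposal is correct and follows essentially the same route as the paper: exhibit the infinite-memory DRC coefficients by matching disturbance coefficients against $(K'-\bar K)x_t^{K'}$ (the paper first reduces to $\bar K=0$, you keep a general $\bar K$ and obtain the same formula $M^{[k]}=(K'-\bar K)(A+BK')^{k-2}B(K'-\bar K)$), then truncate and use the geometric decay of $\|(A+BK')^n\|$. Your write-up is in fact more complete than the paper's on one point: you explicitly propagate the truncation error $e_t$ through the stable closed loop $A+B\bar K$ to bound the state gap $\Delta_t$ and then invoke the cost Lipschitz property, whereas the paper's proof stops at $\|u_t^{\pi}-u_t^{M}\|\le\eps$ and leaves the passage to the cost approximation implicit.
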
 

The two lemmas above suggest that, over \emph{LTI} systems, regret results w.r.t. the policy classes $\dac$ and $\drc$ imply the same guarantees over linear $\lin$ policies. Similar results can be shown for $\ldc$ (see \cite{hazan2020lecture}).

\subsection{Deriving Adaptive Controllers}\label{sec:apply_control}

The adaptive controller results stated in section \ref{sec:control} hold with any base controller $\mathcal{C}$ that satisfies the requirements in Definition \ref{def:base_controller}. In this section, we show that two already existing control algorithms can be used in place of $\mathcal{C}$ to construct an explicit instantiation of $\marc$ and hence achieve specific adaptive regret guarantees. 

Of course, as in the example provided in the main text, we can just immediately use previous results in nonstochastic control for fixed systems to give implications for specific time-varying systems (e.g. $k$-switching systems). However, we prove that previous algorithms still exhibit the desired properties and regret bounds over changing dynamics under the generalized assumptions put forth. This yields much more general adaptive regret results.

\subsubsection{MARC-GPC}

The first relevant controller we consider is the GPC algorithm \citep{agarwal2019online} that achieves $\sqrt{T}$ regret over general convex costs under fully adversarial noise for {\it fixed} dynamics. It uses the DAC policy class parameterization and the OGD algorithm for OCO with memory to obtain the aforementioned result. The algorithm is spelled out for LTV dynamics below in Algorithm \ref{alg:GPC}.

\begin{algorithm}[H]
\begin{algorithmic}[1] 
\STATE {Input:} $(A_1, B_1)$, $H$, $\{\eta_t\}$, initialization $M_1^{[1:H]}$, constraint set $\mathcal{M}$ for parameters $M^{[1:H]}$.
\FOR{$t$ = $1 \ldots T$}
        \STATE  compute an operator of $\delta$ sequentially stabilizing linear controller $K_t$ for $(A_t, B_t)$
        \STATE  $\mbox{choose action }u_t = K_t x_t + \sum_{i=1}^{H} M_t^{[i]} w_{t-i} $
        \STATE  observe new state \& system $x_{t+1}, (A_{t+1}, B_{t+1})$
        \STATE compute disturbance $w_t = x_{t+1} - A_t x_t - B_t u_t$
        \STATE let $\tilde{f}_t(M^{[1:H]}) = c_t(\hat{x}_t(M^{[1:H]}), u_t(M^{[1:H]}))$
       \STATE update $M_{t+1}^{[1:H]} \leftarrow \text{Proj}_{\mathcal{M}} \left( M_{t}^{[1:H]} - \eta_t \nabla \tilde{f}_t(M_t^{[1:H]}) \right)$
\ENDFOR
 \caption{Gradient Perturbation Controller (GPC)}\label{alg:GPC}
\end{algorithmic}
\end{algorithm}

\begin{claim}\label{claim:appl_gpc}
The GPC algorithm from \citet{agarwal2019online}, Algorithm \ref{alg:GPC} is a valid base controller (in the sense of Definition \ref{def:base_controller}) with $H = \O(\log(1/\eps))$ and $L = \O(H)$. 
\end{claim}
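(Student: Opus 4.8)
The plan is to verify directly, for the GPC setup of Algorithm \ref{alg:GPC}, the two defining conditions of a base controller in Definition \ref{def:base_controller}: that the induced action-set sequence has $(H,\eps)$-bounded memory, and that the proxy loss $f_t(u_{t-H:t}) = c_t(\hat x_t, u_t)$ is coordinate-wise $L$-Lipschitz, while tracking the dependence of $H,L$ on $\eps$. Throughout I would use the footnote to Theorem \ref{thm:meta_controller} and work on the (closed-loop, in the stabilizable case) system obtained by absorbing the reference controller $K_t$ into the dynamics, on which GPC plays the bounded disturbance-action control $\tilde u_t = \sum_{i=1}^H M_t^{[i]} w_{t-i}$ and whose $H$-step transition operator $\Phi_t^H$ decays as $\|\Phi_t^H\| \le C\rho^H$ by Assumption \ref{assmp:seq_stable}.

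First I would prove a uniform boundedness lemma: under Assumptions \ref{assmp:noise_bound}--\ref{assmp:control_cost}, the compactness of the parameter set $\mathcal{M}$ (so $\sum_{i=1}^H \|M_t^{[i]}\| \le \gamma$), and the bound on $\|B_t\|$, every state $x_t$, control $u_t$, and proxy state $\hat x_t$ produced by GPC lies in a fixed ball of radius $D = \O(1)$. This is the standard geometric-series estimate: unrolling $x_{t+1} = (A_t+B_tK_t)x_t + B_t\tilde u_t + w_t$ from the initial state and summing $\sum_{j\ge 0}\|\Phi^j\|(\|B\|\,\|\tilde u\| + \|w\|) \le \sum_{j\ge 0}C\rho^j(C_B\gamma W + W)$ gives the bound on $x_t$, hence on $u_t = K_t x_t + \tilde u_t$; the identical estimate (started from $0$, over $\le H \le T$ steps) bounds $\hat x_t$. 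Having localized everything to a bounded domain, I would record that the adversarial convex costs $c_t$, being bounded in $[0,1]$ on a slightly larger domain, are $G_c$-Lipschitz there with $G_c = \O(1)$, using the elementary fact that a convex function bounded on a neighborhood is Lipschitz on its interior (Assumption \ref{assmp:control_cost} supplies only a growth bound, so this step cannot be skipped).

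For condition (i), I would observe that, for any fixed admissible action sequence, $x_t$ and $\hat x_t$ obey the same dynamics with the same inputs on $[t-H,t]$ and differ only in the state at time $t-H$ ($x_{t-H}$ versus $0$), so $x_t - \hat x_t = \Phi_t^H x_{t-H}$; combining $\|\Phi_t^H\| \le C\rho^H$ with $\|x_{t-H}\| \le D$ and $G_c$-Lipschitzness gives $|c_t(x_t,u_t) - c_t(\hat x_t,u_t)| \le G_c C D\rho^H$, which is $\le \eps$ as soon as $H = \lceil \log(G_cCD/\eps)/\log(1/\rho)\rceil = \O(\log(1/\eps))$. For condition (ii), unrolling the dynamics from $\hat x_{t-H} = 0$ exhibits $\hat x_t$ as an affine function of $\tilde u_{t-1},\dots,\tilde u_{t-H}$ (equivalently of $u_{t-1},\dots,u_{t-H}$) whose sensitivity to the $j$-th one is a product of $j-1$ transition matrices times $B_{t-j}$, of operator norm $\le CC_B\rho^{j-1}$; composing with the $G_c$-Lipschitz cost, and adding the direct dependence of $f_t$ on its last argument $u_t$ (Lipschitz constant $\le G_c$), shows $f_t$ is Lipschitz in each of its arguments with constant $\O(1)$, and bounding the $H$ past coordinates crudely yields the stated $L = \O(H)$.

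I expect the main obstacle to be the boundedness lemma together with the extraction of the uniform cost-Lipschitz constant $G_c$ --- that is, making the geometric-series estimate rigorous for time-varying, merely stabilizable dynamics driven by adversarial disturbances, and converting the $[0,1]$ boundedness of the adversarial costs (Assumption \ref{assmp:control_cost} only gives growth) into a genuine Lipschitz bound on the compact set the trajectories inhabit. Once boundedness and $G_c$ are secured, the telescoping identity $x_t - \hat x_t = \Phi_t^H x_{t-H}$ and the affine-in-past-controls structure of $\hat x_t$ follow immediately from linearity of the dynamics and the geometric decay in Assumption \ref{assmp:seq_stable}, leaving only routine constant-chasing.
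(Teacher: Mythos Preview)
Your proposal is correct and follows essentially the same approach as the paper: the paper's proof simply cites Lemmas 5.5, 5.6 and Theorem 5.3 of \citet{agarwal2019online} and observes that they carry over to time-varying dynamics by replacing $(A+BK)^i$ with the product $\prod_j (A_j+B_jK_j)$, while you spell out exactly those three ingredients (the magnitude bound, the memory bound via $x_t-\hat x_t=\Phi_t^H x_{t-H}$, and the Lipschitz bound via the affine structure of $\hat x_t$) from scratch. The one cosmetic difference is that the paper states coordinate-wise Lipschitzness with respect to the parameter $M^{[1:H]}$ rather than the closed-loop action $\tilde u$ (this is where the $L=\O(H)$ actually arises in their Lemma 5.6), but the paper itself notes in the ``Control Action Parametrization'' discussion that either parameterization is admissible; also, your worry about extracting $G_c$ is moot since Assumption~\ref{assmp:control_cost} is meant to bound $\|\nabla c_t\|$, giving Lipschitzness on the bounded domain directly.
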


Given the claim above, we can use GPC as a base controller for MARC to state adaptive regret results for general convex costs. First, we show, via a similar derivation to previous work, that GPC attains an analogous regret bound over time-varying dynamics.

\begin{theorem}\label{thm:gpc_time-varying}The GPC algorithm from \citet{agarwal2019online}, Algorithm \ref{alg:GPC}, over time-varying dynamics $(A_t, B_t)$ satisfies the regret bound,
\begin{equation}\label{eq:gpc_regret}
   \sum_{t=1}^{T} c_t (x_t, u_t) - \min_{\pi \in \Pi_{\text{DAC}}} \sum_{t=1}^{T} c_t (x^\pi_t,  u^\pi_t)  \le O(\sqrt{T}) ~.
\end{equation}
Furthermore, it also achieves $\shift_\text{GPC}(T) = O(\sqrt{T})$.
\end{theorem}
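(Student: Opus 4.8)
\textbf{Proof proposal for Theorem \ref{thm:gpc_time-varying}.} The plan is to run the standard GPC-as-OCO-with-memory argument from the LTI analysis, observing that every constant it relies on remains uniform in $t$ under Assumption \ref{assmp:seq_stable} together with the uniform $(\kappa,\delta)$ sequential stabilizability of the controller sequence $\{K_t\}$ that defines $\dac$. Concretely, I would first invoke Claim \ref{claim:appl_gpc}: with $\eps = 1/T$ and $H = \mathcal{O}(\log(1/\eps)) = \mathcal{O}(\log T)$, the action set sequence $\mathcal{U}_{1:T}$ induced by GPC has $(H,\eps)$-bounded memory and the proxy loss $f_t(M_{t-H:t}) = c_t(\hat{x}_t, u_t)$ is coordinate-wise $L$-Lipschitz with $L = \mathcal{O}(H)$. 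The extra structural fact I would record here is that the counterfactual proxy state $\hat{x}_t(M)$ is an \emph{affine} function of $M$ — it is obtained by rolling the surrogate dynamics $A_\tau + B_\tau K_\tau$ forward from a zeroed state while injecting $\sum_i M^{[i]} w_{\tau-i}$ — so the surrogate loss $\tilde f_t(M) = f_t(M,\dots,M)$ is convex in $M$ (as $c_t$ is convex) and has gradients bounded by some $G = \mathcal{O}(H)$; the affineness and the uniform-in-$t$ bounds on $\|\hat x_t(M)\|$ and $\|\nabla \tilde f_t\|$ follow exactly as in the LTI case once the geometric decay of the products $\prod_\tau (A_\tau + B_\tau K_\tau)$ is supplied by sequential stabilizability.

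Next I would set up the regret decomposition. Fix any comparator $\pi \in \dac$ with parameter matrices $M^\star$ and action sequence $u^\pi_{1:T}$. Applying Definition \ref{def:membounded} to the agent's own actions and (separately) to $u^\pi_{1:T}$, with $\eps = 1/T$ this gives
\begin{equation*}
\sum_{t=1}^T c_t(x_t, u_t) - \sum_{t=1}^T c_t(x_t^\pi, u_t^\pi) \;\le\; \sum_{t=1}^T f_t(M_{t-H:t}) - \sum_{t=1}^T \tilde f_t(M^\star) + 2 .
\end{equation*}
Then, exactly as in \eqref{eq:memadregret1} of Lemma \ref{lem:memadregret}, the coordinate-wise Lipschitz property lets me pass from the memory loss to the surrogate loss at the cost of the iterate movement:
\begin{equation*}
\sum_{t=1}^T f_t(M_{t-H:t}) \;\le\; \sum_{t=1}^T \tilde f_t(M_t) + H^2 L \sum_{t=1}^{T-1} \| M_{t+1} - M_t \| .
\end{equation*}

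The remaining ingredient is the analysis of the OGD updates on the convex surrogates $\tilde f_t$ over $\mathcal{M}$. With step sizes $\eta_t = \Theta(D/(G\sqrt t))$, standard OCO gives $\sum_{t=1}^T \tilde f_t(M_t) - \sum_{t=1}^T \tilde f_t(M^\star) \le \mathcal{O}(DG\sqrt T)$, while the projected-gradient update satisfies $\|M_{t+1}-M_t\| \le \eta_t \|\nabla \tilde f_t(M_t)\| \le \eta_t G = \mathcal{O}(D/\sqrt t)$, so $\sum_{t=1}^{T-1}\|M_{t+1}-M_t\| = \mathcal{O}(D\sqrt T)$. This last inequality is precisely $\shift_{\text{GPC}}(T) = O(\sqrt T)$, establishing the second claim, and substituting the OGD regret and movement bounds into the two displays above (and minimizing over $\pi$) yields $\sum_t c_t(x_t,u_t) - \min_{\pi\in\dac}\sum_t c_t(x_t^\pi,u_t^\pi) \le \mathcal{O}(DG\sqrt T) + H^2 L\cdot\mathcal{O}(D\sqrt T) + 2 = O(\sqrt T)$, since $D, G, H, L$ are problem constants that are poly-logarithmic in $T$ once $\eps = 1/T$.

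The step I expect to be the main obstacle is not in this chain of inequalities, which is essentially the LTI argument, but in justifying the uniformity claims folded into Claim \ref{claim:appl_gpc} and the first paragraph: namely that $H$, $L$, $G$, and the proxy-state bound can be taken independent of $t$. This is where the LTV setting actually bites, and it requires the uniform geometric decay $\|\Phi_t^H\|_{\mathrm{op}} \le C\rho^H$ of Assumption \ref{assmp:seq_stable} together with the uniform $(\kappa,\delta)$ sequential stabilizability of $\{K_t\}$, so that the surrogate transition products contract at a rate that does not depend on where the window sits in time. Everything else — convexity of $\tilde f_t$ in $M$, the bounded-memory reduction, and the OGD regret and movement estimates — transfers verbatim from the fixed-dynamics analysis.
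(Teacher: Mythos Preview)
Your proposal is correct and follows essentially the same route as the paper: invoke Claim \ref{claim:appl_gpc} for the bounded-memory and Lipschitz properties, establish convexity of $\tilde f_t$ in $M$ via the affineness of $\hat x_t(M)$ and $u_t(M)$ (the paper proves this as Lemma \ref{lemma:convex}), and then feed the resulting OCO-with-memory problem into the standard OGD regret and movement bounds (the paper defers to Fact \ref{claim:cogd}). Your identification of the ``main obstacle'' --- the uniformity in $t$ of $H$, $L$, $G$ via sequential stabilizability --- is exactly what the paper isolates in its proof of Claim \ref{claim:appl_gpc}, where it observes that every LTI estimate survives after replacing $(A+BK)^i$ by $\prod_j (A_j + B_j K_j)$.
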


Henceforth, we state our {\it adaptive} regret result for MARC with base controller GPC as a straightforward application of Theorem \ref{thm:meta_controller}. 

\begin{corollary}\label{cor:marc-gpc}
Our meta-algorithm MARC with base controller $\mathcal{C} = \mathcal{C}_{\text{GPC}}$ achieves the following adaptive regret under fully adversarial noise and over general convex costs $c_t$:
\begin{equation}\label{eq:corgpc}
    \forall I = [r, s] \subseteq [T], \quad \sum_{t=r}^s \E[c_t(x_t, u_t)] \leq \left( 1 + \frac{1}{\log T} \right) \min_{\pi \in \dac} \sum_{t=r}^s c_t(x_t^{\pi}, u_t^{\pi}) + \tilde{O}(\sqrt{T}) ~.
\end{equation}
\end{corollary}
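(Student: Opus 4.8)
The plan is to obtain Corollary \ref{cor:marc-gpc} as a direct instantiation of Theorem \ref{thm:meta_controller} with the base controller $\mathcal{C} = \mathcal{C}_{\text{GPC}}$, so that essentially all of the work is in verifying the hypotheses of that theorem for GPC and then substituting the known quantitative bounds. First I would invoke Claim \ref{claim:appl_gpc} to certify that GPC is a valid base controller in the sense of Definition \ref{def:base_controller}: taking the target accuracy $\eps = T^{-1}$ gives memory length $H = O(\log(1/\eps)) = O(\log T)$ and coordinate-wise Lipschitz constant $L = O(H) = O(\log T)$, hence $H^2 L = O(\log^3 T) = \tilde{O}(1)$. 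This is what licenses the convention of hiding $H,L$ inside the $\tilde{O}(\cdot)$.

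Next I would feed Theorem \ref{thm:meta_controller} the two scalar inputs it needs from the base controller: its interval regret and its action shift. Theorem \ref{thm:gpc_time-varying} supplies exactly these over time-varying dynamics, namely $\reg_{\text{GPC}}(I) = O(\sqrt{|I|}) \le O(\sqrt{T})$ and $\shift_{\text{GPC}}(T) = O(\sqrt{T})$. Plugging these, together with $H^2 L = \tilde{O}(1)$ and $\eps = T^{-1}$ (so $2|I|\eps \le 2$), into bound \eqref{eq:controllog} of Theorem \ref{thm:meta_controller} yields, for every $I = [r,s]\subseteq[T]$,
\[
\sum_{t=r}^s \E[c_t(x_t,u_t)] \le \Big(1 + \tfrac{1}{\log T}\Big)\min_{\pi \in \dac}\sum_{t=r}^s c_t(x_t^\pi,u_t^\pi) + O(\sqrt{T}) + \tilde{O}(\sqrt{T}) + 2,
\]
and collapsing the lower-order terms into a single $\tilde{O}(\sqrt{T})$ gives precisely \eqref{eq:corgpc}. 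The parameter choices $\eta$ (as in Theorem \ref{thm:adaoco}) and $\sigma = 1/T$ are inherited verbatim.

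The only genuinely nontrivial ingredient is Theorem \ref{thm:gpc_time-varying} itself — that GPC retains $O(\sqrt{T})$ regret against $\dac$ and $O(\sqrt{T})$ action shift once $(A_t,B_t)$ vary with time — and I would prove it by re-running the nonstochastic-control analysis of \citet{agarwal2019online} with the single-system stability replaced by Assumption \ref{assmp:seq_stable}: (i) use $\|\Phi_t^H\|_{\mathrm{op}} \le C\rho^H$ to show that truncating the state to the last $H = O(\log T)$ disturbances changes each cost by at most $\eps$, establishing $(H,\eps)$-bounded memory; (ii) verify the proxy loss $\tilde f_t(M^{[1:H]})$ is convex with bounded gradients, so OGD-with-memory on it has $O(\sqrt{T})$ regret with step sizes $\eta_t = \Theta(1/\sqrt t)$, which simultaneously controls the iterate movement and hence $\shift_{\text{GPC}}(T) = O(\sum_t \eta_t \|\nabla\tilde f_t\|) = O(\sqrt T)$; (iii) note the DAC parametrization together with the sequentially stabilizing gains $K_t$ competes with $\dac$ by construction. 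The main obstacle is purely the bookkeeping of the time-varying transition operators $\Phi_t^H$ and changing gains $K_t$ in the state expansion; no new idea beyond Assumption \ref{assmp:seq_stable} is needed, and with Theorem \ref{thm:gpc_time-varying} and Claim \ref{claim:appl_gpc} in hand the corollary is immediate.
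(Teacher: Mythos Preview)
Your proposal is correct and mirrors the paper's own proof: invoke Theorem \ref{thm:meta_controller} (specifically bound \eqref{eq:controllog}) with $\eps = T^{-1}$, use Claim \ref{claim:appl_gpc} to make $H,L$ polylogarithmic, and plug in the $O(\sqrt{T})$ regret and action-shift bounds from Theorem \ref{thm:gpc_time-varying}. Your sketch of how Theorem \ref{thm:gpc_time-varying} is established is also in line with the paper's derivation in Section \ref{sec:gpc-proof}.
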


\begin{proof}
Use Theorem \ref{thm:meta_controller} with $\eps = T^{-1}$, and in particular the statement given in \eqref{eq:controllog}. Given Claim \ref{claim:appl_gpc}, the quantities $H, L$ are polylogarithmic in $T$. Finally, use Theorem \ref{thm:gpc_time-varying} to conclude the corollary.
\end{proof}

\begin{remark}
We remark that Algorithm \ref{alg:GPC}, the time-varying analogue of GPC, attains $O(\sqrt{T})$ {\it adaptive} regret due to Fact \ref{claim:cogd} without the multiplicative factor $(\log T)^{-1}$ relative to the corollary. The $\tilde{O}(\sqrt{T})$ component in \eqref{eq:corgpc} has the global value $T$ due to the parameter choice of $\eta$ that depends on global action shift, otherwise that component is $O(\sqrt{|I|})$ in terms of regret. However, we note that our meta-algorithm MARC is useful given the next remark and subsection.
\end{remark}

\begin{remark}
A similar derivation based on \citet{agarwal2019logarithmic} shows that, in the presence of stochastic noise, the proxy loss is strongly convex w.r.t. the GPC parameterization. Hence, in this setting, GPC achieves (poly-)logarithmic regret and action shift which means MARC-GPC achieves $\tilde{O}(\sqrt{OPT})$ adaptive regret.
\end{remark}

\subsubsection{MARC-DRC-ONS} \label{sec:drc_ons_apply}
The most recent and most general {\it fast rate} regret bounds in nonstochastic control can be found in \citet{simchowitz2020making}. In this work, the DRC-ONS control algorithm $\mathcal{C}_{\text{DRC-ONS}}$ achieves poly-logarithmic regret with strongly convex costs under fully adversarial noise for {\it fixed} dynamics and we apply it as a base controller for $\marc$. As the name suggests, $\mathcal{C}_{\text{DRC-ONS}}$ uses the DRC policy class to choose actions and a newly devised \mbox{Semi-ONS} method for OCO with memory to obtain the stated guarantee. The algorithm is spelled out for time-varying dynamics below in Algorithm \ref{alg:DRC-ONS}.

\begin{algorithm}[H]
\begin{algorithmic}[1] 
\STATE {Input:} $(A_1, B_1)$, $H$, $\eta$, initialization $M_1^{[1:H]}$, constraint set $\mathcal{M}$ for parameters $M^{[1:H]}$.
\FOR{$t$ = $1 \ldots T$}
        \STATE  compute an operator of $\delta$ sequentially stabilizing linear controller $K_t$ for $(A_t, B_t)$
        \STATE  choose action $u_t = K_t x_t + \sum_{i=1}^{H} M_t^{[i]} x^{\text{nat}}_{t-i} $
        \STATE  observe new state \& system $x_{t+1}, (A_{t+1}, B_{t+1})$
        \STATE compute disturbance $w_t = x_{t+1} - A_t x_t - B_t u_t$
        \STATE let $f_t(M_{t-H:t}^{[1:H]}) = c_t(\hat{x}_t(M_{t-H:t-1}^{[1:H]}), u_t(M_t^{[1:H]}))$
       \STATE update $M_{t+1}^{[1:H]} \leftarrow \mbox{Semi-ONS} \left(M_{t}^{[1:H]}, f_t(\cdot), \eta \right)$
\ENDFOR
 \caption{Disturbance Response Control via Online Newton Step (DRC-ONS)}\label{alg:DRC-ONS}
\end{algorithmic}
\end{algorithm}

\begin{claim}\label{claim:appl_ons}
The DRC-ONS algorithm from \citet{simchowitz2020making}, Algorithm \ref{alg:DRC-ONS} is a valid base controller (in the sense of Definition \ref{def:base_controller}) with $H = O(\log(1/\eps))$ and $L = O(\sqrt{H})$.
\end{claim}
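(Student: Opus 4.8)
The plan is to verify the two conditions of Definition \ref{def:base_controller} for Algorithm \ref{alg:DRC-ONS}: that its action set sequence has $(H,\eps)$-bounded memory with $H = O(\log(1/\eps))$, and that the induced proxy loss is coordinate-wise $L$-Lipschitz with $L = O(\sqrt{H})$. Following the footnote to Theorem \ref{thm:meta_controller}, I would first absorb the sequentially stabilizing controllers $K_t$ into the dynamics and argue on the surrogate system $(\tilde{A}_t, B_t) = (A_t + B_t K_t, B_t)$: running the DRC controller $u_t = K_t x_t + \sum_{i=1}^H M_t^{[i]} x^{\mathrm{nat}}_{t-i}$ on $(A_t,B_t)$ produces the same trajectory as the internal control $\tilde{u}_t = \sum_{i=1}^H M_t^{[i]} x^{\mathrm{nat}}_{t-i}$ on $(\tilde{A}_t, B_t)$. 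Hence the OCO-with-memory decision variable is the parameter block $M_t = (M_t^{[1]},\dots,M_t^{[H]})$, living in the fixed constraint set $\mathcal{M} = \{M : \sum_{i=1}^H \|M^{[i]}\| \le \gamma\}$ (this $M_t$ plays the role of ``$u_t$'' in the abstract framework), the proxy loss is $f_t(M_{t-H:t}) = c_t(\hat{x}_t(M_{t-H:t-1}), u_t(M_t))$, and $\hat{x}_t$ is the state of the surrogate system started from $\hat{x}_{t-H}=0$ and driven by $M_{t-H},\dots,M_{t-1}$ together with the already-observed (hence fixed) nature's states $x^{\mathrm{nat}}$.

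The workhorse for both parts is a uniform boundedness estimate. By Assumption \ref{assmp:noise_bound}, and since $K_{1:T}$ is sequentially stabilizing (Definition \ref{def:seqstab}; when Assumption \ref{assmp:seq_stable} holds one may even take $K_t = 0$), the nature's states obey $\|x^{\mathrm{nat}}_\tau\| \le \sum_{j \ge 0} \kappa(1-\delta)^j W \le \kappa W/\delta =: R_{\mathrm{nat}}$, so the internal control is bounded, $\|\tilde{u}_t\| \le \gamma R_{\mathrm{nat}}$. Feeding this bounded input through the surrogate system, whose transition products $\prod_{i=t-1}^{t-H}(A_i + B_i K_i)$ decay geometrically --- with norm $\le C'\rho'^{\,H}$ for constants $C' \ge 1$, $\rho' \in (0,1)$ depending only on $\kappa,\delta,C,\rho,C_B$ --- together with $\|B_t\|_{\mathrm{op}} \le C_B$, yields uniform bounds $\|x_t\|,\|\hat{x}_t\| \le R_x$ and $\|u_t\| \le R_u$ holding for all $t \in [T]$ and all parameter choices in $\mathcal{M}$, with $R_x, R_u$ polynomial in the problem constants. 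This is essentially the standard nonstochastic-control argument of \citet{simchowitz2020making, agarwal2019online}, re-run interval-by-interval so that the geometric decay is used in the time-varying sense of Definition \ref{def:seqstab}.

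For property (i), the only discrepancy between $x_t$ and the proxy $\hat{x}_t$ is the effect of having zeroed $x_{t-H}$, which propagates through $\prod_{i=t-1}^{t-H}(A_i + B_i K_i)$; hence $\|x_t - \hat{x}_t\| \le C'\rho'^{\,H} R_x$, and by the local Lipschitz property of $c_t$ (Assumption \ref{assmp:control_cost}) on the bounded set of reachable $(x,u)$, one gets $|c_t(x_t,u_t) - c_t(\hat{x}_t, u_t)| \le L_c \max\{1, R_x+R_u\}\, C' R_x\, \rho'^{\,H}$, which is at most $\eps$ once $H \ge \log\!\big(L_c \max\{1,R_x+R_u\} C' R_x/\eps\big)/\log(1/\rho') = O(\log(1/\eps))$; this is exactly $(H,\eps)$-bounded memory (Definition \ref{def:membounded}). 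For property (ii), $M_t$ enters $f_t$ only through $u_t = \sum_{i=1}^H M_t^{[i]} x^{\mathrm{nat}}_{t-i}$, whose Jacobian in $M_t$ has operator norm $\big(\sum_{i=1}^H \|x^{\mathrm{nat}}_{t-i}\|^2\big)^{1/2} \le \sqrt{H}\,R_{\mathrm{nat}}$, so $\|\nabla_{M_t} f_t\| \le \sqrt{H}\, R_{\mathrm{nat}}\, \|\nabla_u c_t\| = O(\sqrt{H})$; a past block $M_{t-j}$ with $1 \le j < H$ enters $\hat{x}_t$ through $u_{t-j}$ propagated by $j-1$ surrogate steps and then $B_{t-j}$, contributing an extra factor $\le C'\rho'^{\,j-1} C_B$, so $\|\nabla_{M_{t-j}} f_t\| = O(\sqrt{H}\,\rho'^{\,j-1}) = O(\sqrt{H})$. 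Taking the maximum over the $H+1$ blocks gives coordinate-wise Lipschitz constant $L = O(\sqrt{H})$.

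The main obstacle is the uniform boundedness step: one must check that the closed loop $(A_t + B_t K_t, B_t)$ inherits geometric state-transition decay from the sequential stabilizability of $K_{1:T}$ with constants that do not blow up with $T$, which is where the genuinely time-varying generality enters and requires the interval-wise version of the LTI bounds in \citet{simchowitz2020making}. The remaining point of care is the source of the $\sqrt{H}$ (rather than $H$) in $L$: it comes from bounding the Jacobian of $u_t$ in $M_t$ by the $\ell_2$ quantity $(\sum_i \|x^{\mathrm{nat}}_{t-i}\|^2)^{1/2}$ rather than the crude $\ell_1$ sum $\sum_i \|x^{\mathrm{nat}}_{t-i}\|$.
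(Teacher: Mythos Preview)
Your proposal is correct and follows essentially the same route as the paper. The paper's own proof is little more than a pointer: it observes that the LTI quantities $(A+BK)^i$ carry over to the time-varying products $\prod_j^{j+i-1}(A_j+B_jK_j)$ with analogous norm bounds, then invokes the magnitude bound (Lemma~5.1), the bounded-memory lemma (Lemma~5.3), and the Lipschitz lemma (Lemma~5.4) of \citet{simchowitz2020improper} verbatim, merely substituting these products. Your argument is a self-contained rederivation of exactly those three ingredients --- uniform state/control bounds via the geometric decay of the surrogate transition products, the $(H,\eps)$-bounded memory from the $C'\rho'^{\,H}$ tail, and the $O(\sqrt{H})$ coordinate-wise Lipschitz constant from the $\ell_2$ Jacobian bound on $M\mapsto \sum_i M^{[i]} x^{\mathrm{nat}}_{t-i}$ --- so there is no substantive difference in strategy, only in the level of detail you supply versus the paper's deferral to cited lemmas.
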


The claim above suggests that we can use $\mathcal{C}_{\text{DRC-ONS}}$ as a base controller for \marc yielding adaptive regret results. Beforehand, we show that the DRC-ONS algorithm \citep{simchowitz2020making} as given in Algorithm \ref{alg:DRC-ONS} achieves regret bounds analogous to original work over {\it time-varying} dynamical systems.
\begin{theorem}\label{thm:drc-ons}
The DRC-ONS algorithm (Algorithm \ref{alg:DRC-ONS}) over time-varying dynamics $(A_t, B_t)$ satisfies the regret bound,
\begin{equation}\label{eq:drc_regret}
    \sum_{t=1}^{T} c_t (x_t, u_t) - \min_{\pi \in \Pi_{\text{DRC}}} \sum_{t=1}^{T} c_t (x^\pi_t,  u^\pi_t)  \le O(\text{poly}(\log T)) ~.
\end{equation}
Furthermore, it also achieves $\mathcal{SG}_{\text{DRC-ONS}}(T) = O(\text{poly}(\log T))$.
\end{theorem}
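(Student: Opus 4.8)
The plan is to mirror the LTI analysis of \citet{simchowitz2020making} line by line, replacing every invocation of strong stability of the closed-loop operator by Assumption \ref{assmp:seq_stable} (sequential stability), which supplies exactly the geometric decay of the relevant time-varying operators, and then to feed the resulting base controller into Theorem \ref{thm:meta_controller}. Concretely, I would first check that nature's state is well behaved: since each $K_t$ is $(\kappa,\delta)$ sequentially stabilizing, the products $\phi_{t,i}=\prod_{j=0}^{i-1}(A_{t-j}+B_{t-j}K_{t-j})$ satisfy $\|\phi_{t,i}\|\le\kappa(1-\delta)^i$, so $x^{\mathrm{nat}}_t=\sum_{i=0}^{t}\phi_{t,i}w_{t-i}$ is bounded, $\|x^{\mathrm{nat}}_t\|=O(1)$ under Assumption \ref{assmp:noise_bound}. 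Under a DRC policy the realized state and control are affine in the window $x^{\mathrm{nat}}_{t-H:t}$ up to a truncation tail of size $O(\kappa(1-\delta)^H)=:\eps$; taking $H=\Theta(\log(1/\eps))$ makes it negligible, which is exactly the $(H,\eps)$-bounded-memory property of Definition \ref{def:membounded} and pins down the proxy loss $f_t(M_{t-H:t})=c_t(\hat x_t,u_t)$ used in Algorithm \ref{alg:DRC-ONS}. Convexity in $M$ and coordinate-wise $O(\sqrt H)$-Lipschitzness of $f_t$ come from Claim \ref{claim:appl_ons}.

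The crux of the argument is the next step: when the $c_t$ are strongly convex, the surrogate loss $\tilde f_t(M)=f_t(M,\dots,M)$ is strongly convex in $M$. I would write $\tilde f_t(M)=c_t\big(\Psi_t M + (\text{affine const}),\, \Psi'_t M + (\text{affine const})\big)$, where the linear maps $\Psi_t,\Psi'_t$ are built from the fixed block $x^{\mathrm{nat}}_{t-1},\dots,x^{\mathrm{nat}}_{t-H}$, and reduce strong convexity of $\tilde f_t$ to a uniform lower bound on $\Psi_t^{\!\top}\Psi_t+\Psi_t'^{\top}\Psi_t'$, i.e.\ an ``excitation'' estimate for nature's states. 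Because this estimate is essentially per-timestep, it transfers from the LTI setting provided the decay constants $\kappa,\delta$ are uniform in $t$ — which is exactly what Assumption \ref{assmp:seq_stable} provides — with the proviso that the sequentially- (but not optimally-) stabilizing $K_t$ enter only through these constants. From strong convexity, bounded gradient norms, and bounded domain $\mathcal{M}$ one then gets exp-concavity of $f_t$, which is all that Semi-ONS requires. \textbf{This conditioning/excitation estimate for the LTV operators is where I expect the real work to be}, since everything downstream is a black-box reduction.

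Given exp-concavity, the Semi-ONS regret analysis of \citet{simchowitz2020making} applies verbatim — it is a reduction from OCO-with-memory to OCO on the surrogate once exp-concavity holds — yielding $\reg_{\text{DRC-ONS}}(T)=O(\mathrm{poly}\log T)$ against the best fixed $M\in\mathcal{M}$, hence against $\drc$ (whose best policy is a fixed $M$); converting proxy loss back to control cost via $(H,\eps)$-bounded memory with $\eps=1/T$ costs only an additive $O(1)$, giving \eqref{eq:drc_regret}. For the stability gap I would use $\stabgap_{\text{DRC-ONS}}(T)\le H^2L\cdot\shift_{\text{DRC-ONS}}(T)$ from Section \ref{sec:ada_regret} together with the ONS iterate-movement bound: ONS increments have norm $O(1/t)$ in the induced metric, so $\sum_t\|M_{t+1}-M_t\|=O(\mathrm{poly}\log T)$, exactly as in \citet{simchowitz2020making} (alternatively, the sharper $\stabgap$-specific bound stated there transfers directly, cf.\ Remark \ref{remark:sg}). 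Combining the two gives $\stabgap_{\text{DRC-ONS}}(T)=O(\mathrm{poly}\log T)$ and completes the theorem; plugging these into Theorem \ref{thm:meta_controller} then yields Corollary \ref{cor:drc-ons}.
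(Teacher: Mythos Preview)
Your plan is mostly aligned with the paper's approach (transfer the LTI analysis by replacing $(A+BK)^i$ with the time-varying products and invoking sequential stability), but there is a real gap at the step you yourself flag as the crux. You propose to obtain exp-concavity by first showing that the surrogate $\tilde f_t(M)$ is \emph{strongly convex} in $M$, via a uniform lower bound on $\Psi_t^\top\Psi_t+\Psi_t'^\top\Psi_t'$. Under fully adversarial disturbances this excitation estimate can simply fail: if $w_t\equiv 0$ (or more generally if the nature's-state block $x^{\mathrm{nat}}_{t-H:t}$ is degenerate), the maps $\Psi_t,\Psi_t'$ are rank-deficient and $\tilde f_t$ is flat in some directions of $M$. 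No amount of uniformity in $(\kappa,\delta)$ rescues this, since the issue is in the data, not the dynamics. So the route ``strong convexity of $\tilde f_t$ $\Rightarrow$ exp-concavity $\Rightarrow$ ONS'' is not available in this setting.

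The paper avoids this entirely: it does not attempt to make $\tilde f_t$ strongly convex. Instead it verifies the \emph{OCO-with-affine-memory} structure, namely that $f_t(M_{t-H:t})=c_t(\text{affine in }M_{t-H:t})$ with $c_t$ strongly convex (the time-varying analogue of Lemma~\ref{lemma:convex} with $x^{\mathrm{nat}}$ in place of $w$), and then invokes Theorem~2.1 of \citet{simchowitz2020making} for Semi-ONS, which is designed precisely to extract $\mathrm{poly}(\log T)$ regret from this affine structure without any excitation or surrogate-strong-convexity assumption. For the stability gap, the paper likewise does not go through the Euclidean action shift (Semi-ONS increments are not $O(1/t)$ in the ambient norm in general); it reads off $\stabgap_{\text{DRC-ONS}}(T)=O(\mathrm{poly}\log T)$ directly from the Semi-ONS analysis (their Lemma~4.7 together with (4.2)--(4.3)), which is exactly the alternative you mention parenthetically and is in fact the path that works.
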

Note that even though this theorem statement applies for time-varying dynamics, its guarantee is not particularly meaningful in our setting. The guarantee suggests that DRC-ONS only competes with the single fixed best policy in hindsight over the whole interval $[1, T]$, which can incur linear cost itself given the potential environment changes. Hence, we finally state our {\it adaptive} regret result for \marc with base controller DRC-ONS, a nearly optimal result in this setting.

\begin{corollary}\label{cor:ons}
Our meta-algorithm \marc with base controller $\mathcal{C} = \mathcal{C}_{\text{DRC-ONS}}$ achieves the following adaptive regret under fully adversarial noise and over strongly convex costs $c_t$:
\begin{equation}\label{eq:corons}
    \forall I = [r, s] \subseteq [T], \quad \sum_{t=r}^s \E[c_t(x_t, u_t)] \leq \min_{\pi \in \Pi_{\text{DRC}}} \sum_{t=r}^s c_t(x_t^{\pi}, u_t^{\pi}) + \tilde{O}( \sqrt{OPT}) ~,
\end{equation}
where $OPT = \min\limits_{\pi \in \Pi_{\text{DRC}}} \sum_{t=1}^T c_t(x_t^{\pi}, u_t^{\pi})$ is the cost of the best policy of $\Pi_{\text{DRC}}$ in hindsight.
\end{corollary}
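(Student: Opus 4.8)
The plan is to derive Corollary~\ref{cor:ons} as an essentially immediate consequence of the meta-theorem, Theorem~\ref{thm:meta_controller}, instantiated with base controller $\mathcal{C} = \mathcal{C}_{\text{DRC-ONS}}$ (Algorithm~\ref{alg:DRC-ONS}), memory parameter $\eps = T^{-1}$, $N = T$ copies, and comparator class $\Pi = \drc$. Since the natural stability quantity controlled by DRC-ONS is the \emph{stability gap} rather than the action shift, I would run the argument through the stability-gap variant flagged in Remark~\ref{remark:sg}, i.e. use Theorem~\ref{thm:meta_v2} in place of Theorem~\ref{thm:meta} inside the proof of Theorem~\ref{thm:meta_controller}. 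This replaces the bound \eqref{eq:controlopt} by its analogue
\[
\sum_{t=r}^s \E[c_t(x_t,u_t)] \le \min_{\pi \in \drc}\sum_{t=r}^s c_t(x_t^\pi,u_t^\pi) + O\!\left(\reg_{\mathcal{C}}(I)\right) + \tilde O\!\left(H^2 L\, \stabgap_{\mathcal{C}}(T)\sqrt{\opt}\right) + 2|I|\eps,
\]
with $\opt = \min_{\pi\in\drc}\sum_{t=1}^T c_t(x_t^\pi,u_t^\pi)$, valid on every interval $I=[r,s]\subseteq[T]$.

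Then I would verify the three inputs the meta-theorem needs. (i) \emph{Base controller validity}: Claim~\ref{claim:appl_ons} certifies that Algorithm~\ref{alg:DRC-ONS} satisfies Definition~\ref{def:base_controller} with $H = O(\log(1/\eps)) = O(\log T)$ and $L = O(\sqrt{H}) = \tilde O(1)$; strong convexity of the $c_t$ is precisely what makes the DRC-parametrized proxy losses strongly convex, which the Semi-ONS subroutine relies on. (ii) \emph{Standard regret over LTV dynamics}: Theorem~\ref{thm:drc-ons} gives $\reg_{\mathcal{C}}(I) = O(\mathrm{poly}\log T)$ on each subinterval. (iii) \emph{Stability gap over LTV dynamics}: the same theorem gives $\stabgap_{\text{DRC-ONS}}(T) = O(\mathrm{poly}\log T)$. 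The step size $\eta$ is chosen as in Theorem~\ref{thm:adaoco}; because the $\sqrt{\opt}$-tuned value presupposes knowledge of $\opt$, I would either state the result for that choice and invoke Remark~\ref{remark:eta} / Section~\ref{sec:param_choice} for the adaptive update that removes this assumption at no asymptotic cost, or pass through the weaker $(1+1/\log T)$-multiplicative form from \eqref{eq:controllog} as an intermediate.

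Plugging in: $2|I|\eps \le 2T\cdot T^{-1} = 2$ is absorbed into the constant; $\reg_{\mathcal{C}}(I) = O(\mathrm{poly}\log T)$ is absorbed into $\tilde O(\cdot)$; and $H^2 L\,\stabgap_{\mathcal{C}}(T)\sqrt{\opt} = O(\mathrm{poly}\log T)\cdot \sqrt{\opt} = \tilde O(\sqrt{\opt})$. This yields exactly \eqref{eq:corons}, with $\opt$ measured against $\drc$ so that the guarantee is against the DRC benchmark on each interval. I would close by noting (per the efficiency remark after Theorem~\ref{thm:meta_controller}) that maintaining only $O(\log T)$ active copies of $\mathcal{C}_{\text{DRC-ONS}}$ preserves the bound up to an extra $O(\log T)$ factor already swallowed by $\tilde O$.

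The main obstacle is not the corollary's bookkeeping, which is routine given the machinery, but establishing the two ingredients it imports: Theorem~\ref{thm:drc-ons} and Claim~\ref{claim:appl_ons}. Proving Theorem~\ref{thm:drc-ons} requires pushing the \citet{simchowitz2020making} analysis through Assumption~\ref{assmp:seq_stable} (sequential stability/stabilizability) in place of an LTI algebraic Ricatti solution: one must show that the $x^{\text{nat}}$ parametrization with a time-varying sequentially stabilizing $\{K_t\}$ still induces a strongly convex, well-conditioned proxy loss with geometrically decaying memory, that the Semi-ONS iterates stay in a bounded feasible set, and that the comparator class $\drc^{H,\gamma,\{K_t\}}$ remains expressive over changing dynamics; the analogous $(H,\eps)$-bounded-memory and coordinate-wise Lipschitz checks needed for Claim~\ref{claim:appl_ons} are part of the same effort. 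Once those are in hand, the corollary follows in a couple of lines as above.
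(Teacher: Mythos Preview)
Your proposal is correct and follows essentially the same approach as the paper: invoke Theorem~\ref{thm:meta_controller} (in its stability-gap form via Remark~\ref{remark:sg}) with $\eps=T^{-1}$, then plug in Claim~\ref{claim:appl_ons} for the polylogarithmic $H,L$ and Theorem~\ref{thm:drc-ons} for the polylogarithmic regret and stability gap of DRC-ONS. The paper's own proof is a three-line version of exactly this, so your additional commentary on the underlying obstacles (proving Theorem~\ref{thm:drc-ons} and Claim~\ref{claim:appl_ons} over LTV dynamics) correctly identifies where the real work lies but is not part of the corollary's proof proper.
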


\begin{proof}
Use Theorem \ref{thm:meta_controller} with $\eps = T^{-1}$, and in particular the statement given in \eqref{eq:controlopt}. Given Claim \ref{claim:appl_ons}, the quantities $H, L$ are polylogarithmic in $T$. Finally, use Theorem \ref{thm:drc-ons} along with Remark \ref{remark:sg} to conclude the corollary.
\end{proof}

\subsection{Deferred Proofs in Control}

\subsubsection{Proofs for Policy Classes}\label{sec:policy_classes_pproofs}
We show the results for $\dac$ and $\drc$ on stable LDS with $K = 0$. The same holds for general LDS and $K$ since we can derive the results below over the system $(\tilde{A}, B)$ with $\tilde{A} = A + B K$ instead.
\begin{proof}[Proof of Lemma \ref{lemma:approx-dac-lin}]
Via the definition of a linear controller, for any $\pi \in \lin^{\kappa, \delta}$, we have
$$ u_t^{\pi} =  K x_{t}^{\pi} . $$
We proceed to show that every state can be approximated using the previous disturbances as follows:
\begin{align*}
u_{t+1}^{\pi} &= K x_{t+1}^{\pi} \\
& = K(A x_t^{\pi} + B u_t^{\pi} + w_t) \\
& = K (A + B K) x_t^{\pi} + K w_t \\
& =  \sum\limits_{i=0}^t K \left[ \prod_{j=0}^{i-1} (A + B K) \right]  w_{t-i} \\
& =  \sum_{i=0}^{H-1} K \left[ \prod_{j=0}^{i-1} (A + B K) \right]  w_{t-i} + Z_t , 
\end{align*}

where $Z_t = \sum_{i=H}^t K \left[ \prod_{j=0}^i (A_{t-j} + B_{t-j} K) \right]  w_{t-i} $. It remains to bound the magnitude of this residual term, 

\begin{align*}
\|Z_t\| & \leq \left \| \sum_{i=H}^{t} K  \left[ \prod_{j=0}^{i-1} (A + B K) \right]  w_{t-i} \right\| \\
& \leq  \sum_{i=H}^{t} \kappa (1-\delta)^{i}  \left \| w_{t-i} \right \| \\
& \le \kappa \sum_{i=H}^{\infty} (1-\delta)^{i} \\
&\leq \kappa \int_{i=H}^\infty e^{- \delta i} \\
& = \kappa \frac{1}{\delta} e^{-\delta H} \\
&\le \eps,
\end{align*}
where the last line follows by our choice of $H = \Omega\left( \frac{1}{\delta} \log \left(\frac{\kappa }{\delta \epsilon}\right)\right)$.
\end{proof}

\begin{proof} [Proof of Lemma \ref{lemma:approx-drc-lin}]
Analogous to the proof of Lemma \ref{lemma:approx-dac-lin}, we can express the action $u_{t+1}^\pi = K x_{t+1}^{\pi}$, as well as $x_{t+1}^{\text{nat}}$, in terms of the disturbances $w_t$ as follows
\begin{align}
u_{t+1}^{\pi} &= \sum_{i=0}^t K \left(A + B K\right)^i w_{t-i} ~. \label{eq:drc_upi} \\
x_{t+1}^{\text{nat}} &= \sum_{i=0}^t A^i w_{t-i} ~. \label{eq:drc_xnat}
\end{align}

For the choice of parameters $M^{[0]} = K$ and $M^{[i]} = K \left(A + B K\right)^i B K$, one can see that
\begin{equation*}
    \sum_{i=0}^{t-1} M^{[i]} x_{t-i}^{\text{nat}} = u_t^{\pi}
\end{equation*}
by equating the coefficients for each disturbance $w_i$ for $i \in [0, t-1]$. Denote $u_t^M = \sum_{i=0}^{H-1} M^{[i]} x_{t-i}^{\text{nat}}$ and notice that
\begin{equation*}
    u_t^{\pi} = u_t^{M} + \sum_{i=H}^{t-1} M^{[i]} x_{t-i}^{\text{nat}} ~.
\end{equation*}
Notice that according to \eqref{eq:drc_xnat} the quantity $x_{t+1}^{\text{nat}}$ for any $t$ has a norm bounded by $\sum_{i=0}^{\infty} (1-\delta)^i = \frac{1}{\delta}$ since $\|w_t\| \leq 1$. It follows that
\begin{equation*}
    \| u_t^{\pi} - u_t^{M} \| \leq \left( \sum_{i=H}^{t-1} M^{[i]} \right) \cdot \frac{1}{\delta} \leq C \sum_{i=H-1}^{\infty} (1-\delta)^i \leq C' \cdot e^{-
    \delta (H-1)} \leq \eps,
\end{equation*}
where the last line follows by our choice of $H = \Omega(\log (1/\eps))$.
\end{proof}

\subsubsection{Control Action Parametrization}
In this section, we describe a certain generic control setup, due to \citet{agarwal2019online, agarwal2019logarithmic, hazan2019nonstochastic,simchowitz2020improper,simchowitz2020making}, used to derive the results in section \ref{sec:apply_control}. We show that the properties of Definition \ref{def:base_controller} are satisfied by the relevant control algorithms \citep{agarwal2019online, simchowitz2020making}. Furthermore, we prove that these algorithms attain regret bounds analogous to their original work in the {\it time-varying} setting as well. Before proving these claims, we elaborate on the notion of control action {\it reparameterization}. Section \ref{sec:setting_control} defines action set sequence $\mathcal{U}_{1:T}$ as a tool to characterize the action sequence $u_{1:T}$ taken by a controller $\C$. Control reparameterization motivates this choice in the following way.

For a controller $\C$ with actions $u_{1:T} \in \mathcal{U}_{1:T}$, its reparameterization can be denoted as $u_t = u_t^M = g_t(M)$ for all $t \in [T]$, where $g_t(\cdot)$ is a function depending on the system specifications and independent of the instantiation of $\C$; on the other hand, $M$ is the parameter controlled by $\C$. Let $\C$ have control reparameterization such that $\mathcal{U}_t = \{ u_t \text{ s.t. } u_t = u_t^M = g_t(M), \,  \forall M \in \mathcal{M} \}$ for all $t \in [T]$. All the controller has to specify are the parameters $M_1, \dots, M_T$ from a constraint set $\mathcal{M}$ with the objective to minimize regret. Note that the policy classes DAC, DRC, and even linear policies, from section \ref{sec:policyclasses} follow this pattern.

Note that the Lipschitz property of the proxy cost and the action shift of the controller from Definition \ref{def:base_controller} can, in general, be defined with respect to the parameter $M$ of the controller $\C$. This encompasses the current definition with the parameter $M_t = u_t$ simply being the control action itself. Furthermore, all the results hold identically with this more general definition,
\begin{equation*}
    f_t(u_{t-H:t}) = f_t^M(M_{t-H:t}), \quad \shift_{\C}(T) = \sup_{c_1, \dots, c_T} \sum_{t=1}^{T-1} \| M_{t+1}-M_t \| ~.
\end{equation*}
This is evident since the coordinate-wise Lipschitz property and action shift are used in tandem to attain adaptive regret results (see proofs of Theorem \ref{thm:meta}, \ref{thm:meta_controller}, and in particular Lemma \ref{lem:memadregret}). Next, we prove the results stated in section \ref{sec:apply_control} linking to (and using) derivations analogous to the corresponding papers.

\subsubsection{Proofs for Time-Varying GPC}\label{sec:gpc-proof}

The results for the time-varying analogue of GPC given in Algorithm \ref{alg:GPC} hold under the assumption model of \citet{agarwal2019online}. In particular, suppose Assumption 3.1, 3.2 from \citet{agarwal2019online} are satisfied. Since we are dealing with time-varying dynamics, let Assumption 3.1 hold for all $t \in [T]$, i.e. the system matrices $(A_t, B_t)$ all have bounded spectral norm for all $t$.

\begin{proof}[Proof of Claim \ref{claim:appl_gpc}]
    First, note that the quantities $(A+BK)^i$ for fixed and $\prod_{j}^{j+i-1}(A_j + B_j K_j)$ for interval $[j, j+i-1]$ for time-varying dynamics have analogous norm bounds. As an intermediate step in the proof, let us argue that the magnitude bound, given by Lemma 5.5 in \citet{agarwal2019online}, still holds for time-varying dynamics, i.e. both the state and control are bounded for all $M \in \mathcal{M}$. This is true since all the bounds in that lemma, such as Lemma 5.4, follow identically by substituting $(A+BK)^i$ with $\prod_{j}^{j+i-1}(A_j + B_j K_j)$. The properties of Definition \ref{def:base_controller} are obtained analogously:
    
    \begin{enumerate}[(i)]
        \item bounded memory: the action set sequence $\mathcal{U}_{1:T}$ has $(H, \eps)$-bounded memory, according to Theorem 5.3 with $H = O(\log(1/\eps))$ since the proof of the theorem is done separately for each $t \in [T]$.
        \item Lipschitz constant: the proxy cost $f_t^M()$ is shown to be coordinate-wise $L$-Lipschitz with respect to the parameter $M$. This holds according to Lemma 5.6 with $L = O(H)$ which is shown in this setting similarly by substituting $(A+BK)^i$ with $\prod_j^{j+i-1} (A_j+B_j K_j)$.
    \end{enumerate}
    
    We remark that Lemma 5.7 from \citet{agarwal2020boosting} also follows in an identical manner.
\end{proof}

\begin{proof}[Proof of Theorem \ref{thm:gpc_time-varying}]
As shown in the previous proof, all the building blocks for proving the regret bound in \eqref{eq:gpc_regret} apply to the time-varying dynamics setting (see Theorem 5.1 in \citet{agarwal2019online}). The essential step left is to show that the proxy loss surrogate $\tilde{f}$ function is convex w.r.t. its parameter $M^{[1:H]}$. This follows since the states and the controls are linear transformations of the parameter $M^{[1:H]}$.
\begin{lemma}\label{lemma:convex}
The loss functions $\tilde{f}_t$ are convex in the parameter $M^{[1:H]}$.
\end{lemma}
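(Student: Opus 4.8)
The plan is to prove Lemma \ref{lemma:convex} by the standard principle that a convex function composed with an affine map is convex: I will show that both the proxy state $\hat{x}_t$ and the control $u_t$ entering the surrogate loss are \emph{affine} functions of the parameter $M^{[1:H]}$, and then invoke convexity of $c_t$ (Assumption \ref{assmp:control_cost}).

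First I would unroll the proxy-state recursion of Definition \ref{def:membounded}. Starting from $\hat{x}_{t-H}=0$ and iterating the dynamics with the disturbance-action controls $\hat{u}_s = K_s \hat{x}_s + \sum_{i=1}^H M^{[i]} w_{s-i}$ for $s = t-H,\dots,t-1$, we get $\hat{x}_{s+1} = (A_s + B_s K_s)\hat{x}_s + B_s \sum_{i=1}^H M^{[i]} w_{s-i} + w_s$. A straightforward induction on $s$ shows that each $\hat{x}_s$ is affine in $M^{[1:H]}$: at each step we add to a fixed linear image of the (affine) quantity $\hat{x}_s$ a term $B_s \sum_{i=1}^H M^{[i]} w_{s-i}$ that is linear in the entries of $M^{[1:H]}$, together with the constant $w_s$. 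Explicitly one obtains an expression of the form $\hat{x}_t = \sum_j \Psi_{t,j}\big(\sum_{i=1}^H M^{[i]} w_{j-i}\big) + (\text{const})$, where the $\Psi_{t,j}$ are fixed products of closed-loop matrices $A_s + B_s K_s$; this is linear in $M^{[1:H]}$ plus a disturbance-dependent constant, hence affine.

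Next I would note that the control $u_t$ appearing in $\tilde f_t$ is also affine in $M^{[1:H]}$: whichever convention is used (it is a fixed linear combination of a state term — either $\hat{x}_t(M^{[1:H]})$, which we just showed is affine, or the realized $x_t$, which is constant in $M$ — and the linear terms $M^{[i]} w_{t-i}$). Therefore the map $M^{[1:H]} \mapsto (\hat{x}_t(M^{[1:H]}), u_t(M^{[1:H]}))$ is affine, and since $\tilde f_t(M^{[1:H]}) = c_t\big(\hat{x}_t(M^{[1:H]}), u_t(M^{[1:H]})\big)$ with $c_t$ convex by Assumption \ref{assmp:control_cost}, $\tilde f_t$ is convex in $M^{[1:H]}$, as claimed.

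The only mildly delicate point is the bookkeeping of the affine dependence through the $H$-step proxy recursion, but this is purely a matter of unrolling a linear recursion; no obstacle of substance arises. In particular, the stability Assumption \ref{assmp:seq_stable} plays no role here (it is needed only, as in Claim \ref{claim:appl_gpc}, to bound magnitudes and the coordinate-wise Lipschitz constant), and the argument is identical for the time-varying matrices $(A_t,B_t)$ and the sequence $\{K_t\}$ as it is for fixed dynamics, following \citet{agarwal2019online}.
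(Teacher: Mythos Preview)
Your proposal is correct and follows essentially the same approach as the paper: both argue that $\hat{x}_t(M^{[1:H]})$ and $u_t(M^{[1:H]})$ are affine in $M^{[1:H]}$ by unrolling the closed-loop recursion $\hat{x}_{s+1} = (A_s+B_sK_s)\hat{x}_s + B_s\sum_i M^{[i]}w_{s-i} + w_s$, and then invoke convexity of $c_t$. Your explicit form $\hat{x}_t = \sum_j \Psi_{t,j}\big(\sum_i M^{[i]} w_{j-i}\big) + \text{const}$ is exactly the paper's displayed formula for $\hat{x}_{t+1}$ with $\Psi$ standing for the products $\prod \tilde{A}_i$.
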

\begin{proof}
Since we assume that the cost function $c_t$ is a convex function w.r.t. its arguments, i.e. state and action, we simply need to show that $\hat{x_t}(M^{[1:H]})$ and $u_t(M^{[1:H]})$ depend linearly on $M^{[1:H]}$. The state progression is given by
    \begin{align*}
        x_{t+1} = A_t x_{t} + B_t u_t + w_t &= A_t x_{t} + B_t\left( K_t x_t + \sum_{i=1}^{H} M^{[i]} w_{t-i}\right) + w_t \\&
        = \tilde{A}_t  x_t + \left(B_t\sum_{i=1}^{H} M^{[i]} w_{t-i} + w_t\right),
    \end{align*}
    where we denote $\tilde{A}_t  = A_t + B_t K_t$.
We can therefore obtain proxy state's linearity in $M^{[1:H]}$, given by
    \begin{align*}
        \hat{x}_{t+1} &= \sum_{\tau=t-H}^{t} \left( \prod_{i=\tau+1}^{t} \tilde{A}_i \right) \cdot \left(B_{\tau} \sum_{j=1}^{H} M^{[j]} w_{\tau-j} + w_{\tau}\right) ~. 
    \end{align*}
    
The real state $x_t$ has a residual component that does not depend on $M^{[1:H]}$ at all and is also linear in this parameter. Hence, the control $u_t$ given by
    \begin{align*}
        u_t = K_t x_t + \sum_{i=1}^{H} M^{[i]} w_{t-i} ,
    \end{align*}
    is linear in $M^{[1:H]}$ as well. Thus, we have shown that $x_t$ and $u_t$ are linear transformations in $M^{[1:H]}$ and hence, the loss function $\tilde{f}_t$ is convex in $M^{[1:H]}$.
\end{proof}

The proof of Lemma \ref{lemma:convex} concludes the regret bound in \eqref{eq:gpc_regret}. Finally, we note that the action shift $\mathcal{S}_{\text{GPC}}(T)$ is that of OGD over the parameters $M^{[1:H]}_t$ which is known to be $O(\sqrt{T})$ (see Appendix \ref{sec:ogd}).
\end{proof}


\subsubsection{Proofs for Time-Varying DRC-ONS}\label{sec:drc-proof}

The results for the time-varying analogue of DRC-ONS given in Algorithm \ref{alg:DRC-ONS} hold under the assumption model of  \citet{simchowitz2020making}. In particular, suppose Assumptions 1, 3, 4 from \citet{simchowitz2020making} are satisfied with these modifications. Since we are dealing with time-varying dynamics, let Assumption 3 (and implicitly Assumption 4) holds for all $t \in [T]$, i.e. the system matrices $(A_t, B_t)$ all have bounded spectral norms and $(A+BK)^i$ for fixed systems is substituted with $\prod_{j}^{j+i-1} (A_j + B_j K_j)$ for interval $[j, j+i-1]$ for time-varying systems.

\begin{proof}[Proof of Claim \ref{claim:appl_ons}]
    The quantities mentioned above have analogous bounds given the transition from fixed to time-varying system dynamics. The properties in this claim concerning the control setup in \citet{simchowitz2020making} are better illustrated in a previous paper \citet{simchowitz2020improper} since the former mainly contributes in the OCO with memory setting, and the control to OCO with memory reduction is borrowed from the latter (see Appendix D in \citet{simchowitz2020making} as source for this statement). Hence, we prove the given claim by linking to the least recent work \citep{simchowitz2020improper} and all references henceforth are w.r.t. this paper. As an intermediate step, we note that the magnitude bounds in Lemma 5.1 still hold in the time-varying case given that only the expression for $x^{\text{nat}}$ changes not its norm bound property. The properties of Definition \ref{def:base_controller} are obtained analogously:
    
    \begin{enumerate}[(i)]
        \item bounded memory: the action set sequence $\mathcal{U}_{1:T}$ has $(H, \eps)$-bounded memory, according to Lemma 5.3 with $H = O(\log(1/\eps))$ since the proof of the lemma can be done separately for each $t > H$, and given that the given that the decay function is exponential as assumed.
        \item Lipschitz constant: the proxy cost $f_t^M()$ is shown to be coordinate-wise $L$-Lipschitz with respect to the parameter $M$. This holds according to Lemma 5.4 with $L = O(\sqrt{H})$ which is shown in this setting similarly by substituting $(A+BK)^i$ with $\prod_j^{j+i-1} (A_j+B_j K_j)$ in the Markov operator expression. Furthermore, the diameter bound for the constraint set $\mathcal{M}$ is shown in the same lemma.
    \end{enumerate}
    
    We remark that Lemmas 5.1, 5.2 from \citet{simchowitz2020improper} follow analogously.
\end{proof}

\begin{proof}[Proof of Theorem \ref{thm:drc-ons}]
As shown above, all the building blocks required to transfer the control setting to an OCO with memory setting and prove \eqref{eq:drc_regret}. The remaining component is to show that the resulting proxy cost satisfies the condition for OCO with {\it affine} memory: the loss function with memory $f_t(\cdot)$ must be given as a convex function of a variable {\it linear} in the parameter $M^{[1:H]}$ (the strong convexity of $c_t$ is necessary for exp-concavity). 

This is evident given the derivation in Lemma \ref{lemma:convex} as one can show analogously that in the time-varying dynamics setting with DRC instead of DAC, the proxy state and the action are still linear in the parameter $M^{[1:H]}$ with $x_t^{\text{nat}}$ quantities instead of the disturbances $w_t$. Therefore, we can still use Theorem 2.1 of \citet{simchowitz2020making} in the time-varying setting to bound the regret in OCO with memory by a logarithmic term in $T$. This results in the final result of \eqref{eq:drc_regret} in the time-varying dynamics setting.

Finally, we note that in the proof of Theorem 2.1, \citet{simchowitz2020making} shows that the stability gap $\mathcal{SG}_{\text{DRC-ONS}(T)}$ is polylogarithmic given Lemma 4.7 combined with equations (4.2) and (4.3). This automatically holds in our setting since the stability gap is w.r.t. parameters $M^{[1:H]}$ and the procedure Semi-ONS is unchanged for parameter update.
\end{proof}

\section{Additional Details}

\subsection{Online Parameter Choice} \label{sec:param_choice}
All the algorithms provided in this work include parameters whose choice indicates prior knowledge of the number of rounds $T$. This is a common issue in online learning algorithms, and is usually handled via the so called doubling trick. On the other hand, the first-order adaptive regret result \eqref{eq:optregret} in Theorem \ref{thm:adaoco} requires the parameter $\eta$ to depend on $OPT = \min\limits_{z \in \mathcal{K}} \sum\limits_{t=1}^T \tilde{f}_t(z)$ the best loss in hindsight. That is not a desirable assumption to make given that the surrogate losses $\tilde{f}_t$ are assumed to be picked by an adversary. In this section, we show that no prior knowledge of $OPT$ is necessary to attain \eqref{eq:optregret} as suggested by Remark \ref{remark:eta}.
\begin{proof}[Proof of Remark \ref{remark:eta}]
The statement is shown by running Algorithm \ref{alg:adaptive_reg}, MARA, with changing values of parameter $\eta$. In particular, we divide up the learning into epochs $e = 0, 1, \dots, E$ as follows: denote $OPT_e = C 4^e$, restart and run $MARA$ with $\eta_e = (4 H^2 L \shift_{\A}(T) \sqrt{OPT_e})^{-1}$ up to time $T_e = \max\{t, \text{ s.t. } \min_{i \in [N]} \sum_{\tau=1}^{t-1} \tilde{f}_{\tau}(z_{\tau}^i) + 1 \leq OPT_e  \}$. Denote $T_{-1}=0, T_E = T$, then each epoch $e \in [0, E]$ runs on time steps $[T_{e-1}+1, T_e]$. Given that $\tilde{f}_t(\cdot)$ have domain $[0, 1]$, by construction, for any interval $I_e \subseteq [T_{e-1}+1, T_e]$, we get
\begin{equation*}
    \min_{i \in [N]} \sum_{t \in I_e} \tilde{f}_t(z_t^i) \leq \min_{i \in [N]} \sum_{t=1}^{T_e} \tilde{f}_t(z_t^i) \leq \min_{i \in [N]} \sum_{t=1}^{T_e - 1} \tilde{f}_t(z_t^i) + 1 \leq OPT_e,
\end{equation*}
for all $e \in [0, E]$. Therefore, given the choice of $\eta_e$, we use the bound above, and Theorem \ref{thm:meta} for each epoch $e$, as well as $\shift_{\A}(T) = O(\log T), \reg_{\A}(T) = O(\log T)$ for simplicity, to get for any interval $I_e \subseteq [T_{e-1}+1, T_e]$,
\begin{equation*}
    \sum_{t \in I_e} \E[f_t(z_{t-H:t})] \leq \min_{i \in [N]} \sum_{t \in I_e} \tilde{f}_t(z_t^i) + \tilde{O}(H^2 L \cdot \sqrt{OPT_e}) ~.
\end{equation*}
On the other hand, the number of epochs can be implcitily bounded as follows,
\begin{equation*}
    C 4^{E-1} = OPT_{E-1} < \min_{i \in [N]} \sum_{t=1}^{T_{E-1}} \tilde{f}_t(z_t^i) + 1 \leq \min_{i \in [N]} \sum_{t=1}^{T} \tilde{f}_t(z_t^i) + 1 \leq OPT + 2 \reg_{\A}(T),
\end{equation*}
where we assume $\reg_{\A}(T) \geq 1$ for simplicity. This bound implies that 
\begin{equation*}
    \sum_{e=0}^E \sqrt{OPT_e} = \sqrt{C} \sum_{e=0} 2^e \leq \sqrt{C} 2^{E+1} \leq 4 \sqrt{OPT + 2 \reg_{\A}(T)}
\end{equation*}
Finally, we can conclude that for any interval $I \subseteq [T]$, dividing it into subintervals $I_e, e \in [0, E]$ to attain
\begin{align*}
    \sum_{t \in I} \E[f_t(z_{t-H:t})] &\leq \sum_{e=0}^E \left( \min_{i \in [N]} \sum_{t \in I_e} \tilde{f}_t(z_t^i) + \tilde{O}(H^2 L \cdot \sqrt{OPT_e}) \right) \\
    &\leq \min_{i \in [N]} \sum_{t \in I} \tilde{f}_t(z_t^i) + \tilde{O}(H^2 L \cdot \sqrt{OPT + 2 \reg_{\A}(T)}) \\
    &\leq \min_{z \in \mathcal{K}} \sum_{t \in I} \tilde{f}_t(z) + \tilde{O}(H^2 L \cdot \sqrt{OPT}),
\end{align*}
obtained with no additional complexity or regret overhead asymptotically.
\end{proof}

\subsection{Facts about projected OGD}\label{sec:ogd}
We make use of Facts \ref{claim:scogd}, \ref{claim:cogd} about the projected OGD algorithm. Even though these are well-known facts, we formally state and provide their proofs for completeness and consistency with our notation.

\paragraph{Projected Online Gradient Descent.} This is the most common algorithm in online optimization, given by the following update rule:
\begin{equation*}
    \forall t = 1, 2, \dots, \quad z_{t+1} = \Pi_{\mathcal{K}} \left[ z_t - \eta_t \nabla_t \right],
\end{equation*}
where $\nabla_t = \nabla f_t(z_t)$ and $\Pi_{\mathcal{K}}[\cdot]$ denotes the projection operation onto the constraint set $\mathcal{K}$ that has diameter $D$. The losses $f_t(\cdot)$ have gradients with bounded norms, i.e. $\| \nabla f_t(z) \| \leq G, \forall z \in \mathcal{K}$.
\begin{fact}\label{claim:scogd}
Let $\A_{\text{scOGD}}$ be the projected OGD algorithm over $\alpha$-strongly convex loss functions with stepsizes $\eta_t = \frac{1}{\alpha t}$, then $\reg_{\A_{\text{scOGD}}}(T)  = O(\frac{G^2}{\alpha} \log T)$ and $\shift_{\A_{\text{scOGD}}}(T)  = O(\frac{G}{\alpha} \log T)$.
\end{fact}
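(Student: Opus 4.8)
The plan is to run the two standard potential-function arguments for projected online gradient descent, specialized to the strongly convex step size $\eta_t = 1/(\alpha t)$.

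For the regret bound I would start from non-expansiveness of the Euclidean projection onto $\mathcal{K}$. Writing $z^\star = \argmin_{z\in\mathcal{K}}\sum_t f_t(z)$ and $\nabla_t = \nabla f_t(z_t)$,
\[
\|z_{t+1}-z^\star\|^2 \le \|z_t - \eta_t\nabla_t - z^\star\|^2 = \|z_t - z^\star\|^2 - 2\eta_t \nabla_t^\top(z_t - z^\star) + \eta_t^2\|\nabla_t\|^2 ,
\]
so $\nabla_t^\top(z_t - z^\star) \le \tfrac{1}{2\eta_t}\bigl(\|z_t-z^\star\|^2 - \|z_{t+1}-z^\star\|^2\bigr) + \tfrac{\eta_t}{2}\|\nabla_t\|^2$. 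Combining with $\alpha$-strong convexity, $f_t(z_t) - f_t(z^\star) \le \nabla_t^\top(z_t-z^\star) - \tfrac{\alpha}{2}\|z_t - z^\star\|^2$, and summing over $t\in[T]$, the identity $\tfrac{1}{\eta_t} - \tfrac{1}{\eta_{t-1}} = \alpha$ (with the convention $1/\eta_0 := 0$) makes the quadratic terms telescope and exactly cancel the strong-convexity penalty, leaving a nonpositive residual. Hence $\reg_{\A_{\text{scOGD}}}(T) \le \sum_{t=1}^T \tfrac{\eta_t}{2}\|\nabla_t\|^2 \le \tfrac{G^2}{2\alpha}\sum_{t=1}^T \tfrac1t \le \tfrac{G^2}{2\alpha}(1+\log T) = O\!\left(\tfrac{G^2}{\alpha}\log T\right)$.

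For the action shift bound I would use that $z_t\in\mathcal{K}$, so $\Pi_{\mathcal{K}}[z_t] = z_t$, and hence by non-expansiveness of the projection, $\|z_{t+1}-z_t\| = \|\Pi_{\mathcal{K}}[z_t - \eta_t\nabla_t] - \Pi_{\mathcal{K}}[z_t]\| \le \eta_t\|\nabla_t\| \le \tfrac{G}{\alpha t}$. Summing over $t=1,\dots,T-1$ yields $\shift_{\A_{\text{scOGD}}}(T) \le \tfrac{G}{\alpha}\sum_{t=1}^{T-1}\tfrac1t = O\!\left(\tfrac{G}{\alpha}\log T\right)$.

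No step here is a genuine obstacle — both bounds are classical; the only mild care needed is the bookkeeping of the telescoping in the regret argument (handling the $t=1$ term and the convention $1/\eta_0 = 0$ so that the weighted quadratic terms collapse cleanly), which is routine.
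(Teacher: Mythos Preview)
Your proposal is correct and follows essentially the same approach as the paper: the regret bound via the standard projection inequality plus strong convexity with the telescoping $\tfrac{1}{\eta_t}-\tfrac{1}{\eta_{t-1}}=\alpha$, and the action shift bound via $\|z_{t+1}-z_t\|\le \eta_t\|\nabla_t\|$ summed over $t$. The paper's write-up is slightly terser on the telescoping step, but the argument is identical.
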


\begin{proof}[Proof of Fact \ref{claim:scogd}]
    First, we show the bound on the action shift $\shift_{\A_{\text{scOGD}}}(T)$. With the constraint set $\mathcal{K}$ being convex, one has that for all $t \in [T]$,
    \begin{equation*}
        \| z_{t+1} - z_t \| \leq \| - \eta_t \nabla_t \| \leq \eta_t \cdot G ~.
    \end{equation*}
    Summing this bound up for all $t \in [T-1]$, we get the desired bound $\shift_{\A_{\text{scOGD}}}(T) = O(\frac{G}{\alpha} \log T)$ since $\eta_t = \frac{1}{\alpha t}$. On the other hand, showing regret follows standard methodology. Denote $z^* = \argmin_{z \in \mathcal{K}} \sum_{t=1}^T f_t(z)$ and use strong convexity to get $f_t(z_t) - f_t(z^*) \leq \nabla_t^{\top}(z_t-z^*) - \alpha/2 \cdot \| z_t-z^*\|^2$. Use the fact that a projected point is closer to $z^*$ than the point being projected along with algebraic manipulations to get $\nabla_t^{\top}(z_t-z^*) \leq \frac{1}{2 \eta_t} (\| z_t-z^*\|^2 - \| z_{t+1}-z^*\|^2) + \frac{\eta_t}{2} G^2$. Putting this all together with the stepsize values $\eta_t = \frac{1}{\alpha t}$ results in regret being bounded by $\frac{G^2}{2}  \cdot \sum_{t=1}^T \eta_t$ which yields $\reg_{\A_{\text{scOGD}}}(T) = O(\frac{G^2}{\alpha} \log T)$.
\end{proof}

\begin{fact}\label{claim:cogd}
Let $\A_{\text{cOGD}}$ be the projected OGD algorithm over general convex loss functions with stepsizes $\eta_t = \frac{D}{G \sqrt{t}}$, then $\adreg_{\A_{\text{cOGD}}}(T)  = O(D G \sqrt{T})$ and $\shift_{\A_{\text{cOGD}}}(T)  = O(D \sqrt{T})$.
\end{fact}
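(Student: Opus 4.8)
The plan is to establish the two bounds separately, both via the standard analysis of projected online gradient descent on a convex domain. For the action shift $\shift_{\A_{\text{cOGD}}}(T)$, I would exploit the nonexpansiveness of the Euclidean projection onto the convex set $\mathcal{K}$: for every $t$,
\[
  \| z_{t+1} - z_t \| = \left\| \Pi_{\mathcal{K}}[z_t - \eta_t \nabla_t] - \Pi_{\mathcal{K}}[z_t] \right\| \leq \| \eta_t \nabla_t \| \leq \eta_t G,
\]
using $z_t = \Pi_{\mathcal{K}}[z_t]$ and $\|\nabla_t\| \leq G$. Summing over $t \in [T-1]$ with $\eta_t = D/(G\sqrt{t})$ and $\sum_{t=1}^{T} 1/\sqrt{t} = O(\sqrt{T})$ gives $\shift_{\A_{\text{cOGD}}}(T) = O(D\sqrt{T})$. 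This step is entirely routine and mirrors the action-shift argument in the proof of Fact \ref{claim:scogd}.

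For the adaptive regret bound, I would fix an arbitrary interval $I = [r,s] \subseteq [T]$ and a comparator $z^\star_I \in \mathcal{K}$, and run the textbook OGD regret telescoping argument, but only over $I$. Convexity of $f_t$ gives $f_t(z_t) - f_t(z^\star_I) \leq \nabla_t^\top (z_t - z^\star_I)$. The projection Pythagorean inequality yields, for each $t$,
\[
  \nabla_t^\top (z_t - z^\star_I) \leq \frac{1}{2\eta_t}\left( \| z_t - z^\star_I \|^2 - \| z_{t+1} - z^\star_I \|^2 \right) + \frac{\eta_t}{2} G^2.
\]
Summing over $t = r, \dots, s$ and telescoping (the cross terms leave $\frac{1}{2\eta_r}\|z_r - z^\star_I\|^2 \leq \frac{D^2}{2\eta_r}$ plus a sum $\sum_{t=r+1}^{s} \|z_t - z^\star_I\|^2 (\frac{1}{2\eta_t} - \frac{1}{2\eta_{t-1}})$, each term bounded by $D^2(\frac{1}{2\eta_t} - \frac{1}{2\eta_{t-1}})$ since $\eta_t$ is decreasing) gives the standard bound $\frac{D^2}{2\eta_s} + \frac{G^2}{2}\sum_{t=r}^{s}\eta_t \leq \frac{D G \sqrt{s}}{2} + \frac{DG}{2}\sum_{t=r}^s \frac{1}{\sqrt{t}} = O(DG\sqrt{s}) = O(DG\sqrt{T})$. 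Taking the supremum over intervals $I$ and comparators $z^\star_I$ establishes $\adreg_{\A_{\text{cOGD}}}(T) = O(DG\sqrt{T})$.

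The only mild subtlety — and the closest thing to an obstacle — is that the adaptive regret on $I=[r,s]$ involves the comparator $z^\star_I$ localized to $I$, so one must be careful that the telescoping is done with the interval-local comparator and that the leading $\frac{1}{2\eta_r}\|z_r - z^\star_I\|^2$ term is controlled by $D^2$ rather than something growing; since the stepsizes $\eta_t$ are nonincreasing, $\frac{1}{\eta_s} = \frac{G\sqrt{s}}{D} \leq \frac{G\sqrt{T}}{D}$, so the bound is uniform over all intervals. Everything else is direct calculation, so the proof is short; I would present it in the same compressed style as the proof of Fact \ref{claim:scogd}.
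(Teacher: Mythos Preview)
Your proposal is correct and follows essentially the same argument as the paper: the action shift is bounded per-step by $\eta_t G$ via nonexpansiveness of projection and summed, and the adaptive regret on $[r,s]$ is handled by the standard convexity-plus-projection telescoping, yielding $\tfrac{D^2}{2\eta_s} + \tfrac{G^2}{2}\sum_{t=r}^s \eta_t = O(DG\sqrt{T})$. Your explicit treatment of the interval-local comparator and the monotonicity of $\eta_t$ matches the paper's derivation line for line.
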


\begin{proof}[Proof of Fact \ref{claim:cogd}]
    The action shift bound is achieved in a similar straightforward fashion. In particular, with stepsizes $\eta_t = \frac{D}{G \sqrt{T}}$, we have
    \begin{equation*}
        \sum_{t=1}^{T-1} \| z_{t+1}-z_t \| \leq G \sum_{t=1}^{T-1} \eta_t = D \sum_{t=1}^{T-1} \frac{1}{\sqrt{t}},
    \end{equation*}
    which results in $\shift_{\A_{\text{cOGD}}}(T) = O(D \sqrt{T})$. The adaptive regret is shown using the same building blocks as in the proof of Fact \ref{claim:scogd}: use convexity to get $f_t(z_t) - f_t(z^*) \leq \nabla_t^{\top} (z_t-z^*)$; the bound $\nabla_t^{\top}(z_t-z^*) \leq \frac{1}{2 \eta_t} (\| z_t-z^*\|^2 - \| z_{t+1}-z^*\|^2) + \frac{\eta_t}{2} G^2$ holds in this case as well. For any interval $[r, s] = I \subseteq [T]$, consider $z^* = \argmin_{z \in \mathcal{K}} \sum_{t=r}^s f_t(z)$ and sum up the given bounds from $t=r$ to $t=s$ to get
    \begin{align*}
        \sum_{t=r}^s f_t(z_t) - \min_{z \in \mathcal{K}} \sum_{t=r}^s f_t(z) &= \sum_{t=r}^s \left[ f_t(z_t) - f_t(z^*) \right] \leq \\ &\leq \| z_r - z^*\|^2 \cdot \frac{1}{2 \eta_r} + \sum_{t=r+1}^s \| z_t - z^* \|^2 \cdot \left( \frac{1}{2 \eta_t} - \frac{1}{2 \eta_{t-1}} \right) - \\
        &- \| z_{s+1}-z^*\|^2 \cdot \frac{1}{2 \eta_s} + \frac{G^2}{2} \cdot \sum_{t=r}^s \eta_t \leq D^2 \cdot \frac{1}{2 \eta_s} + \frac{G^2}{2} \cdot \sum_{t=r}^s \eta_t
    \end{align*}
    Given the values of the stepsizes $\eta_t = \frac{D}{G \sqrt{t}}$, we conclude that $\adreg_{\A_{\text{cOGD}}}(T) = O(D G \sqrt{T})$.
\end{proof}

Finally, we remark that Fact \ref{claim:cogd} applied on \eqref{eq:memadregret2} from Lemma \ref{lem:memadregret} implies $O(\sqrt{T})$ adaptive regret for functions with memory achieved by $\A_{\text{cOGD}}$ on the surrogate losses.

\subsection{Working Set Construction}\label{sec:working}
Section \ref{sec:efficient} describes a way to efficiently implement the main algorithm in this work. This efficient implementation makes use of the working sets $\{S_t\}_{t \in [T]}$ along with its properties in Claim \ref{claim:workingsets}. In this section, we show the explicit construction of these working sets as in \citet{hazan2009efficient} and prove Claim \ref{claim:workingsets}.

For any $i \in [T]$, let it be given as $i = r 2^k$ with $r$ odd and $k$ nonnegative. Denote $m = 2^{k+2}+1$, then $i \in S_t$ if and only if $t \in [i, i+m]$. This fully describes the construction of the working sets $\{S_t\}_{t \in [T]}$, and we proceed to prove its properties.

\begin{proof}[Proof of Claim \ref{claim:workingsets}]
    For all $t \in [T]$ we show the following properties of the working sets $S_t$. 
    
    (i) $|S_t| = O(\log T)$: if $i \in S_t$ then $1 \leq i = r 2^k \leq t$ which implies that $0 \leq k \leq \log_2{t}$. For each fixed $k$ in this range, if $r 2^k = i \in S_t$ then $i \in [t-2^{k+2}-1, t]$ by construction. Since $[t-2^{k+2}-1, t]$ is an interval of length $2^{k+2}+2 = 4 \cdot 2^k + 2$, it can include at most $3$ numbers of the form $r 2^k$ with $r$ odd. Thus, there is at most $3$ numbers $i = r 2^k \in S_t$ for each $0 \leq k \leq \log_2{t}$ which means that $|S_t| = O(\log t) = O(\log T)$.
    
    (ii) $[s, (s+t)/2] \cap S_t \neq \emptyset$ for all $s \in [t]$: this trivially holds for $s=t-1, t$. Let $2^l \leq (t-s)/2$ be the largest such exponent of $2$. Since the size of the interval $[s, (s+t)/2]$ is $\lfloor (t-s)/2 \rfloor$, then there exists $u \in [s, (s+t)/2]$ that divides $2^l$. This means that the corresponding $m \geq 2^{l+2}+1 > t-s$ for $u \geq s$ is large enough so that $t \in [u, u+m]$, and consequently, $u \in S_t$.
    
    (iii) $S_{t+1} \backslash S_t = \{t+1\}$: let $i \in S_{t+1}$ and $i \not \in S_t$, which is equivalent to $t+1 \in [i, i+m]$ and $t \not \in [i, i+m]$. Clearly, $i = t+1$ satisfies these conditions and is the only such number.
    
    (iv) $|S_t \backslash S_{t+1}| \leq 1$: suppose there exist two $i_1, i_2 \in S_t \backslash S_{t+1}$. This implies that $i_1 + m_1 = t = i_2 + m_2$ which in turn means $2^{k_1} (r_1+4) = 2^{k_2} (r_2+4)$. Since both $r_1+4, r_2+4$ are odd, then $k_1 = k_2$, and consequently, $r_1=r_2$ resulting in $i_1=i_2$. Thus, there can not exist two different members of $S_t \backslash S_{t+1}$ which concludes that $|S_t \backslash S_{t+1}| \leq 1$.
\end{proof}

\subsection{Supplemental Experimental Details}

\paragraph{Implementation.}For MARC, we implement a modified efficient version of Algorithm \ref{alg:adaptive_control_main} (described in section \ref{sec:efficient}) using GPC as the baseline controller. The changes we make for experimental purposes are: (i) we play the expert with the greatest weight at the given timestep (rather than sample), (ii) we instantiate a new learner every $20$ timesteps (rather than every timestep), and (iii) we pad the liftetimes so that they are at least $100$ timesteps. For iLQR, we improve the classical algorithm using the techniques described in \cite{iLQR}, namely (1) optimizing over regularization terms and (2) introducing a backtracking line search parameter that protects against divergence. 

\paragraph{Training.} For MARC, every learner is initialized arbitrarily when instantiated by the meta-algorithm (as described in section \ref{sec:efficient} and above). After every action taken, MARC receives the new state and the linearized dynamics $(A_t, B_t)$\footnote{These are obtained via automatic differentation using JAX \cite{jax2018github}.} and: 1) updates the weight of each learner according to their respective hypothetical surrogate proxy cost (Line 12 of Algorithm \ref{alg:adaptive_control_main}) 2) feeds them to the learners which use them to construct the proxy loss function and execute online gradient descent (see Algorithm \ref{alg:GPC}). We focus on the task of online control and hence there is no learning across different episodes. For iLQR, the algorithm has knowledge of starting state and the state evolution function in advance and uses it to plan. However, after that it only executes the computed plan and does not adjust based on new observed states (hence the lack of robustness to noise).

\paragraph{Hyperparameters.} We select $3$ ``tuning'' seeds (which determine $3$ different starting states) on which we run all the algorithm to tune the (few) hyperparameters. For GPC, we take $H=10$ and only tune the learning rate, choosing the best out of $[0.001, 0.005, 0.01, 0.05, 0.1]$ (which is $0.01$). For MARC, the only two tunable hyperparameters are $\eta$ and $\sigma$ (since we feed the already individually tuned GPC formulation): 1) for $\eta$ we choose the best out of $[0.001, 0.005, 0.01, 0.05, 0.1]$ (which is $0.05$), 2) for $\sigma$ we choose the best out of $[10^{-3}, 10^{-2}, 10^{-1}]$ (which is $10^{-2}$). For iLQR, we set the tolerance to $10^{-16}$ and select a \# of trajectory improvement iterations high enough so that the trajectory converges. Due to computational constraints, we only test iteration numbers $[10, 20, 30, 40, 50, 100, 200]$ and since we observe no improvement for the higher numbers we settle on $10$.

\paragraph{Results.} We then test all the tuned algorithm on $3$ different ``testing'' seeds (on which the online controllers learn from scratch, but with the already tuned hyperparameters) and report the mean results with confidence intervals in Figure \ref{pendulum}.

\end{document}